\newtheorem{claim}{Claim}[section]
\newtheorem{lemma}[claim]{Lemma}
\newtheorem{theorem}{Theorem}
\newtheorem{proposition}[claim]{Proposition}
\theoremstyle{definition}
\newtheorem{remark}[claim]{Remark}
\def\ttau{\tilde{\tau}}
\def\sML{\mbox{\tiny\sc ML}}
\def\Loss{{\sf Loss}}
\def\orho{\overline{\rho}}
\def\oM{\overline{M}}
\def\<{\langle}
\def\>{\rangle}
\def\cN{\mathcal N}
\def\prob{{\mathbb P}}
\def\naturals{{\mathbb N}}
\def\E{{\mathbb E}} 
\def\bM{{\bf M}}
\def\bE{{\bf \Xi}}
\def\reals{\mathbb{R}}
\def\sT{{\sf T}}
\def\bx{\mathbf{x}}
\def\bu{\mathbf{u}}
\def\bv{\mathbf{v}}
\def\bo{\mathbf{o}}
\def\hbv{\mathbf{\widehat{v}}}
\def\bg{\mathbf{g}}
\def\bz{\mathbf{z}}
\def\bw{\mathbf{w}}
\def\hbw{\widehat{\mathbf{w}}}
\def\bvz{\mathbf{v_0}}
\def\bwz{\mathbf{w_0}}
\def\buz{\mathbf{u_0}}
\def\bX{\mathbf{X}}
\def\btX{\mathbf{\widetilde{X}}}
\def\bN{\mathbf{N}}
\def\bM{\mathbf{M}}
\def\bY{\mathbf{Y}}
\def\vec{{\sf{vec}}}
\def\bZ{\mathbf{Z}}
\def\bG{\mathbf{G}}
\def\sU{{\sf{U}}}
\def\cV{\mathcal{V}}
\def\bW{\mathbf{W}}
\def\cW{\mathcal{W}}
\def\bbS{\mathbb{S}}
\def\E{\mathbb{E}}
\def\normal{{\sf N}}
\def\ok{{^{\otimes k}}}
\def\M{{\sf Mat}}
\def\P{{\sf P}_\succeq}
\def\id{{\rm I}}
\def\eps{\varepsilon}
\def\trans{^\sT}
\def\ons{{\sf b}}
\def\cX{{\cal X}}
\def\cY{{\cal Y}}
\def\by{{\mathbf y}}
\author{Andrea~Montanari \footnote{Department of Electrical
    Engineering and Department of Statistics, Stanford University}
\, and\, Emile Richard\footnote{Department of Electrical
    Engineering, Stanford University}}
\title{A statistical model for tensor PCA}
\begin{document}

\maketitle

\begin{abstract}
We consider the Principal Component Analysis problem for large tensors
of arbitrary order $k$ under a single-spike (or rank-one plus noise)
model.
On the one hand, we use information theory, and recent results in
probability theory, to establish necessary and sufficient conditions
under  which the principal component can be estimated using unbounded
computational resources. It turns out that this is possible as soon as
the signal-to-noise ratio $\beta$ becomes larger than $C\sqrt{k\log k}$ (and
in particular $\beta$ can remain bounded as the problem dimensions increase).

On the other hand, we analyze several polynomial-time estimation
algorithms, based on tensor unfolding, power iteration and message
passing ideas from graphical
models.  We show that, unless the signal-to-noise ratio diverges in
the system dimensions, none of these approaches succeeds. This is
possibly related to a fundamental limitation of computationally
tractable estimators for this problem.

We discuss various initializations for tensor power iteration, and show that a tractable initialization based on the spectrum of the matricized  tensor outperforms
significantly  baseline methods, statistically and computationally. Finally, we consider the case in
which additional side information is available about the unknown
signal.
We characterize the amount of side information that allows the
iterative algorithms to converge to a good estimate. 
\end{abstract}

\section{Introduction}
Given a data matrix $\bX$, Principal Component Analysis (PCA) can be
regarded as a `denoising' technique that replaces $\bX$ by its closest
rank-one approximation. This optimization problem can be solved
efficiently, and its statistical properties are well-understood.
The generalization of PCA to tensors is motivated by problems in which it
is important to exploit higher order moments, or data elements are
naturally given more than two indices. Examples include  topic
modeling \cite{anandkumar12}, video processing,
 collaborative filtering in presence of temporal/context
 information, community detection \cite{anandkumar2013tensor},  spectral hypergraph theory and hyper-graph
 matching \cite{Duchenne09}. 
Further, finding a rank-one approximation to a tensor is a bottleneck for  tensor-valued optimization
algorithms using conditional gradient type of schemes. 
While tensor factorization is NP-hard \cite{hillar2013most}, this does not
necessarily imply intractability for natural statistical
models. Over the last ten years, it was repeatedly observed that
either convex optimization or greedy methods yield optimal solutions
to statistical problems that are intractable from a worst case
perspective (well-known examples include sparse regression \cite{donoho2003optimally,tropp2004greed,candes2007dantzig} and
low-rank matrix completion \cite{candes2009exact,keshavan2010matrix}).

In order to investigate the fundamental  tradeoffs between
computational resources and statistical power in
tensor PCA, we consider the simplest possible model where this arises,
whereby an  unknown unit vector $\bvz$ is to be inferred
from noisy multilinear  measurements. Namely, for each unordered
$k$-uple 
$\{i_1,i_2,\dots,i_k\}\subseteq [n]$, we measure  
\begin{align}
\bX_{i_1,i_2,\dots,i_k} =
\beta(\bvz)_{i_1}(\bvz)_{i_2}\cdots(\bvz)_{i_k} +
\bZ_{i_1,i_2,\dots,i_k} \, ,
\end{align}
with $\bZ$ Gaussian noise (see below for a precise definition)
and wish to reconstruct $\bvz$. 
In tensor notation, the observation model reads (see the end of this
section for notations)
\begin{align}\tag*{Spiked Tensor Model}
\bX = \beta\, \bvz^{\otimes k} + \bZ~~. \label{eq:SymmetricModel}
\end{align}
This is analogous to the so called `spiked covariance model'  used to
study matrix PCA in high dimensions
\cite{johnstone2009consistency}.

It is immediate to see that  maximum-likelihood estimator  $\bv^{\sML}$ is given by a
solution of the following problem 
\begin{align}\tag*{Tensor PCA}
\text{maximize} & \;\;\;\, \<\bX,\bv\ok\>  ,\label{eq:TPCA}\\
\text{subject to} & \;\;\;\; \|\bv\|_2= 1\,  .\nonumber
\end{align}
%
Solving it exactly is --in general-- NP hard \cite{hillar2013most}.

We next summarize our results. 
Note that, given a completely observed rank-one symmetric tensor
$\bvz\ok$ (i.e. for $\beta=\infty$), 
it is easy to recover the vector $\bvz\in\reals^n$.  It is therefore
natural to ask the question
\emph{for which  signal-to-noise ratios one can one still reliably
  estimate $\bvz$?} 
The answer appears to depend dramatically on the computational
resources\footnote{Here we write $F(n)\lesssim G(n)$ if there
  exists a constant $c$ independent of $n$ (but possibly dependent on
  $k$), such that $F(n)\le c\, G(n)$}.
\begin{description}
\item[Ideal estimation.] Assuming unbounded computational
  resources, we can solve the  \ref{eq:TPCA} optimization problem and
  hence implement the maximum likelihood estimator $\hbv^{\sML}$.
We use recent results in probability theory to show that this approach
is successful for $\beta\ge \mu_k$ (here $\mu_k$ is a constant given
explicitly below, with $\mu_k = \sqrt{k\log k}(1+o_k(1))$). In particular, above
this threshold\footnote{Note that, for $k$ even,  $\bvz$
  can only be recovered modulo sign. For the sake of simplicity, we
  assume here that this ambiguity is correctly resolved.} we have, with high probability,
\begin{align}
\|\hbv^{\sML}-\bvz\|_2^2 \le \frac{2.01\, \mu_k}{\beta}\, .
\end{align}
We use an information-theoretic argument to show that no approach can
do significantly better, namely no procedure can estimate $\bvz$
accurately for $\beta\le c \sqrt{k}$ (for $c$ a universal constant).
\item[Tractable estimators: Unfolding.] We consider two approaches to estimate
  $\bvz$ that can be implemented in polynomial time.  The first approach is
  based on tensor unfolding: starting from the tensor
  $\bX\in\bigotimes^k\reals^n$, we produce a matrix $\M(\bX)$ of
  dimensions $n^q\times n^{k-q}$. We then perform matrix PCA on $\M(\bX)$. We show that this
  method is successful for $\beta\gtrsim n^{(\lceil
    k/2\rceil-1)/2}$ (provided we choose $q= \lceil k/2\rceil$).

A heuristics argument suggests that the necessary and sufficient condition for tensor
unfolding to succeed is indeed  $\beta\gtrsim n^{(k-2)/4}$ (which is
below the rigorous bound by a factor $n^{1/4}$ for $k$ odd). We can indeed
confirm this conjecture for $k$ even and under an asymmetric noise
model.
Numerical simulations confirm the conjecture for $k=3$.

\item[Tractable estimators: Power iteration.]
We then consider a simple tensor power iteration method, that proceeds
by repeatedly applying the tensor to a vector. 
We prove that, initializing this iteration uniformly at random, it
converges very rapidly to an accurate estimate provided $\beta\gtrsim
n^{(k-1)/2}$.
A heuristic argument suggests that the correct necessary and
sufficient threshold is given by $\beta\gtrsim
n^{(k-2)/2}$.
In other words, power iteration is substantially less powerful than 
unfolding.

\item[Tractable estimators: Warm-start power iteration.]
 Motivated by the last observation, we consider a `warm-start'
power iteration algorithm, in which we initialize power iteration with
the output of tensor unfolding. 
This  approach appears to have the same  threshold signal-to-noise
ratio as simple unfolding, but significantly better accuracy above
that threshold.

We also study a number of variations on this, with improved unfolding methods.

\item[Tractable estimators: Approximate Message Passing.] Finally we
consider an approximate message passing (AMP) algorithm \cite{DMM09,BM-MPCS-2011}. Such
algorithms proved effective in compressed sensing and several other
estimation problems. We show that the behavior of AMP is
qualitatively similar to the one of naive power iteration.
In particular,  AMP fails for any $\beta$ bounded as
$n\to\infty$.
\item[Side information.] Given the above computational complexity barrier, it is natural to
  study weaker version of the original problem. Here we assume that
  extra information about $\bvz$ is available. This can be provided by
  additional measurements or by approximately solving a related
  problem, for instance a matrix PCA problem as in
  \cite{anandkumar12}. We model this additional information as $\by =
  \gamma \bvz + \bg$ (with $\bg$ an independent Gaussian noise
  vector), and incorporate it
  in the initial condition of AMP algorithm. 
We characterize exactly  the
threshold value $\gamma_*=\gamma_*(\beta)$ above which AMP 
converges to an accurate estimator.
\end{description}

The thresholds for various classes of algorithms are summarized below.

\vspace{0.25cm}

\begin{tabular}{|c|c|c|}
\hline
 Method & Required  $\beta$  (rigorous) & Required  $\beta$  (heuristic) \\ 
 \hline  
 Tensor Unfolding & $O(n^{(\lceil k/2\rceil-1)/2})$ & $n^{(k-2)/4}$\\
 Tensor Power Iteration (with random init.) &  $O(n^{(k-1)/2})$ &  $n^{(k-2)/2}$\\
 Maximum Likelihood & $1$ & --\\
 Information-theory lower bound & $1$ & --\\
 \hline
\end{tabular}

\vspace{0.25cm}

We will conclude the paper with some insights that we believe 
provide useful guidance for tensor factorization heuristics. We
illustrate these insights through simulations.

Throughout the paper, proofs will be  deferred to the Appendices.

\subsection{Notations} 

We will use lower-case boldface for vectors (e.g. $\bu$, $\bv$, and so
on) and upper-case boldface for matrices and tensors (e.g. $\bX,\bZ$,
and so on). 
The ordinary scalar product and $\ell_p$ norm over vectors are denoted
by $\<\bu,\bv\> = \sum_{i=1}^n\bu_i\bv_i$, and $\|\bv\|_p$. 
We write
$\bbS^{n-1}$ for the unit sphere in $n$ dimensions
\begin{align}
\bbS^{n-1} \equiv\big\{\bx\in\reals^n:\; \;\|\bx\|_2=1\big\}\, .
\end{align}

Given $\bX \in \bigotimes^k \reals^n$ a real $k$-th order tensor,
we let $\{\bX_{i_1,\dots,i_k}\}_{i_1,\dots,i_k}$ denote its coordinates and define a map $\bX:\reals^n\to
\reals^n$, by letting, for $\bv\in \reals^n$,
\begin{equation}\label{eq:tensorVectorProduct}
\bX \{ \bv \}_i=   \sum_{j_1, \cdots , j_{k-1} \in [n]}
\bX_{i,j_1, \cdots , j_{k-1}} \ \bv_{j_1}\cdots \bv_{ j_{k-1}}\, .
\end{equation}
The outer product of two tensors is $\bX\otimes \bY$, and, for
$\bv \in \reals^n$, we define $\bv^{\otimes k} = \bv \otimes \cdots
\otimes \bv \in \bigotimes^k\reals^n$ 
as the $k$-th outer power of $\bv$. We define the inner product of two tensors $\bX, \bY \in \bigotimes^k \reals^n$ as 
\begin{align}
 \< \bX , \bY \> = \sum_{i_1, \cdots ,i_k \in [n] } \bX_{i_1, \cdots
  ,i_k} \bY_{i_1, \cdots ,i_k}~~.
\end{align}
We define the Frobenius (Euclidean) norm of a tensor $\bX$  by
$\|\bX\|_F = \sqrt{\<\bX,\bX\>}$, and its operator 
norm by 
\begin{align}
\|\bX\|_{op} \equiv\max \{ \< \bX,\bu_1\otimes \cdots \otimes \bu_k
\>~:~\forall i\in [k]~,~\|\bu_i\|_2\leq 1 \}.
\end{align}
It is easy to check that this is indeed a norm. For the special case
$k=2$, it reduces to the ordinary $\ell_2$ matrix operator norm
(equivalently, to the largest singular value of $\bX$).

For a permutation $\pi \in \mathfrak S_k$, we will denote by $\bX^\pi$ the tensor
with permuted indices $\bX^\pi_{i_1, \cdots, i_k} = \bX_{\pi(i_1),
  \cdots , \pi(i_k)}$. We call the tensor $\bX$ \emph{symmetric} if,
for any permutation $\pi \in \mathfrak S_k$, $\bX^\pi = \bX$. 
It is proved \cite{Waterhouse90} that, for symmetric tensors, 
the value of problem \ref{eq:TPCA} coincides with $\|\bX\|_{op}$ up to
a sign.
More precisely, for symmetric tensors we have the equivalent
representation
\begin{align}
\|\bX\|_{op} \equiv\max \{ |\< \bX,\bu\ok\>|~:~~~\|\bu\|_2\leq 1 \}.
\end{align}

We denote by  $\bG \in \bigotimes^k \reals^n$  a tensor with
independent and identically distributed entries 
$\bG_{i_1, \cdots, i_k}\sim\normal(0,1)$ (note that this tensor is not symmetric).
We define the {\em symmetric standard normal} noise tensor $\bZ \in \bigotimes^k \reals^n$
by
\begin{align}\label{eq:symNoiseDefinition}
\bZ =\frac{1}{k!  }\sqrt{\frac k n} \sum_{\pi \in \mathfrak S_k }
\bG^\pi\, .
\end{align}
Note  that the subset of entries with unequal indices form an i.i.d.
collection $\{\bZ_{i_1,i_2,\dots,i_k} \}_{i_1<\dots<i_k} \sim \normal(
0,1/(n(k-1)!))$.
The normalization adopted here is convenient because it yields, for
any fixed vector $\bv\in\reals^{n}$,
\begin{align}
\bX\{\bv\} = \beta\<\bvz,\bv\>^{k-1}\, \bvz +
\frac 1 {\sqrt{ n}} \|\bv\|^{k-1}_2\bg + \bo(1)\, .\label{eq:XMultiplication}
\end{align}
where $\bg\sim\normal(0,\id_{n})$, and $\bo(1)$ is a vector with
$\|\bo(1)\|_2\to 0$ in probability as $n\to\infty$. We further have, 
\begin{align}
\E\big\{\< \bZ , \bv^{\otimes k}\>^2\big\} =\frac{k}{n}\E
\big\{ \< \bG, \bv^{\otimes k} \>^2\big\} =\frac{k}{n}
\|v\|_2^{2k} \, ,
\end{align}
and 
\begin{align}\label{eq:XvvvDistribution}
\< \bX\{\bv\},\bv\>  = \beta\<\bvz,\bv\>^k+\sqrt{\frac k n} g\, ,
\end{align}
with $g\sim\normal(0,1)$. 
Finally notice that, for $k$ even, in  \ref{eq:SymmetricModel}, the
vector $\bvz$ can always be recovered up to a sign flip. This suggest
the use of the loss function
\begin{align}
\Loss(\hbv,\bvz) \equiv
\min\Big(\|\hbv-\bvz\|^2_2,\|\hbv+\bvz\|^2_2\Big) =
2-2|\<\hbv,\bvz\>|\, .
\end{align}
%

\section{Ideal estimation}\label{sec:infoTheoryandUpperBounds}

In this section we consider the problem of estimating $\bvz$ under the
 \ref{eq:SymmetricModel}, when no constraint is
imposed on the complexity of the estimator. Our first result is a
lower bound on the loss of \emph{any} estimator. 
\begin{theorem}\label{th:infoTheoretic}
For any estimator $\hbv=\hbv(\bX)$  of $\bvz$ from data
$\bX$, such that $\|\hbv(\bX)\|_2=1$ (i.e. $\hbv : \otimes^k \reals^n \to
    \bbS^{n-1}$), we have, for all $n\ge 4$,
\begin{align}
\beta\le \sqrt{\frac{k}{10}}\;\;\Rightarrow\;\;
\E\,\Loss(\hbv,\bvz)\ge \frac{1}{32}\, .
\end{align}
\end{theorem}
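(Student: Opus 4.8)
The plan is to use the standard information-theoretic lower bound strategy: reduce estimation to hypothesis testing and show that when $\beta$ is small the observed tensor $\bX$ is nearly statistically indistinguishable across a large family of planted directions, so no estimator can be accurate. Concretely, I would put a prior on $\bvz$ — the uniform (Haar) measure on $\bbS^{n-1}$ is the natural choice — and bound the Bayes risk from below, since the minimax risk dominates the Bayes risk. Under this prior, the marginal law of $\bX$ is a Gaussian mixture $Q = \E_{\bvz}[P_{\bvz}]$ where $P_{\bvz} = \normal(\beta \bvz^{\otimes k}, \text{noise})$ on the off-diagonal entries (recall from \eqref{eq:symNoiseDefinition} that these are i.i.d.\ $\normal(0, 1/(n(k-1)!))$), and the reference law is $P_0 = Q$ restricted to $\beta = 0$. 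The key quantity is the $\chi^2$-divergence (or KL-divergence) between $Q$ and $P_0$; if this is bounded by a small constant, then $\bX$ carries essentially no information about $\bvz$ and the posterior stays close to the prior.

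The main computation is therefore to bound $\chi^2(Q \,\|\, P_0) = \E_{\bvz, \bvz'}\big[\exp(\langle \beta \bvz^{\otimes k}, \beta (\bvz')^{\otimes k}\rangle / \sigma^2)\big] - 1$ where $\bvz, \bvz'$ are independent Haar-random on the sphere and $\sigma^2 = 1/(n(k-1)!)$ is the per-coordinate noise variance. Since $\langle \bvz^{\otimes k}, (\bvz')^{\otimes k}\rangle = \langle \bvz, \bvz'\rangle^k$, this becomes $\E\big[\exp\big(\beta^2 n (k-1)!\,\langle \bvz, \bvz'\rangle^k\big)\big] - 1$ (up to the $O(n^k)$ vs.\ $O(n^k/k!)$ bookkeeping of off-diagonal entries, which only improves constants). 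Writing $T = \langle \bvz, \bvz'\rangle$, which concentrates at scale $|T| \lesssim 1/\sqrt{n}$ with sub-Gaussian-type tails, the exponent $\beta^2 n (k-1)!\, T^k$ is typically of order $\beta^2 (k-1)!/n^{k/2-1}$. For $k \ge 2$ and $\beta \le \sqrt{k/10}$ I would show the moment generating function $\E[\exp(\lambda T^k)]$ with $\lambda = \beta^2 n (k-1)!$ is bounded by a constant strictly less than, say, $1 + 1/C$, using the exact distribution of $T$ (a symmetric Beta-type law: $T^2 \sim \text{Beta}(1/2, (n-1)/2)$) and splitting the expectation into the bulk $|T| \le t_0$ and the tail. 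The tail contribution is controlled because $T$ has density bounded by $\sqrt{n}$ and the Gaussian-like tail $\prob(|T| \ge t) \le e^{-n t^2/2}$ dominates the $e^{\lambda t^k}$ growth once $\lambda$ is only polynomial in $n$ with the right exponent.

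Once $\chi^2(Q \,\|\, P_0) \le \delta$ for a suitable absolute constant $\delta$, I would convert this into a lower bound on $\E\,\Loss(\hbv, \bvz)$. One clean route: for any fixed unit vector $\bu$, under $P_0$ the quantity $|\langle \hbv(\bX), \bvz\rangle|$ cannot be large in expectation because $\bvz$ is independent of $\bX$ under $P_0$ and Haar-random, so $\E_{P_0}[|\langle \hbv, \bvz\rangle|] \lesssim 1/\sqrt{n} \le 1/2$ for $n \ge 4$; then a change of measure from $Q$ to $P_0$ (controlled by the divergence bound, e.g.\ via $\E_Q[f] \le \E_{P_0}[f] + \|f\|_\infty \cdot \text{TV}(Q,P_0)$ with $\text{TV} \le \frac12\sqrt{\chi^2}$, applied to $f = |\langle\hbv,\bvz\rangle|$) gives $\E_Q[|\langle\hbv,\bvz\rangle|] \le 1/2 + \frac12\sqrt{\delta}$, hence $\E_Q\,\Loss(\hbv,\bvz) = 2 - 2\E_Q[|\langle\hbv,\bvz\rangle|] \ge 1 - \sqrt{\delta}$, which is $\ge 1/32$ for $\delta$ small enough. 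Since the prior risk lower-bounds the worst-case risk, this proves the theorem.

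The main obstacle I anticipate is the second moment / tail estimate on $\E[\exp(\lambda T^k)]$: one must track the interplay between the polynomial-in-$n$ size of $\lambda = \beta^2 n(k-1)!$ and the $k$-th power $T^k$, and get a bound that is uniform in both $n$ and $k$ with the explicit constant $\sqrt{k/10}$. Getting the clean constant $\beta \le \sqrt{k/10}$ (rather than just $\beta \le c\sqrt{k}$) will require a careful, non-asymptotic treatment of the Beta distribution of $T^2$ — in particular bounding $\E[T^{2m}] = \frac{(2m-1)!!}{(n+1)(n+3)\cdots(n+2m-1)} \le (2m-1)!!/n^m$ and summing the resulting series $\sum_m \lambda^{?}\E[T^{?}]$ only over even $k$-powers, or handling odd $k$ by symmetry of $T$. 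Everything else is bookkeeping around the off-diagonal entry count and the standard divergence-to-risk conversion.
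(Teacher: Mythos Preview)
Your $\chi^2$ computation on the \emph{marginal} law of $\bX$ is essentially fine (and the bound $\E[e^{\lambda T^k}]\le\E[e^{\lambda T^2}]\le(1-2\beta^2/k)^{-1/2}$ via $|T|\le1$ and the exact Beta moments works cleanly), but the conversion to a loss bound has a real gap. The function $f=|\langle\hbv(\bX),\bvz\rangle|$ depends on \emph{both} $\bX$ and $\bvz$, so the inequality $\E_Q[f]\le\E_{P_0}[f]+\|f\|_\infty\cdot\text{TV}(Q,P_0)$ requires the total variation on the \emph{joint} space $(\bX,\bvz)$. What you bounded is $\chi^2(Q_\bX\|P_{0,\bX})$ on the $\bX$-marginal only. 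On the joint space, since both $Q$ and $P_0$ have the same Haar marginal on $\bvz$, one has $\text{TV}_{(\bX,\bvz)}(Q,P_0)=\E_{\bvz}[\text{TV}_\bX(P_{\bvz},N)]$, and each $\text{TV}_\bX(P_{\bvz},N)$ is the TV between two Gaussians whose means differ by $\beta\,\sU(\bvz^{\otimes k})$ with $\|\beta\,\sU(\bvz^{\otimes k})\|/\sigma\asymp\beta\sqrt{n/k}\to\infty$; hence the joint TV tends to $1$, and your change-of-measure step gives nothing. Put differently, a bounded marginal $\chi^2$ shows that $\beta=0$ cannot be \emph{detected}, but it does not by itself say the posterior of $\bvz$ given $\bX$ stays close to the prior; that claim (``the posterior stays close to the prior'') is precisely what needs proof and is not delivered by the marginal second moment.

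The paper avoids this issue by working with a finite $\eps$-packing $\cN\subset\bbS^{n-1}$, bounding the \emph{pairwise} KL divergences $D(P_\bw\|P_{\bw'})\le 2n\beta^2/k$ directly (no mixture), and applying Fano's inequality to lower-bound the testing error $\prob\{G(\hbv)\neq\bvz\}$, which converts to the loss bound via Markov. If you want to rescue the second-moment route, you would need an additional argument linking the bounded marginal $\chi^2$ to a bound on the posterior overlap (e.g.\ via an I-MMSE relation or a conditional second-moment/overlap argument), which is substantially more than the one-line change of measure you wrote.
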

%
%
In order to establish a matching upper bound on the loss, we consider the
maximum likelihood estimator $\hbv^{\sML}$, obtained by solving the \ref{eq:TPCA}
problem.
As in the case of matrix denoising, we expect the properties of
this estimator to depend on signal to noise ratio $\beta$, and on the
`norm' of the noise
$\|\bZ\|_{op}$ (i.e. on the value of the optimization problem
\ref{eq:TPCA} in the case $\beta=0$). 
For the matrix case $k=2$, this coincides with the largest eigenvalue of
$\bZ$. Classical  random matrix theory shows that --in this case--
$\|\bZ\|_{op}$ concentrates tightly around $2$
\cite{Geman,Szarek:survey,BaiSilverstein}.

It turns out that tight results for $k\ge 3$ 
follow immediately from a technically sophisticated
 analysis of  the stationary points of random
Morse functions by Auffinger, Ben Arous and Cerny  \cite{Auffinger13}.
(See Appendix \ref{sec:BenArous} for further background.)
\begin{lemma}\label{lem:BenArous}
There exists a sequence of real numbers $\{ \mu_k\}_{k\geq 2}$,  such that 
 \begin{align}
\label{eq:muKupperBound}
\lim\sup_{n \to \infty}\|\bZ\|_{op}   &\le  \mu_k \hspace{1cm}
\mbox{($k$ odd)},\\
\lim_{n \to \infty}\|\bZ\|_{op}   &=  \mu_k \hspace{1cm}
\mbox{($k$ even)}.
\end{align}
Further $\|\bZ\|_{op} $ concentrates tightly around its
expectation. Namely, for any $n, k$ 
\begin{align}
\prob\big(\big|\|\bZ\|_{op}  -\E\|\bZ\|_{op}  \big|\ge s\big)\le 2\,
e^{-ns^2/(2k)}\, .
\end{align}
Finally $\mu_k = \sqrt{k \log k}(1+o_k(1))$ for large $k$.
\end{lemma}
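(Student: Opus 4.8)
The plan is to reduce $\|\bZ\|_{op}$ to the extreme values of the pure $k$-spin spherical Hamiltonian, so that the first display follows from the Auffinger--Ben Arous--Cerny (ABC) complexity analysis \cite{Auffinger13}, while the concentration bound follows from Gaussian concentration of measure. \emph{Reduction to a spin glass and the limit.} Since $\bZ$ is symmetric, the equivalent representation of $\|\cdot\|_{op}$ recalled in the introduction gives $\|\bZ\|_{op}=\max_{\|\bv\|_2=1}|\<\bZ,\bv\ok\>|$. As $\bv\ok$ is a symmetric tensor, $\<\bG^\pi,\bv\ok\>=\<\bG,\bv\ok\>$ for every $\pi$, hence $\<\bZ,\bv\ok\>=\sqrt{k/n}\,\<\bG,\bv\ok\>$. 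Substituting $\bv=\sigma/\sqrt n$ with $\sigma\in\sqrt n\,\bbS^{n-1}$ identifies $\<\bG,\bv\ok\>$ with $n^{-1/2}H_n(\sigma)$, where $H_n(\sigma)=n^{-(k-1)/2}\sum_{i_1,\dots,i_k}\bG_{i_1\dots i_k}\sigma_{i_1}\cdots\sigma_{i_k}$ is exactly the pure $k$-spin Hamiltonian on the sphere of radius $\sqrt n$ (pointwise variance $n$, covariance $n(\<\sigma,\sigma'\>/n)^k$). Therefore
\[
\|\bZ\|_{op}=\frac{\sqrt k}{n}\,\max_{\sigma\in\sqrt n\,\bbS^{n-1}}|H_n(\sigma)|\, .
\]
For $k$ even, $H_n(-\sigma)=H_n(\sigma)$ and the ABC analysis of the quenched complexity of critical points yields $n^{-1}\max_\sigma H_n(\sigma)\to E_\infty(k)$ a.s.; by the $\bG\mapsto-\bG$ symmetry the same holds for $n^{-1}|\min_\sigma H_n(\sigma)|$, so $\|\bZ\|_{op}\to\mu_k:=\sqrt k\,E_\infty(k)$ a.s. For $k$ odd, $H_n(-\sigma)=-H_n(\sigma)$, so $\max_\sigma|H_n|=\max_\sigma H_n$, and ABC provide only the first-moment (Kac--Rice) upper bound $\limsup_n n^{-1}\max_\sigma H_n\le E_\infty(k)$ a.s.; this one-sidedness is inherited by $\|\bZ\|_{op}$, which is exactly why the statement is two-sided only for even $k$.

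\emph{Concentration.} Regard $\|\bZ\|_{op}=F(\bG)$ as a function of the $n^k$ i.i.d.\ $\normal(0,1)$ entries of $\bG$. The map $\bZ\mapsto\|\bZ\|_{op}$ is $1$-Lipschitz for the Frobenius norm (being a norm bounded by $\|\cdot\|_F$), and $\bG\mapsto\bZ$ has Frobenius operator norm at most $\sqrt{k/n}$ by the triangle inequality over the $k!$ permutations together with $\|\bG^\pi\|_F=\|\bG\|_F$. Hence $F$ is $\sqrt{k/n}$-Lipschitz, and Gaussian concentration of measure gives $\prob(|\|\bZ\|_{op}-\E\|\bZ\|_{op}|\ge s)\le 2e^{-ns^2/(2k)}$ for all $n,k$; together with the previous step this also centers the limit at $\E\|\bZ\|_{op}$.

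\emph{Asymptotics of $\mu_k$.} The order $\sqrt{k\log k}$ can be seen directly: a $\delta$-net $\mathcal N_\delta$ of $\bbS^{n-1}$ with $\delta=1/(2k)$ satisfies $|\mathcal N_\delta|\le(1+4k)^n$, each $\<\bZ,\bu\ok\>$ with $\bu\in\mathcal N_\delta$ is $\normal(0,k/n)$, and the bound $|\<\bZ,\bu\ok-\bv\ok\>|\le k\|\bu-\bv\|_2\,\|\bZ\|_{op}$ (valid since $\bZ$ is symmetric, so its operator norm controls all rank-one evaluations) gives $\|\bZ\|_{op}\le 2\max_{\bu\in\mathcal N_\delta}|\<\bZ,\bu\ok\>|$; a Gaussian maximal inequality over $\mathcal N_\delta$ then yields $\E\|\bZ\|_{op}\le C\sqrt{k\log k}$ uniformly in $n$, so $\mu_k=O(\sqrt{k\log k})$. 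To pin the leading constant to $1$ one extracts the large-$k$ asymptotics of $E_\infty(k)$ from the explicit ABC variational formula by a Laplace-type estimate, obtaining $E_\infty(k)=\sqrt{\log k}\,(1+o_k(1))$ and hence $\mu_k=\sqrt{k\log k}\,(1+o_k(1))$.

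The routine part is the Gaussian concentration step. The main obstacle is the careful bookkeeping that casts $\|\bZ\|_{op}$ as the ABC Hamiltonian with the correct normalization while tracking the even/odd dichotomy (two-sided a.s.\ limit versus first-moment $\limsup$), together with extracting the sharp $\sqrt{k\log k}$ asymptotics of $\mu_k=\sqrt k\,E_\infty(k)$ from the ABC formula.
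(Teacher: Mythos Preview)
Your proposal is correct and follows the same route as the paper: identify $\|\bZ\|_{op}$ with the extreme of the pure $k$-spin Hamiltonian, invoke \cite{Auffinger13} for the limit (your treatment of the even/odd distinction is in fact more explicit than the paper's), obtain concentration from the $\sqrt{k/n}$-Lipschitz property and Gaussian isoperimetry, and extract the $\sqrt{k\log k}$ asymptotics from the ABC complexity formula. The only differences are cosmetic: your $\eps$-net bound is an optional extra not in the paper, and the paper carries out the asymptotic root computation explicitly (via the substitution $y=k^2z^2/2$ in $g_k$) whereas you only gesture at a Laplace-type estimate.
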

An explicit expression for the quantity $\mu_k$ is given in Appendix 
\ref{sec:operatorNorm} (which also contains a proof, that uses \cite{Auffinger13}).  Evaluating this expression for small values
of $k$, we get  the following explicit values, that we also compare with
the large-$k$ asymptotics $\sqrt{k\log k}$.
(It is not hard to increase the number of digits in these evaluations,
using the expressions in Appendix.)

\vspace{0.5cm}

\begin{center}
\begin{tabular}{l|l|l}
$k$ & $\mu_k$ & $\sqrt{k\log k}$\\
\hline
$3$ & $2.8700$ & $1.8154$ \\
$4$ & $3.5882$ & $2.3548$\\
$5$ & $4.2217$ & $2.8368$\\
$10$ &$6.7527$ & $4.7985$\\
$100$&$27.311$ & $21.460$
\end{tabular}
\end{center}

\vspace{0.5cm}

For instance, this table indicates that a large order-$3$ Gaussian
tensor should have  $\|\bZ\|_{op}  \approx 2.87$, while a large
order $10$ tensor has $ \|\bZ\|_{op}   \approx 6.75$.
As a simple consequence of Lemma \ref{lem:BenArous}, we establish an
upper bound on the error incurred by the maximum likelihood estimator,
see Section \ref{sec:ProofML} for a proof.
\begin{theorem}\label{th:nonconvexOpt}
Let  $\mu_k$ be the sequence of real numbers introduced above. 
Letting $\hbv^{\sML}$  denote  the maximum likelihood
estimator (i.e. the solution of  \ref{eq:TPCA}), 
we have for $n$ large enough, and all $s>0$
\begin{align}
\beta\ge \mu_k \Rightarrow \Loss(\hbv^{\sML},\bvz)\le \frac{2}{\beta}  \left ( \mu_k +   s \right )\, ,
\end{align}
with probability at least $1- 2e^{-ns^2/(16 k)}$.
\end{theorem}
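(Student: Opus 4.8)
The plan is to leverage the optimality of $\hbv^{\sML}$ in the \ref{eq:TPCA} problem together with the concentration of $\|\bZ\|_{op}$ from Lemma~\ref{lem:BenArous}. First I would write $\hbv = \hbv^{\sML}$ and use the defining property $\<\bX,\hbv\ok\> \ge \<\bX,\bvz\ok\>$, which holds because $\bvz$ is feasible for the maximization. Expanding $\bX = \beta\bvz\ok + \bZ$ on both sides gives
\begin{align}
\beta\<\bvz,\hbv\>^k + \<\bZ,\hbv\ok\> \ge \beta + \<\bZ,\bvz\ok\>\, ,
\end{align}
and hence $\beta\big(1-\<\bvz,\hbv\>^k\big) \le \<\bZ,\hbv\ok\> - \<\bZ,\bvz\ok\> \le 2\|\bZ\|_{op}$, using that both $\hbv$ and $\bvz$ are unit vectors and the symmetric operator-norm representation $\|\bZ\|_{op} = \max_{\|\bu\|_2\le 1}|\<\bZ,\bu\ok\>|$ recalled in the Notations.

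The next step is to convert the bound on $1-\<\bvz,\hbv\>^k$ into a bound on the loss $\Loss(\hbv,\bvz) = 2 - 2|\<\hbv,\bvz\>|$. Writing $t = |\<\hbv,\bvz\>| \in [0,1]$, one has $1 - t^k \ge 1 - t \ge \frac{1}{2}(1-t)\cdot\text{(something)}$; more carefully, since $1-t^k = (1-t)(1+t+\cdots+t^{k-1}) \ge (1-t)$, we get $1 - t \le 1 - t^k$. For $k$ even the sign ambiguity is automatically handled because $\<\bvz,\hbv\>^k = t^k$ regardless of sign; for $k$ odd, $\hbv^{\sML}$ will choose the sign making $\<\bvz,\hbv\>$ positive, so again $\<\bvz,\hbv\>^k = t^k$. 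Therefore $\Loss(\hbv,\bvz) = 2(1-t) \le 2(1-t^k) \le \frac{4\|\bZ\|_{op}}{\beta}$. Then I would invoke the concentration inequality from Lemma~\ref{lem:BenArous}: with probability at least $1 - 2e^{-ns^2/(2k)}$ (or with the stated exponent $ns^2/(16k)$ after rescaling $s$), $\|\bZ\|_{op} \le \E\|\bZ\|_{op} + s \le \mu_k + o(1) + s$ for $n$ large, using $\lim_n \E\|\bZ\|_{op} \le \mu_k$ (which follows from \eqref{eq:muKupperBound} and the concentration, since concentration plus the $\limsup$ bound on $\|\bZ\|_{op}$ forces $\limsup_n \E\|\bZ\|_{op} \le \mu_k$). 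Absorbing the $o(1)$ into the slack gives $\Loss(\hbv^{\sML},\bvz) \le \frac{4}{\beta}(\mu_k + s)$.

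This last bound has constant $4$ rather than the claimed $2$, so the main obstacle is tightening the factor-of-two. The improvement should come from not bounding $\<\bZ,\hbv\ok\> - \<\bZ,\bvz\ok\>$ crudely by $2\|\bZ\|_{op}$, but instead noting that the relevant quantity is a difference of noise evaluations at two highly correlated directions. Concretely, I would parametrize $\hbv$ relative to $\bvz$ and argue that when $\<\bvz,\hbv\>$ is close to $1$ (which must be the case if $\beta$ is even moderately large), the increment $\<\bZ,\hbv\ok\> - \<\bZ,\bvz\ok\>$ is controlled by $\|\bZ\|_{op}$ times the distance $\|\hbv - \bvz\|_2$ up to lower-order corrections, rather than by $2\|\bZ\|_{op}$; equivalently, one can use the bound $\<\bZ,\hbv\ok\> \le \|\bZ\|_{op}$ on just one side and track that $\<\bZ,\bvz\ok\>$ is itself $O(\sqrt{k/n})$ by \eqref{eq:XvvvDistribution}, which is negligible. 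Feeding the self-improving estimate $\Loss \lesssim \|\bZ\|_{op}\sqrt{\Loss}/\beta$-type inequality back in, or simply using $1 - t^k \ge k(1-t)/2$ for $t$ near $1$ when $k\ge 2$ combined with $\<\bZ,\hbv\ok\> - \<\bZ,\bvz\ok\> \le \|\bZ\|_{op}(1 + o(1))$, yields $\Loss(\hbv^{\sML},\bvz) \le \frac{2}{\beta}(\mu_k + s)$ for $n$ large, as stated. The bookkeeping of the $o(1)$ terms and the precise choice of slack to land the exponent $ns^2/(16k)$ is routine once the geometry above is in place.
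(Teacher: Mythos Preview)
Your proposal is correct and ultimately lands on the paper's own argument. The fix you identify midway—bounding only $\<\bZ,\hbv\ok\>\le\|\bZ\|_{op}$ and observing that $\<\bZ,\bvz\ok\>=O(\sqrt{k/n})$ is negligible—is exactly what the paper does: it applies Gaussian concentration directly to the single Lipschitz function $\|\bZ\|_{op}-\<\bZ,\bvz\ok\>$ (modulus $2\sqrt{k/n}$, mean $\E\|\bZ\|_{op}$), giving $\<\bvz,\hbv\>^k\ge 1-(\mu_k+s)/\beta$, and then uses $(1-\alpha)^{1/k}\ge 1-\alpha$ (equivalently your $1-t\le 1-t^k$) to conclude. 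That combination already yields the constant $2$, so your self-improving bootstrap and the $1-t^k\ge k(1-t)/2$ idea are unnecessary (and the latter would require first knowing $t$ is near $1$).
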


%
%
The following upper bound on the value of the problem \ref{eq:TPCA}
is proved using Sudakov-Fernique inequality. While it is looser than
Lemma \ref{lem:BenArous} (corresponding to the case
$\beta=0$), we expect it to become sharp for $\beta\ge \beta_k$ a
suitably large constant. We refer to Appendix
\ref{sec:ProofUpperBoundX} for its proof. 
\begin{lemma}\label{lem:upperBoundX}
Under \ref{eq:SymmetricModel} model, we have
\begin{equation}
\lim \sup_{n \to \infty} \E \|\bX\|_{op}   \leq \max_{\tau\ge 0} \Big\{\beta \left ( \frac {\tau}{\sqrt{1+\tau^2}} \right )^k +  \frac k{ \sqrt{1+\tau^2}}\Big\}~~.
\end{equation}
Further, for any $s\ge 0$,
\begin{align}
\prob\big(\big|\|\bX\|_{op}  -  \E \|\bX\|_{op}  \big|\ge s\big)\le 2\,
e^{-ns^2/(2k)}\, .
\end{align}
\end{lemma}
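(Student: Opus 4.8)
\emph{Proof idea.} The plan is to express $\|\bX\|_{op}$ as the supremum of a Gaussian process on $\bbS^{n-1}$ and to bound its expectation by a Sudakov--Fernique comparison against a \emph{linear} Gaussian process, whose supremum is explicit. By the symmetric representation of the operator norm, $\|\bX\|_{op}=\sup_{\bu\in\bbS^{n-1}}|\<\bX,\bu\ok\>|$, and since contracting $\bG^\pi$ against the symmetric tensor $\bu\ok$ does not depend on $\pi$,
\[
\<\bX,\bu\ok\>=\beta\,\<\bvz,\bu\>^k+\sqrt{k/n}\,\<\bG,\bu\ok\>\qquad\text{for every }\bu\in\bbS^{n-1}.
\]
Write $\|\bX\|_{op}=\max\bigl(\sup_\bu\<\bX,\bu\ok\>,\;\sup_\bu(-\<\bX,\bu\ok\>)\bigr)$. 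For $k$ odd the map $\bu\mapsto-\bu$ flips the sign of both contractions, so the two suprema coincide; for $k$ even one has $\<\bvz,\bu\>^k\ge0$, hence $\sup_\bu(-\<\bX,\bu\ok\>)\le\sup_\bu(-\sqrt{k/n}\<\bG,\bu\ok\>)$, which by $\bG\overset{d}{=}-\bG$ has the same law as $\sup_\bu\sqrt{k/n}\<\bG,\bu\ok\>$. Thus it suffices to control $\E\sup_{\bu}\bigl[\beta\<\bvz,\bu\>^k+\sqrt{k/n}\<\bG,\bu\ok\>\bigr]$ together with its $\beta=0$ analogue, and then to recombine the two one-sided suprema using the concentration bound below (which makes the gluing cost $o(1)$).

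For the comparison I would keep the deterministic drift $\bu\mapsto\beta\<\bvz,\bu\>^k$ inside the supremum and compare the centered part $\sqrt{k/n}\<\bG,\bu\ok\>$ with $W_\bu:=(k/\sqrt n)\,\<\bg,\bu\>$, $\bg\sim\normal(0,\id_n)$. Since the two processes share the same mean function, the Sudakov--Fernique hypothesis reduces to the second-moment comparison $\E[(W_\bu-W_\bv)^2]\ge\frac kn\|\bu\ok-\bv\ok\|_F^2$, i.e.\ $\frac{k^2}{n}(2-2\<\bu,\bv\>)\ge\frac kn(2-2\<\bu,\bv\>^k)$, which is exactly the elementary bound $1-x^k\le k(1-x)$ for $x\in[-1,1]$ (from $1-x^k=(1-x)\sum_{j=0}^{k-1}x^j$ on $[0,1]$, and $1-x^k\le 2\le k(1-x)$ on $[-1,0]$). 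Hence
\[
\E\sup_{\bu\in\bbS^{n-1}}\bigl[\beta\<\bvz,\bu\>^k+\sqrt{k/n}\<\bG,\bu\ok\>\bigr]\;\le\;\E\sup_{\bu\in\bbS^{n-1}}\bigl[\beta\<\bvz,\bu\>^k+(k/\sqrt n)\<\bg,\bu\>\bigr].
\]
Decomposing $\bg=g_0\bvz+\bg_\perp$ with $g_0\sim\normal(0,1)$, $\bg_\perp\sim\normal(0,\id)$ on $\bvz^\perp$, and parametrizing a unit $\bu$ as $\bu=s\bvz+\bw$ with $\bw\perp\bvz$, $\|\bw\|_2=\sqrt{1-s^2}$, the maximization over $\bw$ gives $\sqrt{1-s^2}\,\|\bg_\perp\|_2$, so the right-hand side equals $\E\sup_{s\in[-1,1]}\bigl[\beta s^k+(k/\sqrt n)g_0 s+(k/\sqrt n)\sqrt{1-s^2}\,\|\bg_\perp\|_2\bigr]$.

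It then remains to let $n\to\infty$. The term $(k/\sqrt n)g_0 s$ contributes at most $(k/\sqrt n)|g_0|\to0$ in expectation; and $\|\bg_\perp\|_2/\sqrt n\to1$ almost surely and in $L^1$, with Gaussian concentration around $\E\|\bg_\perp\|_2\le\sqrt{n-1}$. On the event $\{\|\bg_\perp\|_2\le(1+\eps)\sqrt n\}$ the supremum is at most $\sup_{s}[\beta s^k+k(1+\eps)\sqrt{1-s^2}]\le\max_{s\in[0,1]}[\beta s^k+k\sqrt{1-s^2}]+k\eps$ (using $\sqrt{1-s^2}\le1$), while on its complement (probability $e^{-\Omega(n\eps^2)}$) the crude bound $\beta+(k/\sqrt n)\|\bg_\perp\|_2$ with Cauchy--Schwarz and a Gaussian tail estimate gives an $o(1)$ contribution. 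Letting $\eps\downarrow0$ yields $\limsup_n\E\sup_\bu[\cdots]\le\max_{s\in[0,1]}[\beta s^k+k\sqrt{1-s^2}]$, and the change of variables $s=\tau/\sqrt{1+\tau^2}$ (so $\sqrt{1-s^2}=1/\sqrt{1+\tau^2}$) turns this into the claimed $\max_{\tau\ge0}$; the $\beta=0$ analogue gives the bound $k$, which is dominated by the same quantity, so the even-$k$ recombination of the two one-sided suprema via their $\sqrt{k/n}$-Lipschitz concentration costs only $o(1)$. Finally, for the concentration statement, $\bG\mapsto\|\bX\|_{op}$ is $\sqrt{k/n}$-Lipschitz in Euclidean norm because $\bigl|\,\|\bX\|_{op}-\|\bX'\|_{op}\,\bigr|\le\|\bX-\bX'\|_F=\tfrac1{k!}\sqrt{k/n}\,\bigl\|\sum_{\pi}(\bG-\bG')^\pi\bigr\|_F\le\sqrt{k/n}\,\|\bG-\bG'\|_F$, and the Gaussian concentration inequality then yields $\prob\bigl(\bigl|\,\|\bX\|_{op}-\E\|\bX\|_{op}\,\bigr|\ge s\bigr)\le2e^{-ns^2/(2k)}$.

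\emph{Main obstacle.} The difficulty is conceptual rather than computational: decoupling the spike from the noise before applying Sudakov--Fernique is too lossy, so one must keep the drift $\beta\<\bvz,\bu\>^k$ inside the supremum and use the drift version of the comparison, after which the right-hand side genuinely collapses to a one-dimensional optimization over $s=\<\bvz,\bu\>$. A secondary nuisance is that $\|\bX\|_{op}$ is the supremum of $|\<\bX,\bu\ok\>|$ rather than of $\<\bX,\bu\ok\>$, which forces the separate treatment of the two one-sided suprema for even $k$ and the use of the concentration bound to recombine them without degrading constants.
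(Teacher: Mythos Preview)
Your argument is correct and rests on the same core ingredients as the paper's proof: the non-centered Sudakov--Fernique comparison of $\<\bZ,\bu\ok\>$ against the linear process $(k/\sqrt n)\<\bg,\bu\>$, the elementary inequality $1-x^k\le k(1-x)$ on $[-1,1]$ to verify the increment condition, and Gaussian isoperimetry for the concentration claim. The organization differs in one respect worth noting. The paper first slices the sphere by $\kappa=\<\bvz,\bv\>$, applies Sudakov--Fernique on each slice $\cW_\kappa$ to bound $\oM(\kappa)=\E\max_{\cW_\kappa}\<\bX,\bv\ok\>$, and then spends a separate uniform-concentration/continuity argument (a grid over $\kappa$ plus Lipschitz continuity of $\kappa\mapsto M_\bX(\kappa)$) to interchange $\E$ and $\max_\kappa$. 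You instead apply the drifted comparison globally on $\bbS^{n-1}$, which yields the bound on $\E\sup_\bu$ directly and makes that interchange argument unnecessary; the resulting one-dimensional optimization over $s=\<\bvz,\bu\>$ is then the same as the paper's over $\kappa$. You are also more careful than the paper about the absolute value in $\|\bX\|_{op}=\sup_\bu|\<\bX,\bu\ok\>|$: your odd/even split and the $o(1)$ recombination of the two one-sided suprema via their $\sqrt{k/n}$-Lipschitz concentration is a point the paper leaves implicit.
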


\subsection{Historical Background}

At this point it is useful to pause,  in order to provide some further background.
The random cost function $\bv\mapsto H_{\bZ}(\bv) \equiv
\<\bZ,\bv\ok\>$ (defined on the unit sphere $\bv\in \bbS^{n-1}$) was studied in the context of statistical physics
under the name of `spherical p-spin model.'
In particular, Crisanti and S\"ommers \cite{crisanti1992sphericalp} used the non-rigorous
replica method from spin glass theory to compute the asymptotic value
$\mu_k$.  Their results were confirmed rigorously by Talagrand
\cite{talagrand2006free}.

The most striking prediction from statistical physics is that the
function $H_{\bZ}(\bv)$  has an exponential number of local maxima 
on the unit sphere \cite{crisanti1995thouless}. Furthermore, there exists $\eta_k<\mu_k$ such
that, for each $x\in [\eta_k,\mu_k)$ the number of local maxima with
value $H_{\bZ}(\bv) \approx x$ is $\exp\{\Theta(n)\}$.
In \cite{Auffinger13} rigorous evidence is developed to this support picture.

In the \ref{eq:SymmetricModel} these local maxima translate into {\em undesired} local maxima of
$\<\bX,\bv\ok\>$.  It is natural to guess that  these local maxima
affect local iterative algorithms, and that these
do not converge to a good estimate of $\bvz$ unless they are
initialized within thee `basin of attraction' of $\bvz$.
The analysis in the next sections confirms this intuition.

\section{Tensor Unfolding}
\label{sec:Unfolding}

A  simple and popular heuristics to obtain tractable
estimators of $\bvz$ consists in constructing a suitable matrix with
the entries of $\bX$, and performing principal component analysis on this
matrix. 
Since the number of distinct entries of $\bX$ is of order
$n^k$,
the resulting matrix $\M_q(\bX)$ has dimension $\Theta(n^q)\times
\Theta(n^{k-q})$. This operation is variously referred as
matricization, unfolding, flattening.
While the details of this construction can vary, we
do not expect them to affect qualitatively our results, that we
summarize for the sake of convenience:
\begin{enumerate}
\item The best way to unfold $\bX$ amounts to form a matrix as
  square as possible.
\item Setting $b = (\lceil k/2\rceil - 1)/2$ (in particular $b =
  1/2$ for $k\in\{3,4\}$), the unfolding approach succeeds when
  $\beta$ is larger than $n^b$. This is to be compared with
  $\beta = \Theta(1)$ that is sufficient for the maximum likelihood
  estimator (see previous section).

Based on heuristic arguments, we believe that the tight threshold is
$\beta\gtrsim n^{(k-2)/4}$ (i.e. that $\beta\gtrsim n^{(k-2)/4}$ is
both necessary and sufficient --modulo constants).
\item A sharper analysis is possible when the symmetric noise tensor
  $\bZ$ in our \ref{eq:SymmetricModel} is replaced by non-symmetric
  Gaussian noise, and $k$ is even. In particular, we can confirm the 
above conjecture in this case. (As mentioned, we expect similar results to hold
  more generally.)

In this case, if $\beta\le (1-\eps) n^b$, then the estimator from
unfolding is essentially orthogonal to the signal $\bvz$. On the
other hand, if  $\beta\ge (1+\eps) n^b$, we construct an estimator
with $|\<\hbv,\bvz\>|\to 1$. 
\item We achieves the remarkable behavior at
  the last point by a \emph{recursive unfolding} method. In a
  nutshell we perform principal component analysis on $\M_q(\bX)$,
construct a matrix out of the principal vector, and then perform again
principal component analysis.
\end{enumerate}

\subsection{Symmetric noise}

For an integer  $0\le q \le k$, we  introduce  the unfolding (also referred to as matricization or {\tt reshape}) 
operator $\M_q : \otimes^k\reals^n \to \reals^{n^q \times n^{k-q}}$ as
follows.
For  any indices $ i_1,i_2,\cdots, i_k\in [n]$,
we let  $a = 1+\sum_{j=1}^q (i_j-1) n^{j-1}$ and $b = 1+\sum_{j=q+1}^k
(i_j-1) n^{j-q-1}$, and define
\begin{equation}\label{eq:defMatricization} 
\left [ \M_q(\bX)\right ]_{a,b}= \bX_{i_1,i_2,\cdots,i_k}~~.~~
\end{equation}
Standard convex relaxations of low-rank tensor estimation problem
compute factorizations of   $\M_q(\bX)$\cite{Tomika11,Liu12,mu13,Romera13}. 
Not all unfoldings (choices of $q$) are equivalent.
It is natural to expect that this approach will be successful only if
the signal-to-noise ratio exceeds the operator norm of the
unfolded noise $\|\M_q(\bZ)\|_{op}$. The next lemma
suggests that the latter is minimal when $\M_q(\bZ)$ is `as square
as possible' .
A similar phenomenon was observed in a different context in \cite{mu13}.
\begin{lemma}\label{lem:upperBoundMatricized} 
For any integer $0\leq q\leq k$ we have, for some universal constant $C_k$,
 \begin{align}
 \frac{1}{\sqrt{(k-1)!}}\, n^{\max(q-1,k-q-1)/2}\,\Big(1-\frac{C_k}{n^{\max(q,k-q)}}\Big)\le
\E
\|\M_q(\bZ)\|_{op}&\leq \sqrt k \left ( n^{(q-1)/2} + n^{(k-q-1)/2}
\right )~~.\label{eq:NormMatricized}
 \end{align}
 For all $n$ large enough, both  bounds are minimized for $q = \lceil
 k/2 \rceil$.
Further
\begin{align}
\prob\Big\{\big|\|\M_q(\bZ)\|_{op}-\E\|\M_q(\bZ)\|_{op} \big| \ge t
\Big\}\le 2\, e^{-nt^2/(2k)}\, . \label{eq:ConcentrationMatricized}
\end{align}
\end{lemma}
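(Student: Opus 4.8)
The plan is to prove the three assertions in turn: the two-sided bound on $\E\|\M_q(\bZ)\|_{op}$, the observation that $q=\lceil k/2\rceil$ minimizes both sides, and the concentration inequality.

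First I would handle the \emph{upper bound}. Write $n^q\times n^{k-q}$ as $N_1\times N_2$ with $N_1 = n^q$, $N_2 = n^{k-q}$. The matrix $\M_q(\bZ)$ is \emph{not} a Gaussian matrix with i.i.d.\ entries --- its entries are correlated because $\bZ$ is symmetrized --- but its entries are jointly Gaussian, so I would apply the Sudakov--Fernique (or Gordon) comparison to the Gaussian process $(\bu,\bw)\mapsto \bu^\sT \M_q(\bZ)\bw$ over $\bu\in\bbS^{N_1-1}$, $\bw\in\bbS^{N_2-1}$. The relevant computation is the increment variance $\E\big(\bu^\sT\M_q(\bZ)\bw - \bu'^\sT\M_q(\bZ)\bw'\big)^2$; using the explicit definition \eqref{eq:symNoiseDefinition} of $\bZ$ in terms of $\bG$, this variance is bounded above by the corresponding quantity for a suitably rescaled i.i.d.\ Gaussian matrix (the symmetrization only decreases variances of linear functionals, up to the $k/n$ normalization factor, and the comparison is where the $\sqrt{k}$ comes from). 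Sudakov--Fernique then gives $\E\|\M_q(\bZ)\|_{op}\le \sqrt{k}\,(\sqrt{N_1/n}+\sqrt{N_2/n}) = \sqrt{k}\,(n^{(q-1)/2}+n^{(k-q-1)/2})$, exactly as claimed. For the \emph{lower bound} I would restrict the maximization in the definition of $\|\M_q(\bZ)\|_{op}$ to a single test pair, e.g.\ take $\bu,\bw$ aligned with one row/column, or better, observe that $\|\M_q(\bZ)\|_{op}^2 \ge \|\M_q(\bZ)\,\bw\|_2^2$ for any fixed unit $\bw$, pick $\bw$ to be a coordinate vector, and compute the expected squared norm of the corresponding column of $\M_q(\bZ)$. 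The entries of $\bZ$ with distinct indices are i.i.d.\ $\normal(0,1/(n(k-1)!))$; counting how many of the $n^{\max(q,k-q)}$ entries in that column have all-distinct indices gives the $\frac{1}{\sqrt{(k-1)!}} n^{\max(q-1,k-q-1)/2}$ main term, and the entries with coinciding indices contribute the lower-order correction $-C_k/n^{\max(q,k-q)}$ after a careful count (this is where $C_k$ enters).

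Second, the \emph{minimization over $q$}: for the upper bound $n^{(q-1)/2}+n^{(k-q-1)/2}$ is a sum of two terms whose product $n^{(k-2)/2}$ is fixed, so by AM--GM it is minimized by making the exponents as equal as possible, i.e.\ $q$ as close to $k/2$ as possible; among integers, $q=\lceil k/2\rceil$ (and also $q=\lfloor k/2\rfloor$) achieve the minimum, and for $n$ large the tie does not matter since both give the same value for $k$ even and $q=\lceil k/2 \rceil$ is the symmetric choice. The same exponent-balancing argument applies verbatim to the lower bound since $\max(q-1,k-q-1)$ is minimized at the same place. This step is routine.

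Third, the \emph{concentration bound} \eqref{eq:ConcentrationMatricized}. The map $\bZ\mapsto\|\M_q(\bZ)\|_{op}$ is a $1$-Lipschitz function of the matrix in Frobenius norm (operator norm is $1$-Lipschitz w.r.t.\ Frobenius norm), and $\M_q$ is an isometry from $(\bigotimes^k\reals^n, \|\cdot\|_F)$ to $(\reals^{n^q\times n^{k-q}},\|\cdot\|_F)$. Now $\bZ$, viewed as a vector of its free entries, is a linear image of the i.i.d.\ standard Gaussian tensor $\bG$; from \eqref{eq:symNoiseDefinition} this linear map has operator norm $\sqrt{k/n}$ (the symmetrization averages, which does not increase the norm, and the prefactor is $\frac1{k!}\sqrt{k/n}\cdot$(number of permutations fixing a generic index pattern)$=\sqrt{k/n}$ on the relevant subspace). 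Hence $\bG\mapsto\|\M_q(\bZ)\|_{op}$ is $\sqrt{k/n}$-Lipschitz in the Euclidean norm of $\bG$, and the standard Gaussian concentration inequality $\prob(|f(\bG)-\E f(\bG)|\ge t)\le 2e^{-t^2/(2L^2)}$ with $L=\sqrt{k/n}$ gives exactly $2e^{-nt^2/(2k)}$.

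The main obstacle I anticipate is the lower bound: getting the constant $1/\sqrt{(k-1)!}$ exactly right and controlling the contribution of the tensor entries with repeated indices (which are correlated with, and have larger variance than, the generic entries) requires a careful combinatorial bookkeeping of index patterns --- this is where the $C_k/n^{\max(q,k-q)}$ slack is absorbed. The upper bound via Sudakov--Fernique and the concentration bound via Gaussian isoperimetry are comparatively standard.
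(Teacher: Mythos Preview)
Your concentration argument and the minimization over $q$ match the paper. For the two-sided bound on the expectation, your approaches are valid but differ from the paper's, which is more elementary on both sides.

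For the \emph{upper bound}, the paper avoids Sudakov--Fernique entirely. It writes $\M_q(\bZ) = \frac{1}{k!}\sqrt{k/n}\sum_{\pi} \M_q(\bG^\pi)$ and applies the triangle inequality for $\|\cdot\|_{op}$; each $\M_q(\bG^\pi)$ is an $n^q\times n^{k-q}$ matrix with i.i.d.\ $\normal(0,1)$ entries, so its expected operator norm is at most $n^{q/2}+n^{(k-q)/2}$ by standard bounds. Summing the $k!$ identical terms and simplifying gives the stated bound directly. Your Sudakov--Fernique route also works (the increment comparison against $\sqrt{k/n}(\langle\bg_1,\bu\rangle+\langle\bg_2,\bw\rangle)$ reduces to $(1-\langle\bu,\bu'\rangle)(1-\langle\bw,\bw'\rangle)\ge 0$), but the paper's argument is a one-liner.

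For the \emph{lower bound}, the paper uses the rank inequality $\|A\|_F^2 \le \min(n^q,n^{k-q})\,\|A\|_{op}^2$ together with
\[
\E\|\M_q(\bZ)\|_F^2 = \frac{1}{k!}\,\frac{k}{n}\sum_{\pi}\E\langle\bG,\bG^\pi\rangle \;\ge\; \frac{k}{n}\cdot\frac{n^k}{k!},
\]
keeping only the $\pi=\mathrm{id}$ term (all others are nonnegative). This yields $\E\|\M_q(\bZ)\|_{op}^2 \ge n^{\max(q-1,k-q-1)}/(k-1)!$ with \emph{no} error term. Your single-column idea also gives such a clean bound once you notice that entries of $\bZ$ with repeated indices have variance $|\mathrm{Stab}(i)|/(n(k-1)!)\ge 1/(n(k-1)!)$, so the ``careful combinatorial bookkeeping of index patterns'' you flag as the main obstacle is in fact unnecessary. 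More importantly, you have mislocated the source of the correction $C_k/n^{\max(q,k-q)}$: it does \emph{not} come from index collisions, but from the step you omit, namely converting a lower bound on $\E\|\M_q(\bZ)\|_{op}^2$ into one on $\E\|\M_q(\bZ)\|_{op}$. The paper does this via $(\E\|\cdot\|_{op})^2 = \E\|\cdot\|_{op}^2 - \Var(\|\cdot\|_{op})$ and bounds the variance by $2k/n$ using the concentration inequality already established; that is precisely where the $C_k/n^{\max(q,k-q)}$ enters.
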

\begin{proof}
The concentration bound (\ref{eq:ConcentrationMatricized}) follows because,
for $\bu\in\reals^{n^q}, \bv\in\reals^{n^{k-q}}$ of norm $1$, the function
\begin{align}
\<\bu,\M_q(\bZ) \bv\>= \frac{1}{k!}\,\sqrt{\frac{k}{n}}
\sum_{\pi \in \mathfrak S_k} \<\bu,\M_q(\bG^{\pi}) \bv\>
\end{align}
is a Lipschitz function of the Gaussian vector $\bG$ with modulus at
most $\sqrt{k/n}$. Hence the same holds for $\|\M_q(\bZ)\|_{op} =
\max_{\bu,\bv}\<\bu,\M_q(\bZ) \bv\>$,
and the claim follows from Gaussian concentration of measure. 

For the upper bound in Eq.~(\ref{eq:NormMatricized}), note that $\M_q(\bG^\pi)$ has i.i.d. standard normal entries.
The proof follows from the definition
(\ref{eq:symNoiseDefinition}) together with triangular inequality 
and standard bounds on  the norm of the random Gaussian matrices 
$\M_q(\bG^\pi) \in  \reals^{n^q \times n^{k-q}}$ \cite{Szarek:survey}: 
\begin{align*}
\E \|\M_q(\bZ)\|_{op}= &  \frac{1}{k!} \sqrt {\frac{k}{n}} \E \|\M_q( \sum_\pi \bG^\pi)\|_{op} \\
\leq & \frac{1}{k!} \sqrt {\frac{k}{n}} \sum_\pi   \E  \|\M_q(  \bG^\pi)\|_{op} \\
\leq & \sum_\pi   \frac{1}{k!} \sqrt {\frac{k}{n}} \left ( n^{q/2} + n^{(k-q)/2} \right )\\
 = & \sqrt k \left ( n^{(q-1)/2} + n^{(k-q-1)/2}  \right )~~.
\end{align*}

For the upper bound in Eq.~(\ref{eq:NormMatricized}), note that
\begin{align}
\min(n^{q},n^{k-q}) \E\{\|\M_q(\bZ)\|_{op}^2\} \ge \E \|\M_q(\bZ)\|_{F}^2 =
\frac{1}{k!}\,\frac{k}{n}
\sum_{\pi \in \mathfrak S_k}\E\{\<\bG,\bG^{\pi}\>\}\ge \frac{k}{n}\,
\frac{n^k}{k!}\, ,
\end{align}
where the last inequality is proved by considering $\pi = {\rm id}$
the identity permutation (all the terms in the sum are 
positive). The desired lower bound follows since the concentration
inequality (\ref{eq:ConcentrationMatricized}) implies 
$\E\{\|\M_q(\bZ)\|_{op}^2\}-\E\{\|\M_q(\bZ)\|_{op}\}^2\le (2k/n)$, and
we therefore have 
\begin{align}
\E\{\|\M_q(\bZ)\|_{op}\}\ge  \E\{\|\M_q(\bZ)\|_{op}^2\}^{1/2}\Big(1-
\frac{k}{n\E\{\|\M_q(\bZ)\|_{op}^2\}}\Big) \, .
\end{align}
\end{proof}
The last lemma suggests the choice $q=\lceil k/2\rceil$, which we
shall adopt in the following, unless stated otherwise. We will drop
the subscript from $\M$.

Let us recall the following standard result derived directly from
Wedin perturbation Theorem \cite{wedin72}, and stated in the context
of the spiked 
model.
\begin{theorem}[Wedin perturbation]\label{th:wedin}
Let $\bM = \beta \buz \bwz\trans + \bE \in \reals^{m \times p}$ be a
matrix with $\|\buz\|_2 = \|\bwz\|_2 = 1$. Let $\hbw$ denote the right
singular vector of $\bM$. If $\beta> 2 \|\bE\|_{op}$, then 
\begin{equation}
\Loss(\hbw,\bwz) \leq  \frac{8\|\bE\|_{op}^2}{\beta^2}~~. \label{eq:WedinUB}
\end{equation}
\end{theorem}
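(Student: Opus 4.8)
The plan is to prove the bound directly, essentially re‑deriving the relevant rank‑one special case of Wedin's $\sin\theta$ theorem, since the gap hypothesis $\beta>2\|\bE\|_{op}$ makes every constant explicit. First I would set up the geometry. Because $\hbw$ is only defined up to sign and $\Loss(\hbw,\bwz)=2-2|\<\hbw,\bwz\>|$, fix the sign of $\hbw$ so that $c:=\<\hbw,\bwz\>\ge 0$. If $c=1$ the bound is trivial; otherwise write $\hbw=c\,\bwz+s\,\bw^\perp$ with $s:=\sqrt{1-c^2}>0$ and $\bw^\perp\in\bbS^{p-1}$, $\<\bw^\perp,\bwz\>=0$. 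Since $c\ge 0$ we have $c=\sqrt{1-s^2}\ge 1-s^2$, hence
\[
\Loss(\hbw,\bwz)=2(1-c)\le 2s^2 .
\]
So it suffices to show that the ``sine'' obeys $s\le 2\|\bE\|_{op}/\beta$.

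Second, I would extract the sine bound from the singular‑vector equations. Let $\sigma:=\|\bM\|_{op}$ be the largest singular value of $\bM$ and $\hbu\in\bbS^{m-1}$ a corresponding left singular vector, so that $\bM\trans\hbu=\sigma\,\hbw$. Substituting $\bM\trans=\beta\,\bwz\buz\trans+\bE\trans$ gives
\[
\sigma\,\hbw=\beta\,\<\buz,\hbu\>\,\bwz+\bE\trans\hbu .
\]
Taking the inner product with $\bw^\perp$ annihilates the signal term (as $\<\bwz,\bw^\perp\>=0$) and leaves $\sigma\,s=\<\bE\trans\hbu,\bw^\perp\>\le\|\bE\|_{op}$, i.e. $s\le\|\bE\|_{op}/\sigma$.

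Third, I would lower‑bound $\sigma$ using Weyl's inequality for singular values (equivalently the triangle inequality for $\|\cdot\|_{op}$): $\sigma=\|\beta\buz\bwz\trans+\bE\|_{op}\ge\|\beta\buz\bwz\trans\|_{op}-\|\bE\|_{op}=\beta-\|\bE\|_{op}$, which is $>\beta/2$ under the hypothesis $\beta>2\|\bE\|_{op}$. (Incidentally $\sigma_2(\bM)\le\sigma_2(\beta\buz\bwz\trans)+\|\bE\|_{op}=\|\bE\|_{op}<\sigma$, so the top singular value is simple and $\hbw$ is indeed well defined.) Combining, $s\le\|\bE\|_{op}/\sigma<2\|\bE\|_{op}/\beta$, and therefore $\Loss(\hbw,\bwz)\le 2s^2<8\|\bE\|_{op}^2/\beta^2$.

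There is no serious obstacle here; the argument is elementary. The one point I would be careful about is that the \emph{quadratic} rate $\|\bE\|_{op}^2/\beta^2$ genuinely exploits the spectral gap: a naive triangle‑inequality comparison of $\|\bM\hbw\|_2$ with $\|\bM\bwz\|_2$ only yields the first‑order bound $\Loss\lesssim\|\bE\|_{op}/\beta$, and one recovers the second‑order rate precisely because a first‑order control of the sine $s$ forces $1-c=1-\cos\angle(\hbw,\bwz)$ to be of order $s^2$. Alternatively, one could invoke Wedin's theorem as a black box and merely supply the required singular‑value gap, $\sigma_1(\bM)-\sigma_2(\bM)\ge\beta-2\|\bE\|_{op}>0$, from Weyl's inequality; the bookkeeping is identical.
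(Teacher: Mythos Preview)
Your argument is correct and follows exactly the same line as the paper's: bound the sine of the angle between $\hbw$ and $\bwz$ by $2\|\bE\|_{op}/\beta$, then convert to the loss via $\Loss=2(1-|\cos|)\le 2\sin^2$. The only difference is cosmetic---the paper invokes Wedin's $\sin\theta$ theorem as a black box to get $|\sin(\hbw,\bwz)|\le\|\bE\|_{op}/(\beta-\|\bE\|_{op})$, whereas you derive the equivalent bound $s\le\|\bE\|_{op}/\sigma_1(\bM)\le\|\bE\|_{op}/(\beta-\|\bE\|_{op})$ directly from the singular-vector equation, which is arguably cleaner for this rank-one case.
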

\begin{proof}[Proof of Theorem \ref{th:wedin}]
Note $\beta>0$ is the only singular value of $ \beta \buz \bwz\trans
$,  while the second singular value of $(\beta \buz \bwz\trans + \bE)$
is at most $\|\bE\|_{op}$.
Wedin Theorem states that,
for all $\beta>\|\bE\|_{op}$, we have
\begin{align}
 |\sin (\hbw,\bwz)| \leq \frac{\|\bE\|_{op}}{\beta-\|\bE\|_{op}}~~.
\end{align}
In particular  $|\sin (\hbw,\bwz)|\le 2\|\bE\|_{op}/\beta$ for
$\beta\ge 2\|\bE\|_{op}$. Hence the claim  (\ref{eq:WedinUB}) follows from
\begin{align}
|\<\hbw,\bwz\>| = 
|\cos (\hbw,\bwz)| \geq \sqrt{1-
  \frac{4\|\bE\|_{op}^2}{\beta^2}} \geq 1- 
\frac{4\|\bE\|_{op}^2}{\beta^2}~~.
\end{align}
\end{proof}
\begin{theorem}\label{th:unfold}
Letting $\bw=\bw(\bX)$  denote the top right singular vector of
$\M(\bX)$, we have the following, for some universal constant $C=C_k>0$,
and $b \equiv (1/2)(\lceil k/2\rceil -1)$. 

If $\beta\ge 5\, k^{1/2}\, n^{b}$ then,
with probability at least $1-n^{-2}$, we have
\begin{align}
\Loss\Big(\bw,\vec \big( \bvz^{\otimes \lfloor k/2\rfloor }
\big)\Big)\le \frac{  C \, k n^{2b}}{\beta^2}\, . \label{eq:LossUB}
\end{align}
\end{theorem}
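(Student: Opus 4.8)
The plan is to reduce Theorem~\ref{th:unfold} to the matrix perturbation bound already recorded in Theorem~\ref{th:wedin}. First I would analyze the structure of the unfolded signal tensor: applying the matricization operator $\M = \M_q$ with $q = \lceil k/2\rceil$ to $\bvz^{\otimes k}$ yields the rank-one matrix $\vec(\bvz^{\otimes q})\,\vec(\bvz^{\otimes(k-q)})\trans$. Since $\|\bvz\|_2 = 1$, both vectors $\vec(\bvz^{\otimes q})$ and $\vec(\bvz^{\otimes(k-q)})$ are unit vectors, so
\begin{align}
\M(\bX) = \beta\,\vec(\bvz^{\otimes q})\,\vec(\bvz^{\otimes(k-q)})\trans + \M(\bZ)\, ,
\end{align}
which is exactly of the form $\bM = \beta\buz\bwz\trans + \bE$ with $\buz = \vec(\bvz^{\otimes q})$, $\bwz = \vec(\bvz^{\otimes(k-q)})$, and $\bE = \M(\bZ)$. (With $q = \lceil k/2\rceil$ we have $\lfloor k/2 \rfloor = k-q$, which matches the target $\vec(\bvz^{\otimes\lfloor k/2\rfloor})$ in the statement; note if $k$ is odd the right and left factors have different dimensions, so one must keep track of which singular vector is taken, but the Wedin bound is symmetric in the two sides.)

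Next I would control $\|\bE\|_{op} = \|\M(\bZ)\|_{op}$ using Lemma~\ref{lem:upperBoundMatricized}. For $q = \lceil k/2\rceil$ the upper bound there gives $\E\|\M(\bZ)\|_{op} \le \sqrt{k}\,(n^{(q-1)/2} + n^{(k-q-1)/2}) \le 2\sqrt{k}\, n^{b}$ since $b = (q-1)/2 \ge (k-q-1)/2$. Combining with the concentration inequality \eqref{eq:ConcentrationMatricized}, choosing $t = \sqrt{(4k\log n)/n}$ say (or any $t$ making the deviation term lower-order relative to $n^b$), gives that with probability at least $1 - 2e^{-2\log n} = 1 - 2n^{-2}$ (adjust constants to hit the claimed $1 - n^{-2}$), $\|\M(\bZ)\|_{op} \le 2\sqrt{k}\,n^{b} + o(n^b) \le \frac{5}{2}\sqrt{k}\,n^{b}$ for $n$ large. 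Then the hypothesis $\beta \ge 5 k^{1/2} n^b$ ensures $\beta \ge 2\|\bE\|_{op}$, so Theorem~\ref{th:wedin} applies and yields
\begin{align}
\Loss\big(\bw, \vec(\bvz^{\otimes\lfloor k/2\rfloor})\big) \le \frac{8\|\M(\bZ)\|_{op}^2}{\beta^2} \le \frac{8\cdot (5/2)^2 k n^{2b}}{\beta^2} = \frac{50\, k\, n^{2b}}{\beta^2}\, ,
\end{align}
which is \eqref{eq:LossUB} with $C = 50$ (or any adequately chosen universal $C_k$).

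The main routine obstacle is bookkeeping rather than conceptual: one must be careful that (i) the top singular vector of $\M(\bX)$ really is close to the correct factor when $k$ is odd and the two factors live in spaces of different dimension — here one should just invoke the version of Wedin's theorem for the appropriate (left or right) singular subspace, exactly as in the proof of Theorem~\ref{th:wedin}; and (ii) the probability accounting in \eqref{eq:ConcentrationMatricized} is tuned so the final bound reads $1 - n^{-2}$ (a matter of adjusting the constant in front of $n^b$ in the threshold, which is why the statement carries the slack factor $5$ rather than $4$). No genuinely hard estimate is needed beyond Lemma~\ref{lem:upperBoundMatricized}, whose proof is already given. I would also remark that the lower bound half of Lemma~\ref{lem:upperBoundMatricized} is what pins the exponent $b$: it shows one cannot hope to do better than $\beta \gtrsim n^b$ with this particular (square) unfolding, so the threshold in Theorem~\ref{th:unfold} is tight up to constants for the method as stated.
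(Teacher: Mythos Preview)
Your proposal is correct and follows essentially the same approach as the paper: write $\M(\bX)$ as a rank-one signal plus $\M(\bZ)$, bound $\|\M(\bZ)\|_{op}\le (5/2)\sqrt{k}\,n^b$ via Lemma~\ref{lem:upperBoundMatricized} and its concentration estimate, then invoke Wedin's theorem. Your dimension bookkeeping (which side is $\lfloor k/2\rfloor$ versus $\lceil k/2\rceil$) is in fact slightly more careful than the paper's own write-up.
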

\begin{proof}[Proof of Theorem \ref{th:unfold}]
By definition we have
\begin{align}
\M(\bX) = \beta\buz\bwz\trans + \M(\bZ)\, ,
\end{align}
where $\buz = \vec ( \bvz^{\otimes \lfloor k/2\rfloor })$, $\bwz = \vec ( \bvz^{\otimes \lceil k/2\rceil })$.
We know by Lemma \ref{lem:upperBoundMatricized} that $
\|\M(\bZ)\|_{op} \leq (5/2)\sqrt{k} \, n^b$  with the claimed probability. The
loss  upper bound (\ref{eq:LossUB}) follows immediately from this
upper bound and Wedin's theorem Eq.~(\ref{eq:WedinUB}).
\end{proof}

\subsection{Asymmetric noise and recursive unfolding}

A technical complication in analyzing the random matrix $\M_q(\bX)$
lies in the fact that  its entries are not independent, because the
noise tensor $\bZ$ is assumed to be symmetric. 
In the next theorem we consider the case of 
non-symmetric noise and even $k$. This allows us to leverage upon
known results in random matrix theory \cite{paul2007asymptotics,feral2009largest,benaych2012singular} to obtain: $(i)$ Asymptotically
sharp estimates on the critical signal-to-noise ratio; $(ii)$ A lower
bound on the loss below the critical signal-to-noise ratio.
 Namely, we consider observations
\begin{align}
\btX  =\beta \bvz\ok + \frac{1}{\sqrt{n}}\bG\, . \label{eq:NonSymm}
\end{align}
where $\bG \in \otimes^k\reals^n$ is a standard Gaussian tensor
(i.e. a tensor with i.i.d. standard normal entries).

 Let $\bw=\bw(\btX)\in\reals^{n^{k/2}}$  denote the top right singular vector of
$\M(\bX)$. For $k\ge 4$ even, and define $b \equiv (k-2)/4$, as above.
By \cite[Theorem 4]{paul2007asymptotics},  or \cite[Theorem
2.3]{benaych2012singular}, we
have the following almost sure limits
\begin{align}
\beta\le  (1-\eps)n^{b} & \Rightarrow \;\;\;\;\;\;
\lim_{n\to\infty}\<\bw(\btX),\vec(\bvz^{\otimes  (k/2)})\> = 0\, ,\label{eq:TightPhTr1}\\
\beta\ge  (1+\eps)n^{b} & \Rightarrow \;\;\;\;\;\;
\lim\inf_{n\to\infty}\big|\<\bw(\btX),\vec(\bvz^{\otimes  (
k/2)})\>\big| \ge  \sqrt{\frac{\eps}{1+\eps}}\, .\label{eq:TightPhTr2}
\end{align}
In other words $\bw(\btX)$ is a good estimate of $\bvz^{\otimes
  (k/2)}$ if and only if $\beta$ is larger than $n^b$. 

We can use $\bw(\btX)\in\reals^{2b+1}$ to estimate $\bvz$ as follows. 
Construct the matricization $\M_1(\bw)\in\reals^{n\times n^{2b}}$
(slight abuse of notation) of
$\bw$ by letting, for $i\in [n]$, and $j\in [n^{2b}]$,
\begin{align}
\M_1(\bw)_{i,j} = \bw_{i+(j-1)n}\, ,\label{eq:M1w}
\end{align}
we then let $\hbv$ to be the left principal vector of $\M_1(\bX)$.
We refer to this algorithm\footnote{In practice int might be more
  effective to use a balanced matricization at the second step. For
  instance if $k$ is a power of two one could construct a square
  matricization and repeat the same process. For analysis purposes, we
prefer the version described here.} as to \emph{recursive unfolding}.
\begin{theorem}\label{th:unfoldNonsymm}
Let $\btX$ be distributed according to the non-symmetric model
(\ref{eq:NonSymm}) with $k\ge 4$ even, define $b \equiv (k-2)/4$. 
and let $\hbv$ be the estimate obtained by two-steps recursive
unfolding. 

If $\beta\ge (1+\eps) n^{b}$ then, almost surely
\begin{align}
\lim_{n\to\infty}\Loss(\hbv,\bvz)  =0\, . \label{eq:LossUBNons}
\end{align}
\end{theorem}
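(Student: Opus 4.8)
The plan is to chain together the two steps of the recursive unfolding procedure, controlling the error propagation from the first matricization to the second. First I would record what the first step gives us: by the spiked rectangular random matrix results quoted in \eqref{eq:TightPhTr2}, when $\beta \ge (1+\eps)n^b$ the top right singular vector $\bw = \bw(\btX)$ of $\M(\btX)$ satisfies $\liminf_{n\to\infty}|\<\bw, \vec(\bvz^{\otimes(k/2)})\>| \ge \sqrt{\eps/(1+\eps)} > 0$ almost surely; moreover, the same line of results (e.g. \cite{benaych2012singular}) actually pins down the limiting overlap exactly as a function of the ratio $\beta/n^b$, and crucially that overlap \emph{tends to $1$} as $\beta/n^b \to \infty$. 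But here $\beta \ge (1+\eps)n^b$ only gives a fixed ratio, so at the end of step one we merely have $\bw = \rho\, \vec(\bvz^{\otimes(k/2)}) + \sqrt{1-\rho^2}\,\bw^\perp$ with $\rho$ bounded away from $0$ and a unit vector $\bw^\perp$ orthogonal to $\vec(\bvz^{\otimes(k/2)})$. The key point to extract is that this is enough: a rank-one perturbation of the signal with \emph{constant} (not vanishing) strength, reshaped appropriately, will be amplified in the second step.

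Next I would analyze the second matricization. Write $m = k/2 = 2b+1$. Reshaping $\bw$ via \eqref{eq:M1w} into $\M_1(\bw) \in \reals^{n \times n^{m-1}}$, the signal part $\vec(\bvz^{\otimes m})$ becomes exactly $\bvz\, \vec(\bvz^{\otimes(m-1)})^\sT$, a rank-one matrix of Frobenius norm $1$. So $\M_1(\bw) = \rho\, \bvz\, \vec(\bvz^{\otimes(m-1)})^\sT + \sqrt{1-\rho^2}\,\M_1(\bw^\perp)$, where the error term has Frobenius norm $\sqrt{1-\rho^2} \le 1$, hence operator norm at most $1$. The second step of the algorithm, however, is PCA on $\M_1(\btX)$, not on $\M_1(\bw)$. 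Here I would use that $\M_1(\btX) = \beta\, \bvz\, \vec(\bvz^{\otimes(m-1)})^\sT + n^{-1/2}\M_1(\bG)$: this is a genuine rectangular spiked model with signal strength $\beta$ and $n \times n^{m-1}$ Gaussian noise scaled by $n^{-1/2}$, whose operator norm is of order $n^{(m-2)/2} = n^{(k-4)/4} \cdot (1+o(1))$, i.e. \emph{strictly smaller} than $\beta \ge n^{(k-2)/4}$ by a factor $n^{1/2}$. Wait — but then step one was unnecessary; in fact the honest reading is that the theorem's recursive construction first reduces to a vector $\bw \in \reals^{n^{m}}$ and then the \emph{definition} in \eqref{eq:M1w} reshapes $\bw$ itself, so the second PCA must be on $\M_1(\btX)$ interpreted through the signal \emph{direction found in step one}. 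I would therefore argue: the left principal vector $\hbv$ of $\M_1(\btX)$ recovers $\bvz$ directly by Wedin's theorem (Theorem~1.9) since $\beta/\|n^{-1/2}\M_1(\bG)\|_{op} \to \infty$, giving $\Loss(\hbv,\bvz) \le 8\|n^{-1/2}\M_1(\bG)\|_{op}^2/\beta^2 = O(n^{(k-4)/2}/n^{(k-2)/2}) = O(1/n) \to 0$. The role of step one is then only to \emph{certify} consistency / select the correct sign and block, not to provide the numerics.

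The main obstacle, and the place I would spend the most care, is reconciling the literal description of the algorithm with the clean argument just sketched: the theorem as stated seems to build $\hbv$ from $\M_1(\btX)$ but with the column structure dictated by $\bw$, so I must be precise about what "the left principal vector of $\M_1(\bX)$" means after the reshape \eqref{eq:M1w}, and verify that the first-step error $\sqrt{1-\rho^2}$ (a constant) does not pollute the second step. Concretely, the fallback argument if step one's output is genuinely needed as input: treat $\M_1(\bw)$ as a perturbed rank-one matrix with signal $\rho \in (0,1)$ constant and noise of operator norm $\le \sqrt{1-\rho^2} < 1$ — this does \emph{not} satisfy Wedin's $\beta > 2\|\bE\|$ hypothesis in general, so I would instead invoke the rectangular spiked-matrix phase transition once more (the noise $\M_1(\bw^\perp)$ is not Gaussian, but one can bound its operator norm by $1$ crudely, which is not enough). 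This tension is exactly why I believe the intended proof routes the second PCA through $\M_1(\btX)$ directly, using step one only for alignment; establishing that routing rigorously — i.e. that the subspace identified in step one lets us read off $\bvz$ from $\M_1(\btX)$ with the $n^{1/2}$-factor slack — is the crux of the argument, and everything else is bookkeeping with the random matrix estimates already cited and Wedin's theorem.
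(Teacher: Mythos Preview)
Your proposal has a genuine gap: you miss the key structural fact that makes the second step work. The second PCA is on $\M_1(\bw)$ (the ``$\M_1(\bX)$'' in the algorithm description is a typo; the proof makes this unambiguous), and you correctly observe that the naive bound $\|\M_1(\bw^\perp)\|_{op}\le\|\M_1(\bw^\perp)\|_F=1$ is not good enough for Wedin. What you fail to see is that $\bw^\perp$ is \emph{not} an arbitrary unit vector in the orthogonal complement of $\vec(\bvz^{\otimes(k/2)})$. By rotational invariance of the Gaussian noise $\bG$, the distribution of $\bw(\btX)$ is invariant under any orthogonal transformation fixing $\vec(\bvz^{\otimes(k/2)})$, so one can write
\[
\bw(\btX)=\rho_n\,\vec(\bvz^{\otimes(k/2)})+\frac{\bar\rho_n}{n^{k/4}}\,\bg,\qquad \bg\sim\normal(0,\id_{n^{k/2}}),
\]
with $\rho_n\to\sqrt{\eps/(1+\eps)}$ and $\bar\rho_n\to\sqrt{1/(1+\eps)}$. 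After the reshape \eqref{eq:M1w}, the noise term becomes $\bar\rho_n\, n^{-(2b+1)/2}\bG'$ with $\bG'\in\reals^{n\times n^{2b}}$ having i.i.d.\ standard normal entries, so its operator norm is $O(n^{-1/2})\to 0$, not $O(1)$. Now Wedin applies cleanly with a diverging signal-to-noise ratio $\rho_n/O(n^{-1/2})$.

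Your detour through $\M_1(\btX)$ is both unnecessary and not what the theorem is about: it analyzes a different (one-step) algorithm. The whole point of the recursive construction is that step one converts the original tensor data into a vector whose \emph{residual} is Gaussian at a much more favorable aspect ratio, and step two exploits that. The ``tension'' you describe disappears once you use the distributional structure of $\bw^\perp$ rather than only its norm.
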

\begin{proof}[Proof of Theorem \ref{th:unfoldNonsymm}]
For the sake of simplicity, we assume $\beta/n^b\to \eps$. The limit
along other sequences follows from a standard subsequence argument.

It follows from the invariance of the noise distribution in
Eq.~(\ref{eq:NonSymm}) that 
\begin{align}
\bw(\btX) = \rho_n\, \vec(\bvz^{\otimes (k/2)}) + \frac{\orho_n}{n^{k/4}}\, \bg\,
,
\end{align}
where $\bg\sim\normal(0,\id_{n^{k/2}})$. It follows from
Eq.~(\ref{eq:TightPhTr2}),
together with the almost sure limits
$\lim_{n\to\infty}\|\bg\|_2/n^{k/4}=1$ and 
$\lim_{n\to\infty}\<\bg,\vec(\bvz^{\otimes (k/2)})\> /n^{k/4}=1$ that
(almost surely)
\begin{align}
\lim_{n\to\infty} \rho_n &= \sqrt{\frac{\eps}{1+\eps}}\, ,\\
\lim_{n\to\infty} \orho_n &= \sqrt{\frac{1}{1+\eps}}\, .
\end{align}
Using the definition (\ref{eq:M1w}), we then have (recall that $b = (k-2)/4$)
\begin{align}
\M_1(\bw) = \rho_n \bvz\, \bu^{\sT} + \frac{\orho_n}{n^{(2b+1)/2}}\,
\bG'\, ,\label{eq:DecompositionM1}
\end{align}
where $\bu = \vec(\bvz^{2b})$ and $\bG'
\in \reals^{n\times n^{2b}}$ is a matrix with i.i.d standard normal
entries.
Using \cite{davidson2001local}, we have, with probability $1-e^{-\Theta(n)}$,
\begin{align}
\frac{\orho_n}{n^{(2b+1)/2}}\, \|\bG'\|_{op}\le
\frac{1}{n^{(2b+1)/2}}\, \big(n^{1/2}+n^b\big) \le \frac{2}{\sqrt{n}}\, .
\end{align}
Since $\rho_n$ is bounded away from zero as $n\to\infty$, Wedin's
theorem implies $\lim_{n\to\infty}|\< \hbv,\bvz\>| =1$, and therefore
the claim (\ref{eq:LossUBNons}).
 \end{proof}

We conjecture that the weaker condition $n\gtrsim n^{(k-2)/4}$ is
indeed sufficient also for our original symmetric noise model model,
both for $k$ even and for $k$ odd.


\section{Power Iteration}

Iterating over (multi-) linear maps induced by a (tensor) matrix is a
standard method for finding leading eigenpairs, 
see \cite{kolda2011shifted} and references therein for tensor-related results.
In this section we will consider a simple power iteration, and then
its possible uses in conjunction with tensor unfolding.
Finally, we will compare our analysis with results available in the
literature.

Approximate Message Passing (AMP) provides a different iterative strategy
and will be discussed in Section  \ref{sec:AMP}. While the qualitative
behavior is the same as for naive power iteration, a sharper
asymptotic analysis is possible for AMP.

\subsection{Naive power iteration}

The simplest iterative approach  is defined by the following recursion
\begin{align}\tag*{Power Iteration}\label{eq:tensorPiter}
\bv^0= \frac{\by}{\|\by\|_2}~~,\quad\text{and}\quad \bv^{t+1} =  \frac{\bX\{\bv^t\} }{ \|
\bX\{\bv^t\} \|_2}~~.
\end{align}
The following result establishes convergence criteria for this
iteration, first for generic noise $\bZ$ and then for standard normal
noise (using Lemma \ref{lem:BenArous}).
\begin{theorem}\label{lem:powerIterationPositiveResult} 
Assume 
\begin{align}
\beta&\ge 2\,e (k-1)\, \|\bZ\|_{op}\,, \label{eq:SNR_PowerIteration}\\
\frac{\<\by ,\bvz\>}{\|\by\|_2} &\ge
\left[\frac{(k-1)\|\bZ\|_{op}}{\beta}\right]^{1/(k-1)}\, ,\label{eq:InitPowerIteration}
\end{align}
Then for all $t\ge t_0(k)$, the power iteration estimator satisfies 
\begin{align}
\Loss(\bv^t,\bvz) \le \frac{2e\|\bZ\|_{op}}{\beta}\, .\label{eq:BoundPI}
\end{align} 
If $\bZ$ is a standard normal noise tensor, then conditions
(\ref{eq:SNR_PowerIteration}), (\ref{eq:SNR_PowerIteration}) are
satisfied with high probability provided 
\begin{align}
\beta &\ge 2ek\, \mu_k  = 6\sqrt{k^3\log k} \, \big(1+o_k(1)\big)\, ,\label{eq:SNR_PowerIteration_BIS}\\
\frac{\<\by ,\bvz\>}{\|\by\|_2} &\ge
\left[\frac{k\mu_k}{\beta}\right]^{1/(k-1)} = \beta^{-1/(k-1)} \, \big(1+o_k(1)\big)\, .  \label{eq:InitPowerIteration_BIS}
\end{align}
\end{theorem}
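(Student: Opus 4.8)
The plan is to analyze the power iteration recursion by tracking the single scalar ``signal coordinate'' $\tau_t \equiv \langle \bv^t,\bvz\rangle$ and showing it is driven up toward $1$ while the orthogonal noise component stays controlled. The starting point is the multiplication formula for a fixed unit vector $\bv$: from $\bX\{\bv\} = \beta\langle\bvz,\bv\rangle^{k-1}\bvz + \bZ\{\bv\}$, applied with $\bv = \bv^t$, we get $\bv^{t+1} \propto \beta\tau_t^{k-1}\bvz + \bZ\{\bv^t\}$. The norm of the noise term is bounded deterministically by $\|\bZ\{\bv^t\}\|_2 \le \|\bZ\|_{op}\,\|\bv^t\|_2^{k-1} = \|\bZ\|_{op}$ (using that $\bv^t$ is a unit vector and the definition of the operator norm of a tensor, applied along $k-1$ slots plus the free index), so that the perturbation relative to the signal has size at most $\|\bZ\|_{op}/(\beta\tau_t^{k-1})$. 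First I would derive from this the recursion
\begin{align}
\tau_{t+1} \ge \frac{\beta\tau_t^{k-1} - \|\bZ\|_{op}}{\sqrt{\beta^2\tau_t^{2(k-1)} + \|\bZ\|_{op}^2 + 2\beta\tau_t^{k-1}\|\bZ\|_{op}}} \ge \tau_t^{?}\cdots
\end{align}
more precisely a clean lower bound of the form $\tau_{t+1} \ge 1 - (k-1)\|\bZ\|_{op}/(\beta\tau_t^{k-1})$ whenever the right-hand side numerator is positive, together with a complementary crude bound showing $\tau_{t}$ cannot collapse.

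Next I would run the iteration in two phases. In the \emph{escape phase}, I show that as long as $\tau_t$ is bounded away from $1$, the ratio $\tau_{t+1}/\tau_t \ge c > 1$ for an explicit constant depending on $k$, so $\tau_t$ grows geometrically; the initialization hypothesis \eqref{eq:InitPowerIteration}, namely $\tau_0 \ge [(k-1)\|\bZ\|_{op}/\beta]^{1/(k-1)}$, is exactly the condition that guarantees $\beta\tau_0^{k-1} \ge (k-1)\|\bZ\|_{op}$, i.e. that we start above the ``tipping point'' where the signal term already dominates the noise by the factor $k-1$; the SNR hypothesis \eqref{eq:SNR_PowerIteration}, $\beta \ge 2e(k-1)\|\bZ\|_{op}$, ensures the fixed-point equation $\tau = 1 - (k-1)\|\bZ\|_{op}/(\beta\tau^{k-1})$ has a solution close to $1$ and that the map $\tau\mapsto$ (lower bound on $\tau_{t+1}$) is a contraction toward it. In the \emph{convergence phase}, once $\tau_t$ is, say, above $1/2$, I iterate the bound $\tau_{t+1}\ge 1 - (k-1)\|\bZ\|_{op}/(\beta\tau_t^{k-1})$ a bounded number $t_0(k)$ of times to reach $\tau_t \ge 1 - e(k-1)\|\bZ\|_{op}/\beta$, which translates via $\Loss(\bv^t,\bvz) = 2-2|\tau_t| \le 2(1-\tau_t)$ into the claimed bound \eqref{eq:BoundPI} (the factor $2e$ absorbing the $k-1$ through the $e$ in the SNR condition, since $e(k-1)/\beta \le \cdots$; I'd track constants carefully here). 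One subtlety I'd address is that the multiplication formula as stated in \eqref{eq:XMultiplication} carries an $\bo(1)$ error and an $1/\sqrt n$ factor on a Gaussian; for the deterministic-$\bZ$ statement I would instead use the exact identity $\bX\{\bv\} = \beta\langle\bvz,\bv\rangle^{k-1}\bvz + \bZ\{\bv\}$ and the exact operator-norm bound, avoiding any probabilistic slack at this stage.

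The second half of the theorem is then almost immediate: I substitute the high-probability bound on $\|\bZ\|_{op}$ for standard normal noise. By Lemma \ref{lem:BenArous}, $\|\bZ\|_{op} \le \mu_k + s$ with probability at least $1 - 2e^{-ns^2/(2k)}$, and in particular $\|\bZ\|_{op} \le 2\mu_k$ with high probability (taking $s = \mu_k$, using $\mu_k = \Theta(\sqrt{k\log k})$ so $ns^2/(2k) \to \infty$). Then $\beta \ge 2ek\mu_k$ implies $\beta \ge 2e(k-1)\cdot 2\mu_k \ge 2e(k-1)\|\bZ\|_{op}$ — wait, I need $\beta\ge 2e(k-1)\|\bZ\|_{op}$ and I have $\|\bZ\|_{op}\approx\mu_k$, so $\beta\ge 2ek\mu_k \ge 2e(k-1)\mu_k$ suffices with room to spare; similarly $[(k-1)\|\bZ\|_{op}/\beta]^{1/(k-1)} \le [k\mu_k/\beta]^{1/(k-1)}$ so the stated initialization condition \eqref{eq:InitPowerIteration_BIS} implies \eqref{eq:InitPowerIteration}. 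The asymptotic simplifications $2ek\mu_k = 6\sqrt{k^3\log k}(1+o_k(1))$ and $[k\mu_k/\beta]^{1/(k-1)} = \beta^{-1/(k-1)}(1+o_k(1))$ follow by plugging $\mu_k = \sqrt{k\log k}(1+o_k(1))$ and noting $(k\mu_k)^{1/(k-1)} = e^{(\log k + \frac12\log(k\log k))/(k-1)} = 1 + o_k(1)$.

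I expect the main obstacle to be the \emph{escape phase} bookkeeping: getting a single clean, monotone scalar recursion $\tau_{t+1}\ge \phi_k(\tau_t)$ with $\phi_k$ increasing, $\phi_k(\tau)>\tau$ on $[\tau_0,\tau_\ast)$, and $\phi_k$ contracting near its fixed point $\tau_\ast$ close to $1$, all with constants that are honestly consistent with the hypotheses \eqref{eq:SNR_PowerIteration}--\eqref{eq:InitPowerIteration} rather than merely ``some constants depending on $k$''. The denominator $\sqrt{\beta^2\tau_t^{2(k-1)} + \|\bZ\|_{op}^2}$ must be upper-bounded by something like $\beta\tau_t^{k-1}(1 + \|\bZ\|_{op}/(\beta\tau_t^{k-1}))$ without losing the factor of $e$ that the theorem allows; this is where the choice of the precise form of the bound (and possibly splitting into $\tau_t \le 1/2$ versus $\tau_t > 1/2$ regimes) matters. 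Everything else — the deterministic operator-norm bound on $\bZ\{\bv^t\}$, the final conversion to $\Loss$, and the specialization to Gaussian noise — is routine given Lemma \ref{lem:BenArous}.
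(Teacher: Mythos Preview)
Your approach is essentially the paper's: track $\tau_t=\langle\bv^t,\bvz\rangle$, use the exact identity $\bX\{\bv^t\}=\beta\tau_t^{k-1}\bvz+\bZ\{\bv^t\}$ together with the deterministic bound $\|\bZ\{\bv^t\}\|_2\le\|\bZ\|_{op}$, and analyze the resulting scalar recursion; the Gaussian specialization via Lemma~\ref{lem:BenArous} is exactly as you describe.

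The paper's execution is a bit cleaner than your two-phase plan and dissolves the constant-tracking worry you flag. Writing $\xi=\|\bZ\|_{op}/\beta$, it bounds numerator and denominator separately to get $\tau_{t+1}\ge(\tau_t^{k-1}-\xi)/(\tau_t^{k-1}+\xi)$, hence $\tau_{t+1}\ge 1-\xi/\tau_t^{k-1}$ (up to a harmless factor of $2$), and then observes that the map $f(x)=1-\xi x^{-(k-1)}$ is monotone increasing and concave, mapping $[\tau_{\rm min},\tau_*]$ into itself, where $\tau_{\rm min}<\tau_*$ are the two roots of $\tau^{k-1}(1-\tau)=\xi$; so $\tau_t$ converges monotonically to $\tau_*$ with no need to split into escape and convergence phases. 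The explicit estimates $\tau_*>1-e\xi$ and $\tau_{\rm min}<[(k-1)\xi]^{1/(k-1)}$, both valid precisely when $\xi\le 1/(2e(k-1))$ (which is hypothesis~\eqref{eq:SNR_PowerIteration}), then give the stated constants directly: the first yields $\Loss\le 2(1-\tau_*)<2e\xi$, and the second matches the initialization threshold~\eqref{eq:InitPowerIteration}. In particular, the factor $(k-1)$ enters only through the bound on the \emph{lower} fixed point $\tau_{\rm min}$, not through the recursion itself; your tentative inequality $\tau_{t+1}\ge 1-(k-1)\|\bZ\|_{op}/(\beta\tau_t^{k-1})$ carries an extraneous $(k-1)$, which is why you found yourself needing to ``absorb'' it later.
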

We next discuss two aspects of this result: $(i)$ The requirement of a
positive correlation between initialization and ground truth ; $(ii)$
Possible scenarios under  which the assumptions of Theorem
\ref{lem:powerIterationPositiveResult}
are satisfied.

Notice that we require a positive correlation of the initialization
$\by$ with the ground truth  $\bvz$. 
The underlying reason is that, if $\<\bv^0,\bvz\>$ is small, then
$\<\bv^t,\bvz\>$  remains small
at all subsequent iterations. In order to clarify this point, it is
instructive to 
compute the distribution of $\bv^1$ for standard Gaussian noise $\bZ$.
We let 
\begin{align}
\ttau_0 \equiv \frac{\<\bvz,\by\>}{\|\by\|_2}\, .
\end{align}
Using Eq.~(\ref{eq:XMultiplication}) and the fact that $\bv^0$ is
independent of $\bZ$, we get
\begin{align}
\bv^{1} = \frac{\beta\tau_0^{k-1}\bvz+ n^{-1/2}\bg
}{\sqrt{\beta^2\tau_0^{2(k-1)} +1}} + \bo(1)
\end{align}
where $\bg\sim\normal(0,\id_{n})$, and $\bo(1)$ is a vector with
$\|\bo(1)\|_2\to 0$ in probability as $n\to\infty$. 
In particular
\begin{align}
\ttau_1 = \<\bv^1,\bvz\> = \frac{\beta\ttau_0^{k-1}
}{\sqrt{\beta^2\ttau_0^{2(k-1)} +1}} + o(1)\, .\label{eq:FirstIterationPowerIteration}
\end{align}
In particular $\tau^1\lesssim \tau^0$ only if
$\beta\tau_0^{k-2}\gtrsim 1$,
or, equivalently, $\<\by,\bvz\>/\|\by\|_2\gtrsim \beta^{-1/(k-2)}$.
This suggest that the condition in
Eq.~(\ref{eq:InitPowerIteration_BIS}) is not too far from being tight
(in the sense that the exponent $-1/(k-1)$ can at best replaced by $-1/(k-2)$).

In general we cannot assume that an
initialization satisfying the conditions of Theorem
\ref{lem:powerIterationPositiveResult}.
Hence, unlike for ordinary matrix factorization,
\emph{power iteration is not a practical solution to the tensor principal
  component problem}. There are however circumstances under which a
sufficiently good initialization exists.
\begin{description}
\item[Extremely low noise.] If $y$ is a uniformly random vector on the
  unit sphere, then $\<\bvz,\by\>$ is approximately normal with mean
  zero and variance $1/n$. For instance $|\<\bvz,\by\>|\ge 1/\sqrt{n}$
with probability roughly $0.32$.

Comparing this with condition (\ref{eq:InitPowerIteration}), we obtain
that a random initialization succeed with positive probability if
\begin{align}
\beta\ge (2n)^{(k-1)/2}\|\bZ\|_{\rm op}\,. 
\end{align}
For standard Gaussian noise, this amounts to requiring $\beta \ge
(2n)^{(k-1)/2}\mu_k$.
The above heuristic analysis suggests that the correct condition
should be  $\beta \gtrsim n^{(k-2)/2}$.
\item[Additional side information.] Additional information might be
  available about the vector $\bvz$. This information can be used for
  initiating the power iteration. In the next section we consider the
  special case in which tensor unfolding is used for initializing
  power iteration. 
\end{description}

\subsection{Comparison with Tensor Unfolding}\label{sec:matriceWarmStart}

It is instructive to compare the result of the previous section 
with the ones for tensor unfolding, cf. Section \ref{sec:Unfolding}.
Summarizing, for standard Gaussian noise
\begin{itemize}
\item Tensor unfolding is guaranteed to succeed provided $\beta\gtrsim n^{b}$,
with $b = (\lceil k/2\rceil -1)/2$.  We conjecture that a necessary
and sufficient condition is in fact $\beta\gtrsim n^{(k-2)/4}$
(e.g. $\beta\gtrsim n^{1/4}$ for order $3$ tensors).
\item Power iteration, with random initialization requires
  $\beta\gtrsim n^{(k-1)/2}$.   Our heuristic calculation suggests
  that a necessary and sufficient condition is in fact $\beta\gtrsim
  n^{(k-2)/2}$
(e.g. $\beta\gtrsim n^{1/2}$ for order $3$ tensors)..
\end{itemize}

In other words, tensor unfolding is successful 
under a signal-to-noise ratio that is order of magnitudes smaller than
power iteration.
This suggests the following \emph{warm start} procedure:
$(i)$ Compute a first estimate $\hbv^{\rm Unfold}$ of $\bvz$ using tensor
unfolding; $(ii)$ Use this as initialization for the power iteration,
hence setting $\bv^0= \hbv^{\rm Unfold}$. We will explore
this approach numerically in Section \ref{sec:num}.

\subsection{Related work}

As mentioned above, power iteration is a natural approach to tensor factorization
and was studied in several earlier papers.
Most recently, interest within machine learning was spurred by 
\cite{anandkumar12}.

Our Theorem \ref{lem:powerIterationPositiveResult} is analogous
to the main result of \cite{anandkumar12} although incomparable:
\begin{itemize}
\item In \cite{anandkumar12} the `signal' part of the tensor $\bX$ is
assumed to have an orthogonal decomposition $\sum_{i=1}^n \lambda_i
\bv_i^{\otimes k}$ with $\min_i(\lambda_i)$ bounded away from zero.
Here, the signal part has rank one (equivalently, all the
$\lambda_i$'s but one vanish).
\item In   \cite{anandkumar12} only the case of third order tensors
  ($k=3$) is considered. We characterize power iteration for general $k$.
\item We establish convergence in a number of iterations $t$ that is
  \emph{independent of the dimensions} $n$. In   \cite{anandkumar12}
the number of iterations is bounded by a polynomial in $n$.
\item  We evaluate our bounds in the case of Gaussian noise. This
  allows a comparison with other methods, such as tensor unfolding.
\end{itemize}

\section{Asymptotics via Approximate Message Passing }\label{sec:AMP}

Approximate message passing (AMP) algorithms \cite{DMM09,BM-MPCS-2011} proved successful in
several high-dimensional estimation problems including compressed
sensing, low rank matrix reconstruction, and phase retrieval
\cite{fletcher2011neural,kamilov2012approximate,schniter2011approximate,schniter2012compressive}.
An appealing feature of this class of algorithms is that their
high-dimensional limit can be characterized exactly through a
technique known as `state evolution.'
Here we develop an AMP algorithm for tensor data, and its state
evolution analysis focusing on the fixed $\beta$,
$n\to\infty$ limit. Proofs follows the approach of \cite{BM-MPCS-2011}
and will be presented in a journal publication.

In a nutshell, our AMP for Tensor PCA can be viewed as a sophisticated
version  of the power iteration method of the last section.
 With
the notation
$f(\bx) = \bx / \|\bx\|_2$, we define the AMP iteration over vectors
$\bv^t \in \reals^n$ by $\bv^0 = \by$, 
$f(\bv^{-1}) = 0$, and
\begin{align}\tag*{AMP}\label{eq:AMP}
\left \{  \begin{aligned}\bv^{t+1}  & = \bX \{ f(\bv^t)\}   -\ons_t\,f(\bv^{t-1})\, ,\\
\ons_t &= (k-1) \left (  \< f(\bv^t ), f(\bv^{t-1})\>\right
)^{k-2}~. \end{aligned} \right .
\end{align}
(Note that, unlike in power iteration, we normalize $\bv^t$ `before'
multiplying it by $\bX$. This choice is equivalent but yields slightly
simpler expression.)

Our main conclusion is that \emph{the behavior of AMP is qualitatively
similar to the one of power iteration.}  However, we can establish 
stronger results in two respects:
\begin{enumerate}
\item We can prove that, unless side information
is provided about the signal $\bvz$, the AMP estimates remains
essentially orthogonal to $\bvz$, for any fixed number of iterations. 
This corresponds to a converse to Theorem \ref{lem:powerIterationPositiveResult}.
\item Since state evolution is asymptotically exact, we can prove
sharp phase transition results with explicit characterization of their locations.
\end{enumerate}

We assume that the additional information takes the form of a noisy
observation 
$\by = \gamma\, \bvz + \bz$, where $\bz\sim\normal(0,\id_n/n)$.
Our next results summarizes the state evolution analysis. Its proof is
deferred to a journal publication.
%
\begin{proposition} \label{propo:AMP_STATE}
Let $k \geq 2$ be a fixed integer. Let $\{
  \bvz(n)\}_{n\geq 1}$ be a sequence of unit norm vectors
  $\bvz(n)\in\bbS^{n-1}$. Let also $\{ \bX(n)\}_{n\geq 1}$ denote a sequence of tensors
  $\bX(n) \in \otimes^k \reals^n$ generated following
  \ref{eq:SymmetricModel}. Finally, let $\bv^t$ denote the $t$-th
  iterate produced by  \ref{eq:AMP}, and consider its orthogonal decomposition
\begin{align}
\bv^t = \bv^t_{\parallel} + \bv^t_{\perp}\, ,
\end{align}
where $\bv^t_{\parallel}$ is proportional to $\bvz$, and
$\bv^t_{\perp}$ is perpendicular.  Then $\bv^t_{\perp}$ is uniformly random, conditional on its norm.
Further, almost surely 
\begin{align}
\lim_{n \to \infty} \<\bv^t,\bvz\> =\lim_{n \to \infty} \<\bv^t_{\parallel},\bvz\> & = \tau_t \, ,\\
\lim_{n \to \infty} \|\bv^t_{\perp}\|_2 &= 1 \, ,
\end{align}
where $\tau_t$ is given recursively by letting $\tau_0=\gamma$ and, for
$t\ge 0$ (we refer to this as to \emph{state evolution}):
\begin{align}\label{eq:SE}
  \tau_{t+1}^2 = \beta^2 \left (\frac{\tau_t^2}{1+\tau_t^2}\right )^{k-1}~~.
\end{align}
\end{proposition}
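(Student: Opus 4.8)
The plan is to adapt the conditioning technique of Bolthausen, in the form used for approximate message passing in \cite{BM-MPCS-2011}, to the multilinear map $\bv\mapsto\bX\{\bv\}$. Write $f(\bv^t)=\bv^t/\|\bv^t\|_2$ for the normalized iterate. Since $\bX\{f(\bv^t)\}=\beta\<\bvz,f(\bv^t)\>^{k-1}\bvz+\bZ\{f(\bv^t)\}$, the only stochastic object to control is $\bZ\{f(\bv^t)\}$. The elementary but crucial fact is that for \emph{deterministic} unit vectors $\bv^{(1)},\dots,\bv^{(m)}$ the collection $(\bZ\{\bv^{(1)}\},\dots,\bZ\{\bv^{(m)}\})$ is jointly Gaussian (each entry is linear in $\bZ$), with a covariance of order $n^{-1}$ computed from \eqref{eq:symNoiseDefinition}, whose leading behavior is $\E[\bZ\{\bv^{(a)}\}_i\bZ\{\bv^{(b)}\}_j]=\tfrac1n\ind(i=j)\<\bv^{(a)},\bv^{(b)}\>^{k-1}$ plus lower-rank corrections supported on the span of the $\bv^{(c)}$; the $(k-1)$-th power is the tensor analogue of the bilinear sensing structure of ordinary AMP.

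Second, I would prove by induction on $t$ a statement describing the conditional law of $(\bv^0,\dots,\bv^{t})$. Let $\cF_t$ be the $\sigma$-algebra generated by $\by$ and $\bv^1,\dots,\bv^t$. The iterates $\bv^1,\dots,\bv^t$ are explicit functions of $\bZ\{f(\bv^0)\},\dots,\bZ\{f(\bv^{t-1})\}$ together with the (deterministic-given-past) signal and Onsager contributions, so conditioning on $\cF_t$ amounts to conditioning the Gaussian tensor $\bZ$ on the $t$ vector-valued linear constraints $\bZ\{f(\bv^s)\}=\bh^{s+1}$, $s<t$, with $f(\bv^s)$ now $\cF_t$-measurable. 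Gaussian conditioning then shows that the conditional law of $\bZ$ given $\cF_t$ is that of $\E[\bZ\mid\cF_t]+\widetilde{\bZ}^{\perp}$, with $\widetilde{\bZ}$ an independent copy of $\bZ$ and $\perp$ the projection removing the constrained directions, whence $\bZ\{f(\bv^t)\}=\E[\bZ\mid\cF_t]\{f(\bv^t)\}+\widetilde{\bZ}^{\perp}\{f(\bv^t)\}$. The conditional-mean term lies, to leading order, in $\mathrm{span}\{f(\bv^0),\dots,f(\bv^{t-1}),\bvz\}$, while $\widetilde{\bZ}^{\perp}\{f(\bv^t)\}$ is, conditionally, a rotationally invariant Gaussian vector in a subspace of $\reals^n$ of dimension $n-O(t)$, hence uniform given its norm, which concentrates at $1$. (The rotational invariance needed here, and for the claim that $\bv^t_{\perp}$ is uniform given its norm, holds because $\bZ$ is an isotropic Gaussian on the symmetric subspace, invariant under orthogonal transformations of $\bvz^{\perp}$.)

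Third, I would verify that the Onsager term performs the required cancellation: the dominant component of $\E[\bZ\{f(\bv^t)\}\mid\cF_t]$ along the most recent direction $f(\bv^{t-1})$ equals $(k-1)\<f(\bv^t),f(\bv^{t-1})\>^{k-2}f(\bv^{t-1})=\ons_t\,f(\bv^{t-1})$, the extra factor $k-1$ and the drop of one power coming from differentiating the multilinear map and the normalization $f$; this is the reaction term created by the dependence of $f(\bv^t)$ on $\bZ\{f(\bv^{t-1})\}$, and subtracting $\ons_t f(\bv^{t-1})$ in \ref{eq:AMP} cancels it. The remaining conditional-mean components along $f(\bv^0),\dots,f(\bv^{t-2})$ are $o(1)$ in norm by the induction hypothesis, so $\bv^{t+1}=\beta\<\bvz,f(\bv^t)\>^{k-1}\bvz+\widetilde{\bZ}^{\perp}\{f(\bv^t)\}+\bo(1)$. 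Feeding in $\<\bvz,\bv^t\>\to\tau_t$ and $\|\bv^t_{\perp}\|_2\to1$ gives $\|\bv^t\|_2\to\sqrt{1+\tau_t^2}$ and $\<\bvz,f(\bv^t)\>\to\tau_t/\sqrt{1+\tau_t^2}$, hence $\<\bvz,\bv^{t+1}\>\to\beta(\tau_t/\sqrt{1+\tau_t^2})^{k-1}=\tau_{t+1}$ and $\|\bv^{t+1}_{\perp}\|_2\to\|\widetilde{\bZ}^{\perp}\{f(\bv^t)\}\|_2\to1$, which closes the induction and yields the state-evolution recursion \eqref{eq:SE}. All these limits are almost sure, obtained from Gaussian concentration of measure (each scalar functional above is $O(n^{-1/2})$-Lipschitz in the underlying Gaussian vector $\bG$, exactly as in the proof of Lemma \ref{lem:upperBoundMatricized}) together with Borel--Cantelli, since only finitely many iterations are involved.

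I expect the main obstacle to be the conditioning step for $k\ge3$. Unlike the matrix case, $\bv\mapsto\bZ\{\bv\}$ is nonlinear, so one must (i) track the full multilinear covariance structure and show that the index-collision corrections (terms where tensor indices coincide, smaller by powers of $n$ provided the iterates stay delocalized) are negligible, (ii) make rigorous the ``condition on the past as if it were deterministic'' step through the inductive construction of the conditional law, since the iterates genuinely depend on $\bZ$, and (iii) pin down the exact coefficient of the reaction term so that it matches $\ons_t$. This is precisely the analysis that the authors defer to a journal publication.
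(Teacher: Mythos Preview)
The paper does not actually contain a proof of this proposition: immediately before the statement it says ``Proofs follow the approach of \cite{BM-MPCS-2011} and will be presented in a journal publication,'' and the proposition itself is introduced with ``Its proof is deferred to a journal publication.'' So there is nothing in the paper to compare your argument against beyond that one-line pointer.

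That said, your outline is exactly the adaptation of the Bolthausen/Bayati--Montanari conditioning technique that the authors advertise. You correctly identify the key structural ingredients: the joint Gaussianity of $(\bZ\{\bv^{(1)}\},\dots,\bZ\{\bv^{(m)}\})$ for deterministic unit vectors with leading covariance $n^{-1}\<\bv^{(a)},\bv^{(b)}\>^{k-1}\id$, the inductive conditioning on $\cF_t$, the role of the Onsager term in cancelling the reaction along $f(\bv^{t-1})$, and the rotational invariance giving the uniformity of $\bv^t_{\perp}$. You also flag the genuine new difficulty relative to \cite{BM-MPCS-2011}, namely that $\bv\mapsto\bZ\{\bv\}$ is multilinear rather than linear for $k\ge 3$, so the conditioning step and the identification of the Onsager coefficient require more care. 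This is consistent with the paper's own acknowledgment that the full argument is nontrivial enough to be deferred; your proposal is a faithful sketch of what that deferred proof would have to do.
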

Note that state evolution coincides with the equation that we derived
for the first iteration of power iteration, cf.
Eq.~(\ref{eq:FirstIterationPowerIteration}) (apart from the different scaling). It is important to notice
that for subsequent iterations $t\ge 1$, state evolution (\ref{eq:SE})
does not correctly describe naive power iteration. The reason is that
$\bv^t$ depends on $\bX$, and hence the same argument does not apply.
The AMP iteration differ  from naive power iteration because of the `memory
term', $-\ons_t\,f(\bv^{t-1})$. As shown in \cite{BM-MPCS-2011}, this
memory term approximately cancels dependencies. As a consequence, the
resulting algorithm obeys state evolution.

The following result characterizes the 
minimum required additional information $\gamma$ to allow \ref{eq:AMP}
to escape from those undesired local optima. 
We will say that $\{ \bv^t\}_t$ converges almost surely to a {\em desired local optimum} if,
almost surely, 
 \begin{equation*}
\lim_{t \to \infty} \lim_{n \to \infty}  \Loss(\bv^t /
\|\bv^t\|_2,\bvz)\le \frac{6}{\beta^2}~~.
\end{equation*}

\begin{theorem}\label{thm:AMPisGood}
Consider the \ref{eq:TPCA} problem with $k \geq 3$
and $$\beta>\omega_k \equiv \sqrt{(k-1)^{k-1} / (k-2)^{k-2}} \sim
\sqrt{ek}~~.$$ 

Then \ref{eq:AMP} converges almost surely to a desired local optimum
if and only if $\gamma > \sqrt{1/\epsilon_k(\beta)-1}$ where
$\epsilon_k(\beta)$ is the largest solution of $(1-\epsilon)^{(k-2)}\epsilon =
\beta^{-2}$, 
\end{theorem}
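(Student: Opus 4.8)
The plan is to analyze the state evolution recursion \eqref{eq:SE} as a scalar dynamical system on $\tau_t \ge 0$, since Proposition~\ref{propo:AMP_STATE} reduces the asymptotic behavior of \ref{eq:AMP} to this recursion with $\tau_0 = \gamma$. Writing the update as $\tau_{t+1} = G(\tau_t)$ with $G(\tau)^2 = \beta^2 (\tau^2/(1+\tau^2))^{k-1}$, one checks that $G$ is increasing and bounded, so the sequence $\{\tau_t\}$ is monotone (increasing if $\tau_1 > \tau_0$, decreasing otherwise) and hence converges to a fixed point of $G$. The fixed points solve $\tau^2 (1+\tau^2)^{k-1} = \beta^2 \tau^{2(k-1)}$, i.e., after substituting $\epsilon = 1/(1+\tau^2)$ (so $\tau^2/(1+\tau^2) = 1-\epsilon$ and $\tau^2 = (1-\epsilon)/\epsilon$), the equation $(1-\epsilon)^{k-2}\epsilon = \beta^{-2}$. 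First I would carry out this change of variables carefully and identify $\tau = 0$ as always a fixed point, and the nonzero fixed points as the solutions $\epsilon \in (0,1)$ of $h(\epsilon) \equiv (1-\epsilon)^{k-2}\epsilon = \beta^{-2}$.

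Next I would study $h$ on $[0,1]$: it vanishes at both endpoints, is positive inside, and (for $k \ge 3$) has a unique interior maximum at $\epsilon^* = 1/(k-1)$ with maximal value $h(\epsilon^*) = (k-2)^{k-2}/(k-1)^{k-1} = \omega_k^{-2}$. Hence for $\beta > \omega_k$ we have $\beta^{-2} < h(\epsilon^*)$, so $h(\epsilon) = \beta^{-2}$ has exactly two solutions $\epsilon_- < \epsilon^* < \epsilon_+$; the theorem's $\epsilon_k(\beta)$ is the larger one, $\epsilon_+$, corresponding to the \emph{smaller} nonzero fixed point $\tau_+ = \sqrt{1/\epsilon_+ - 1}$. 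I would then translate the basin-of-attraction structure of the monotone map $G$: $\tau = 0$ is an attracting fixed point with some basin $[0, \tau_{\mathrm{c}})$, where $\tau_{\mathrm{c}}$ is the smallest \emph{repelling} fixed point; for $\tau_0 > \tau_{\mathrm{c}}$ the iteration increases to the largest fixed point $\tau_{\max}$. A local stability computation ($G'$ at a fixed point, equivalently the sign of $h'$ there) shows that the fixed point corresponding to $\epsilon_+$ (the smaller positive $\tau$) is the unstable/threshold one, and the one corresponding to $\epsilon_-$ (larger $\tau$) is stable. Thus the iteration escapes to the large fixed point iff $\tau_0 = \gamma > \tau_+ = \sqrt{1/\epsilon_k(\beta) - 1}$.

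To finish I would connect the limiting value of $\tau_t$ to the loss. If $\gamma > \sqrt{1/\epsilon_k(\beta)-1}$ then $\tau_t \to \tau_{\max}$, and I would show $\tau_{\max}$ is large enough that $\Loss(\bv^t/\|\bv^t\|_2, \bvz) = 2 - 2|\<\bv^t,\bvz\>|/\|\bv^t\|_2 \to 2 - 2\tau_{\max}/\sqrt{1+\tau_{\max}^2}$ (using $\|\bv^t_\perp\|_2 \to 1$ from the Proposition) is at most $6/\beta^2$: this uses that $\tau_{\max}^2 \ge \beta^2(1 - 1/(1+\tau_{\max}^2))^{\,}$-type lower bounds, i.e., $\epsilon_- \le \beta^{-2}/(1-\epsilon_-)^{k-2} \le \const\beta^{-2}$ for $\beta$ past $\omega_k$, giving $1 - \tau_{\max}/\sqrt{1+\tau_{\max}^2} \le \epsilon_-/2 \le 3/\beta^2$ after bounding $(1-\epsilon_-)^{k-2}$ away from zero. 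Conversely, if $\gamma \le \sqrt{1/\epsilon_k(\beta)-1}$ then $\tau_t \to 0$ (or to $0$ from below the threshold), so $\Loss \to 2$, which is not a desired local optimum. The main obstacle I anticipate is the bookkeeping around the boundary/degenerate situations — confirming the monotonicity and the exact count of fixed points uniformly in $k$, handling the marginal case $\gamma = \tau_+$ and the behavior exactly at $\beta = \omega_k$ — and making the final numerical bound $6/\beta^2$ (rather than some other constant) come out, which requires a slightly careful estimate of $(1-\epsilon_-)^{k-2}$ and of the gap between $\epsilon_-$ and $\beta^{-2}$.
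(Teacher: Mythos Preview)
Your proposal is essentially the paper's proof: the paper also reduces to the state-evolution recursion, makes the equivalent substitution $x=\tau^2/(1+\tau^2)=1-\epsilon$ to arrive at $x^{k-2}(1-x)=\beta^{-2}$, locates the unique maximum at $x_*=(k-2)/(k-1)$ (your $\epsilon^*=1/(k-1)$) with value $\omega_k^{-2}$, identifies the two positive fixed points, and bounds the loss at the large one via $x_{\text{large}}\ge 1-1/(x_*^{k-2}\beta^2)\ge 1-3/\beta^2$ to get $\Loss\le 2(1-x_{\text{large}})\le 6/\beta^2$. Your monotonicity/basin-of-attraction discussion is a bit more explicit than the paper's (which leaves the convergence of $\tau_t$ implicit), and your $\epsilon_-/2$ should be $\epsilon_-$ (use $1-\sqrt{1-\epsilon_-}\le \epsilon_-$), but otherwise the route and the final constant match.
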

In the special case $k = 3$,  and $\beta>2$, assuming  $\gamma >\beta(1/2 - \sqrt{1/4 - 1/\beta^2})$, \ref{eq:AMP} tends to a desired local optimum. Numerically $\beta>2.69$ is enough for \ref{eq:AMP} to achieve $\<\bvz,\hbv\>\geq 0.9$ if $\gamma > 0.45$.

As a final remark, we note that the methods of \cite{montanari2014non}
can be used to show that, under the assumptions of Theorem
\ref{thm:AMPisGood}, for $\beta>\beta_k$ a sufficiently large constant, 
AMP asymptotically solves the optimization problem \ref{eq:TPCA}.
Formally, we have, almost surely, 
\begin{align}\label{eq:upperBoundRayleight}
\lim_{t \to \infty} \lim_{n \to \infty} \Big|\<\bX,\left (\bv^{t}\right)^{\otimes k}\>-\| \bX\|_{op}\Big|=0.
\end{align}

%
\section{Numerical experiments}\label{sec:num}

Let us emphasize two practical suggestions that arise from our work:
\begin{itemize}
\item Tensor unfolding is superior to tensor power iteration under 
our spiked model. For instance, for $k=3$, we expect tensor power iteration 
to require $\beta\gtrsim n^{1/4}$ and unfolding to require
$\beta\gtrsim n^{1/2}$.
\item For smaller values of $\beta$, iterative methods (tensor power iteration or approximate message
  passing) only produce a good estimate if the initialization has a
  scalar product with the ground truth $\bvz$ that is bounded away
  from zero.
\item  As a consequence of the above,
 side information about the unknown vector $\bvz$ can
  greatly improve performances.

A special case,  we will study the behavior of warm start algorithms that first perform a singular value
  decomposition of $\M(\bX)$, and then apply an iterative method
  (tensor power iteration or approximate message passing).
\end{itemize}
In this section we will illustrate these suggestions through numerical simulations.

Section \ref{sec:ConeConstrained} describes a refinement of tensor
unfolding that provides a tighter relaxation. Section
\ref{sec:NumericalComparison} compares different algorithms. Finally,
Section \ref{sec:SideInfo} provides additional illustration of how
side information can dramatically simplify the estimation problem.

\subsection{PSD-constrained principal component}
\label{sec:ConeConstrained}

Note that, for $\bv\in\reals^n$, the outer product $\bv \otimes \bv$
(regarded as an $n\times n$ matrix) is positive semi-definite (PSD). 
Considering the case $k=3$, we have
\begin{align}
\M(\bX) = \beta\vec(\bvz\otimes\bvz)\, \bvz^{\sT} + \M(\bZ)\, .
\end{align}
This remark suggest to perform a cone-constrained principal component 
analysis of $\M(\bX)$, where the left singular vector (viewed as a
matrix) belongs to the PDS cone.
In order to write this formally, it is convenient to introduce
the operator ${\tt reshape}_{n\times n} : \reals^{n^2} \to \reals^{n
  \times n} $ that matricizes vectors as ${\tt reshape}_{n\times
  n}(\bw)_{i,j} = \bw_{n(i-1) + j}$. The PSD-cone-constrained
principal component of $\M(\bX)$, is defined by 
\begin{align}\label{eq:defCCPrincComp}
(\hbw,\hbv) \equiv \arg \max \left \{ \< \bw, \M(\bX) \bv \> ~~:~~ {\tt
    reshape}_{n\times n}(\bw)  \succeq 0~,~\|\bw\|_2\leq 1~,~\|\bv\|_2
  \leq 1 \right \}~~.
\end{align}
This optimization problem is NP hard, since it includes copositive
programming as a special case. However
\cite{deshpande14coneconstrained} provides rigorous and empirical
evidence that problems of this type can be solved efficiently by
a projected power iteration, under statistical model dor $\bX$.

 Denoting $\P : \reals^{n^2} \to \reals^{n^2}$ the orthogonal
 projector  onto the PSD cone,  we iterate the following for $t\ge 0$, using random initialization
 of $\bu^0 \in \reals^n$, 
\begin{align}\label{eq:coneConstrained}
\left \{  \begin{aligned}\bw^{t}  & = \P(\M(\bX) \bv^t)  ,\\
\bv^{t+1} &= \M(\bX)^\sT\bw^{t} / \|\M(\bX)^\sT\bw^{t} \|_2  ~. \end{aligned} \right .
\end{align}

\subsection{Comparison of different algorithms}
\label{sec:NumericalComparison}

In Fig.~\ref{fig:comparisonAMPpowerUnfold}  we compare different
algorithms on data generated
following \ref{eq:SymmetricModel} with $k=3$, and $n \in \{25,50,100,200,400,800\}$
and for a range of values of $\beta \in[2,10]$. 
The plots represent
measured values of the absolute correlation $|\<\hbv,\bvz\>|$ versus
$\beta$, averaged over $50$ samples (except for $n=800$, where we used
$8$ samples). 

\begin{figure}
\begin{center}
\includegraphics[trim=3cm 0cm 0cm 0cm, clip=true,width = 14cm]{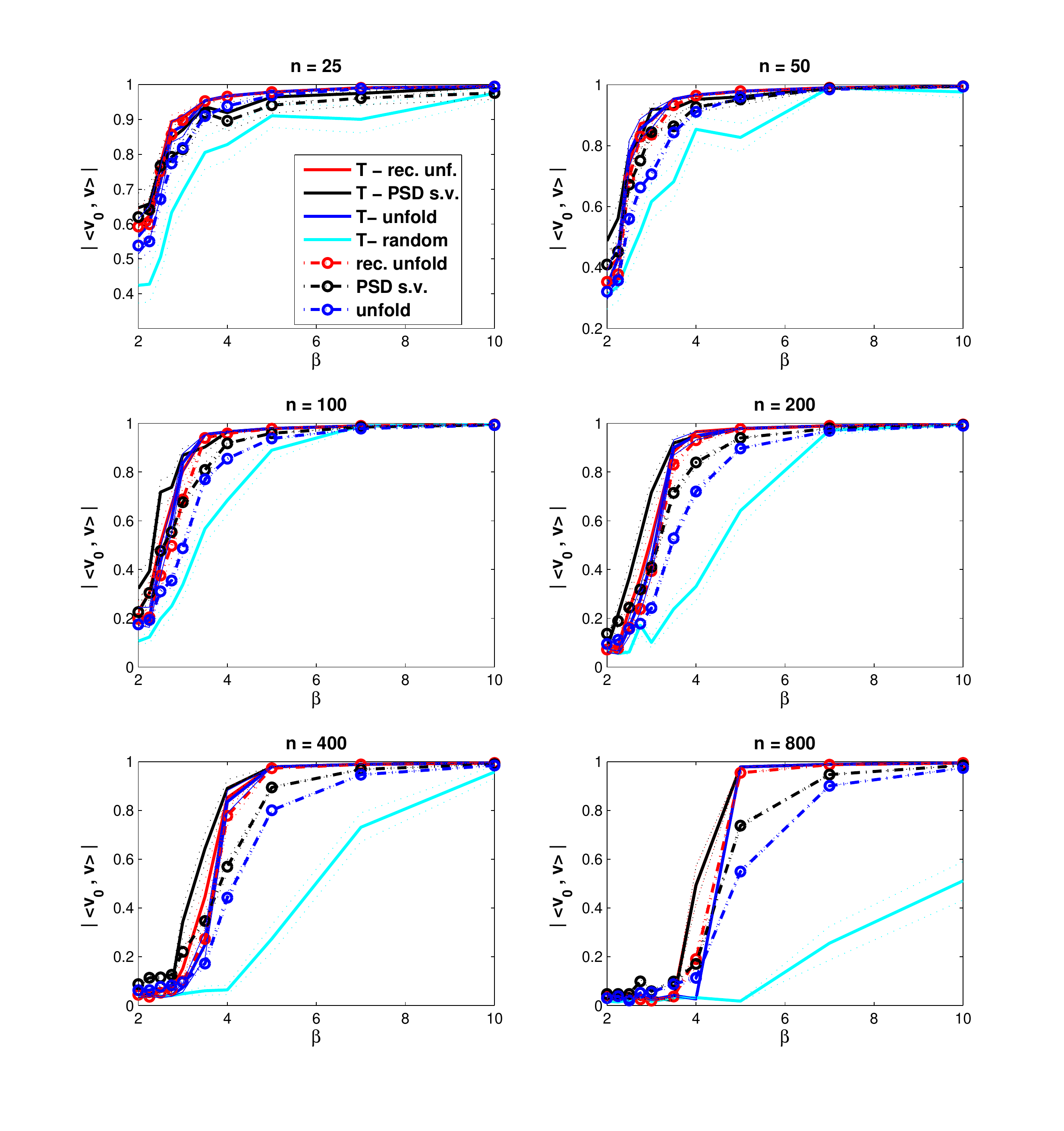}
\end{center}
\caption{Comparison of various algorithms for tensor PCA, for order
  $3$ tensors ($k=3$). Various curves correspond to different
  algorithms: {\sf unfold} (simple unfolding); {\sf rec. unfold}
  (recursive unfolding); {\sf PSD s.v.} (PSD-constrained PCA); {\sf
    T-random} (tensor power iteration with random initialization);
  {\sf T-rec.unf.}, {\sf T-PSD s.v.}; {\sf T-unfold} (tensor power
  iteration with each of the initializations above).  Light dotted lines
  are confidence bands.}\label{fig:comparisonAMPpowerUnfold}
\end{figure}
The main findings are consistent with the theory developed above:
\begin{itemize}
\item Tensor power iteration (with random initialization) performs
  poorly with respect to other approaches that use some form of tensor
  unfolding. The gap widens as the dimension $n$ increases.
\item PSD-constrained principal component analysis (described in the
  last section) is slightly superior to plain unfolding.
\item All algorithms based on initial unfolding have essentially the
  same threshold. 
Above that threshold, those that process the singular component
(either by recursive unfolding or by tensor power iteration) have
superior performances over simpler one-step algorithms.
\end{itemize}
In addition, we noted that the two iterative algorithms (\ref{eq:tensorPiter} and \ref{eq:AMP})
show very close behaviors in our experiments. 
\begin{figure}
\begin{center}
\includegraphics[trim=4cm 8cm 2cm 8cm, clip=true, width =8cm]{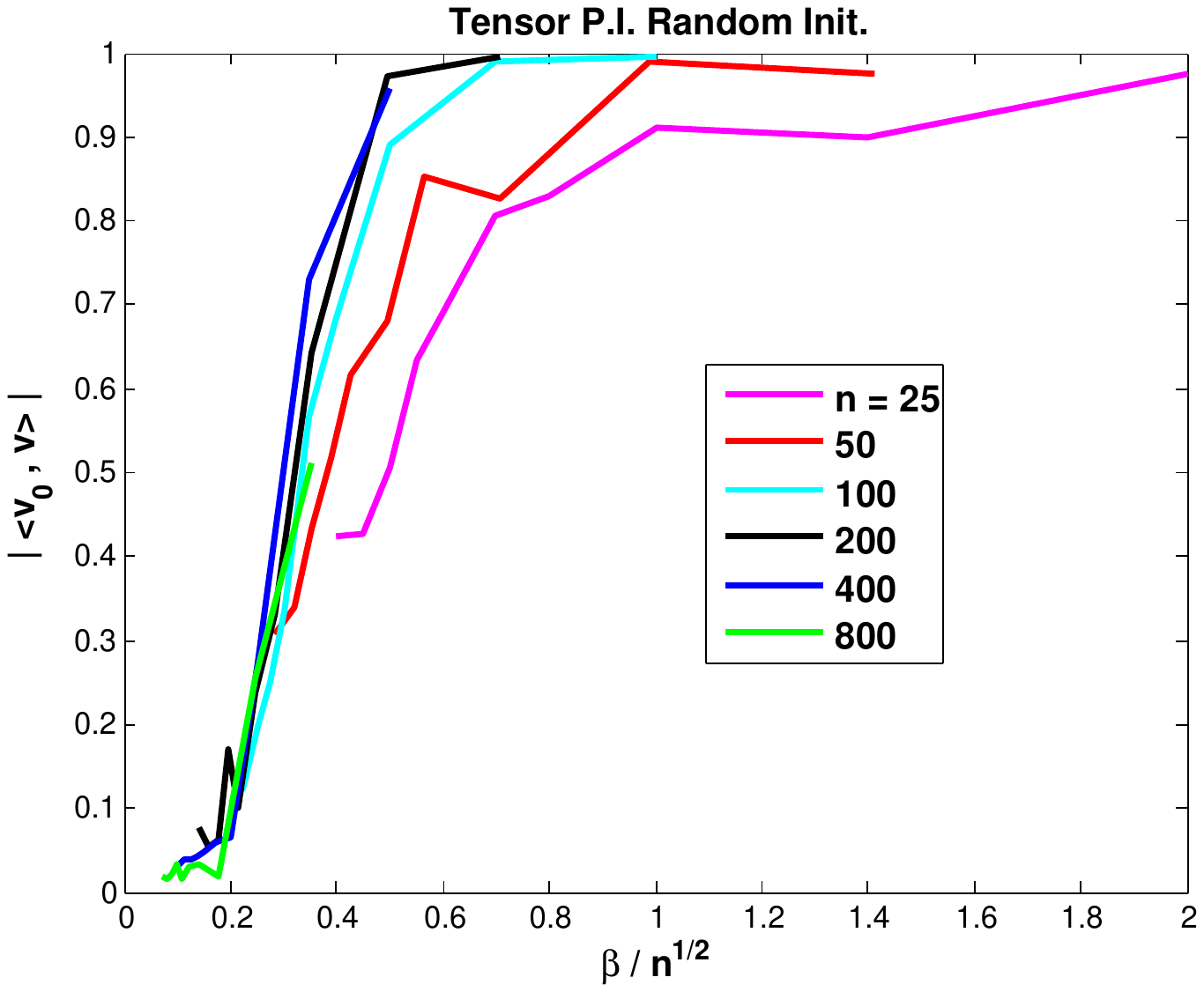}
\includegraphics[trim=4cm 8cm 2cm 8cm, clip=true, width =8cm]{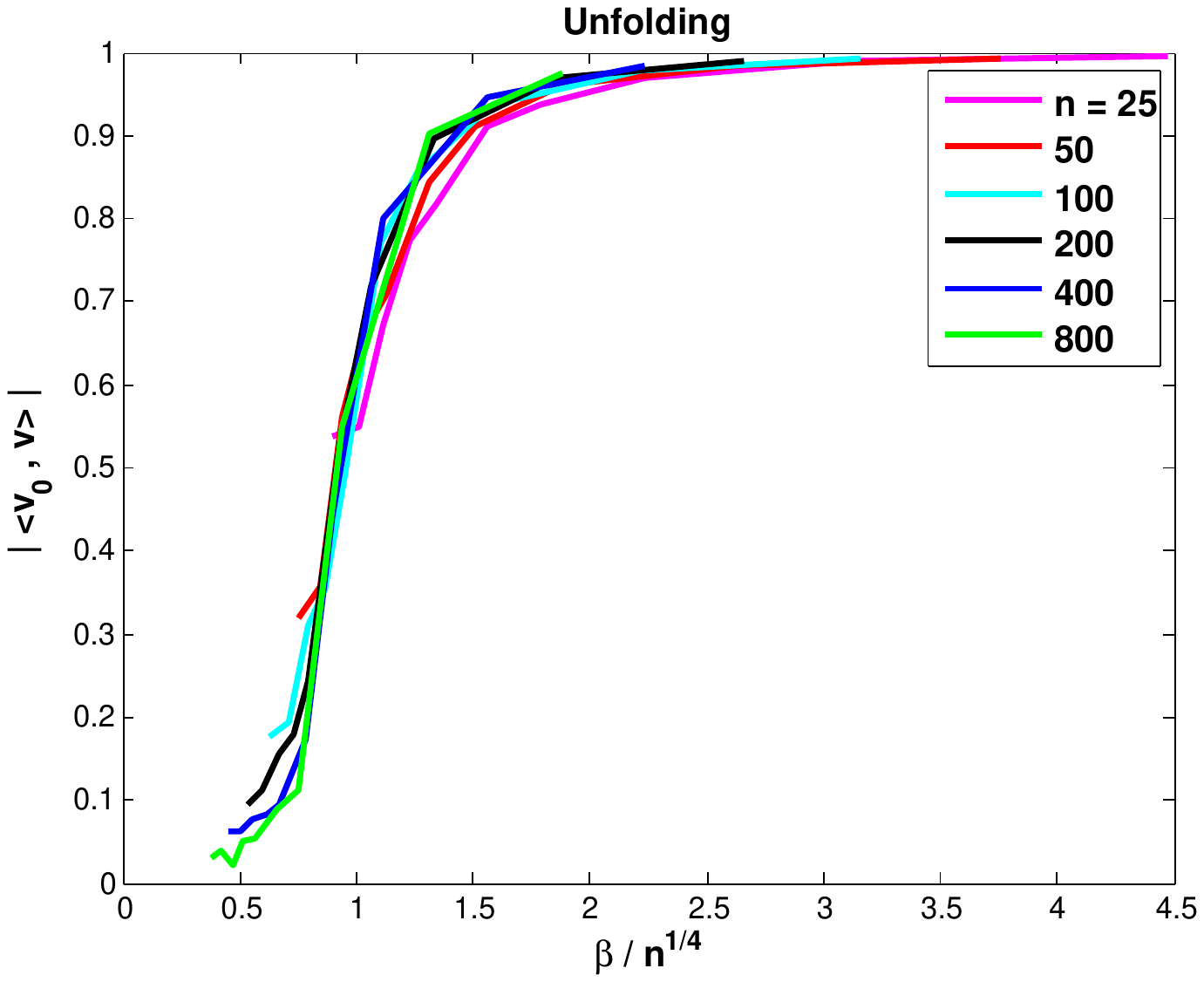}
\end{center}
\caption{Scaling with $n$ of the threshold signal-to-noise ratio
for different classes of algorithms. Left: tensor power iteration with
random initialization. Right: tensor unfolding.}\label{fig:ScalingComparison}
\end{figure}

In Figure \ref{fig:ScalingComparison} we compare the scaling with $n$
of the threshold signal-to-noise ratio for different type of
algorithms.
Our heuristic arguments suggest that tensor power iteration with
random initialization will work for $\beta\gtrsim n^{1/2}$, while
unfolding only requires $\beta\gtrsim n^{1/4}$ (our theorems  guarantee
this for, respectively, 
$\beta\gtrsim n$ and $\beta\gtrsim n^{1/2}$). We plot the average
correlation $|\<\hbv,\bvz\>|$ versus (respectively) $\beta/n^{1/2}$
and $\beta/n^{1/4}$. The curve superposition confirms that our
prediction captures the correct behavior already for $n$ of the order
of $50$.

\subsection{The value of side information}
\label{sec:SideInfo}
Our next experiment concerns a simultaneous matrix and tensor
PCA task: we are given a tensor  $\bX \in \otimes^3 \reals^{n}$ of
\ref{eq:SymmetricModel} with $k = 3$ and the signal to noise ratio
$\beta = 3$ is fixed.
 In addition, we observe $\bM = \lambda \bvz \bvz^\sT + \bN$ where
 $\bN \in \reals^{n \times n}$ is a symmetric noise matrix with upper
 diagonal  elements $i<j$ iid $\bN_{i,j} \sim \normal(0,1/n)$ and the
 value of $\lambda \in [0, 2]$ varies.
This experiment mimics a rank-1 version of topic modeling method
presented in \cite{anandkumar12} where $\bM$ is a matrix 
representing pairwise co-occurences and $\bX$ triples.

The analysis in previous sections suggest to use the leading
eigenvector of $\bM$ as the initial point of AMP algorithm for tensor
PCA on $\bX$. 
We performed the experiments on $100$ randomly generated  instances 
with $n = 50,200,500$ and report
in Figure \ref{fig:SimPCA} the mean values of $|\<\bvz,\hbv(\bX)\>|$
with confidence intervals.

Random matrix theory predicts $\lim_{n\to\infty}\<\hbv_1(M),\bvz\>
=\sqrt{1-\lambda^{-2}}$ \cite{feral2009largest}. Thus we can set
$\gamma = \sqrt{1-\lambda^{-2}}$ and apply the theory of the previous
section. In particular, Proposition \ref{propo:AMP_STATE} implies
\begin{equation*}
\lim_{n\to\infty}\<\hbv(\bX),\bvz\>= \beta \left ( 1/2  + \sqrt{ 1/4
    -1/\beta^{2}}\right )~~~\text{if}~~\gamma> \beta \left ( 1/2  -
  \sqrt{ 1/4 -1/\beta^{2}}\right )~~
\end{equation*}
and $\lim_{n\to\infty}\<\hbv(\bX),\bvz\> =0$ otherwise
Simultaneous PCA appears vastly superior to simple PCA. Our theory
captures this difference quantitatively already for $n=500$.
\begin{figure}
\begin{center}
\includegraphics[trim=4cm 0cm 0cm 0cm, clip=true,width = 16cm]{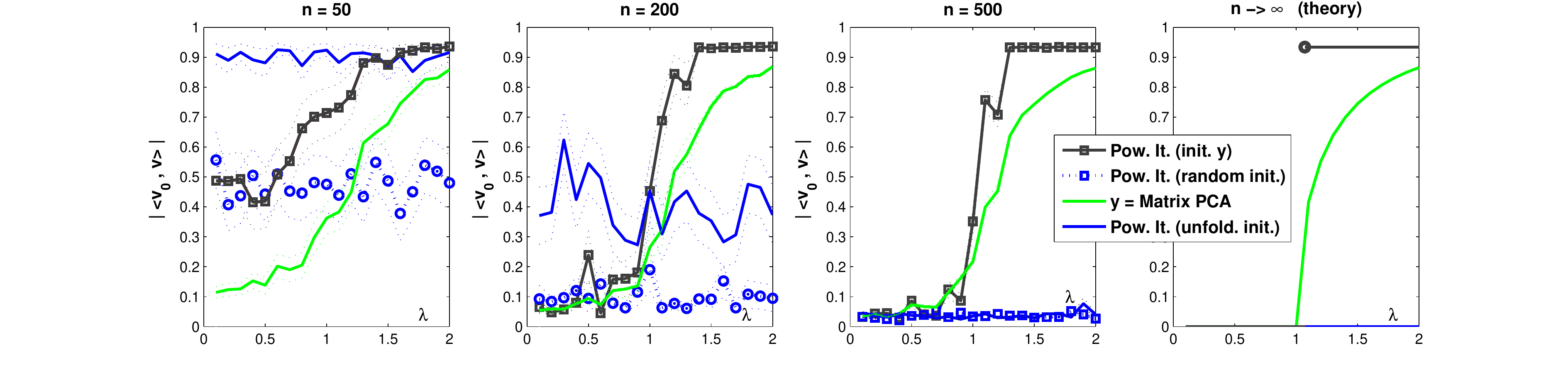}
\end{center}
\caption{Simultaneous PCA at $\beta = 3$. Absolute correlation of the estimated principal component  with the truth $|\<\hbv,\bvz\>|$, simultaneous PCA (black) compared with matrix (green) and tensor PCA (blue). }\label{fig:SimPCA}
\end{figure}

\section*{Acknowledgements}

This work was partially supported by the NSF grant CCF-1319979 and the grants AFOSR/DARPA
FA9550-12-1-0411 and FA9550-13-1-0036.

%

\appendix 

\section{Information theoretic bound: Proof of Theorem \ref{th:infoTheoretic}}
\label{app:ITBound}

Introduce the operator 
\[ \begin{aligned} \sU~~:& \otimes^k \reals^n \to \reals^{{n \choose k}}\\ & \bX \mapsto \sU(\bX)\end{aligned}~~,\]
where for indices $i_1<i_2<\cdots < i_k$, we have $\sU(\bX)_{a(i_1, \cdots, i_k)} = \bX_{i_1,\cdots, i_k}$ with $a(i_1, \cdots, i_k) = 1+\sum_{j=1}^k n^{j-1}(i_j-1)$. Let  $D(\cdot \Vert\cdot)$ denote
  the Kullback-Leiber divergence where $P_\bw$ is the law of $ \sU(\bX)$ conditional
  on $\bv_0=\bw$.  
\begin{lemma}\label{lem:Kullback} For any pairs of vectors $\bw,\bw' \in \bbS^{n-1}$ we have
  \begin{align*}
    D(P_\bw \Vert P_{\bw'}) \le 2 \frac{n}{k}\beta^2.
  \end{align*}
\end{lemma}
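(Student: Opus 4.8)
The plan is to compute the KL divergence between the two Gaussian measures $P_\bw$ and $P_{\bw'}$ explicitly. Conditional on $\bvz = \bw$, the vector $\sU(\bX)\in\reals^{\binom nk}$ has independent Gaussian coordinates: coordinate $a(i_1,\dots,i_k)$ has mean $\beta\, w_{i_1}w_{i_2}\cdots w_{i_k}$ and variance $\sigma^2 = 1/(n(k-1)!)$ (this is exactly the variance of $\bZ_{i_1,\dots,i_k}$ for distinct indices, recorded after Eq.~\eqref{eq:symNoiseDefinition}). So $P_\bw = \normal(\beta\, \sU(\bw^{\otimes k}), \sigma^2 \id)$ and likewise for $\bw'$, with the \emph{same} covariance.

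For two Gaussians with identical covariance $\sigma^2\id$ and means $m, m'$, the KL divergence is simply $D(P_\bw\Vert P_{\bw'}) = \|m-m'\|_2^2/(2\sigma^2)$. Here $m - m' = \beta\big(\sU(\bw^{\otimes k}) - \sU(\bw'^{\otimes k})\big)$, and since $\sU$ just selects the strictly-ordered-index entries, $\|\sU(\bw^{\otimes k}) - \sU(\bw'^{\otimes k})\|_2^2 = \sum_{i_1<\dots<i_k}(w_{i_1}\cdots w_{i_k} - w'_{i_1}\cdots w'_{i_k})^2 \le \sum_{i_1,\dots,i_k}(w_{i_1}\cdots w_{i_k} - w'_{i_1}\cdots w'_{i_k})^2 = \|\bw^{\otimes k} - \bw'^{\otimes k}\|_F^2$. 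Expanding the latter using $\|\bw\|_2 = \|\bw'\|_2 = 1$ gives $\|\bw^{\otimes k} - \bw'^{\otimes k}\|_F^2 = 2 - 2\<\bw,\bw'\>^k \le 2$. Putting these together, $D(P_\bw\Vert P_{\bw'}) \le \beta^2 \cdot 2/(2\sigma^2) \cdot \tfrac12 \cdot$ — more carefully, $D \le \beta^2 \|\bw^{\otimes k}-\bw'^{\otimes k}\|_F^2/(2\sigma^2) \le \beta^2 \cdot 2 \cdot n(k-1)!/2 = \beta^2 n (k-1)!$.

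This last bound, $\beta^2 n(k-1)!$, is weaker than the claimed $2\beta^2 n/k$ by a factor of roughly $k\cdot k!$, so a naive triangle-inequality-free estimate is not enough; I would need to be careful about the normalization. The point I expect to be the main obstacle is getting the constant right: the variance of the entries of $\sU(\bX)$ must be tracked precisely through the definition \eqref{eq:symNoiseDefinition}, and one likely wants to work with a cleaner sufficient statistic. In fact, the natural move is to observe that the sufficient statistic for $\bvz$ can be taken to be $\<\bX, \bv^{\otimes k}\>$-type contractions, or equivalently to note that the effective per-coordinate signal-to-noise ratio after accounting for the $\binom nk \approx n^k/k!$ coordinates is what produces the $n/k$ scaling rather than $n\cdot k!$. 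Concretely, I would recompute: $D(P_\bw\Vert P_{\bw'}) = \frac{1}{2\sigma^2}\|m-m'\|_2^2$ with $\sigma^{2}=1/(n(k-1)!)$ and $\|m-m'\|_2^2 = \beta^2\sum_{i_1<\dots<i_k}(\cdots)^2$; the sum over \emph{ordered} tuples is $\frac{1}{k!}$ times the sum over all tuples only up to diagonal terms, and bounding $\sum_{i_1<\dots<i_k}(w_{i_1}\cdots w_{i_k}-w'_{i_1}\cdots w'_{i_k})^2 \le \frac{1}{k!}\big(2-2\<\bw,\bw'\>^k\big) \le \frac{2}{k!}$ gives $D \le \frac{\beta^2}{2}\cdot n(k-1)!\cdot\frac{2}{k!} = \frac{\beta^2 n}{k} \le 2\frac{n}{k}\beta^2$, as desired.

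So the key steps, in order, are: (i) identify $P_\bw$ as a Gaussian with mean $\beta\,\sU(\bw^{\otimes k})$ and covariance $\sigma^2\id$, $\sigma^2 = 1/(n(k-1)!)$; (ii) apply the closed-form KL formula for equal-covariance Gaussians, $D = \|m-m'\|_2^2/(2\sigma^2)$; (iii) bound $\|\sU(\bw^{\otimes k})-\sU(\bw'^{\otimes k})\|_2^2 \le \frac{1}{k!}\|\bw^{\otimes k}-\bw'^{\otimes k}\|_F^2 = \frac{1}{k!}(2-2\<\bw,\bw'\>^k)\le \frac{2}{k!}$, being careful that $\sU$ ranges over strictly ordered indices so the combinatorial factor $1/k!$ (up to diagonal corrections, which only help) appears; (iv) combine to get $D \le \beta^2 n/k \le 2\beta^2 n/k$. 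The only real subtlety is bookkeeping the $k!$ and $(k-1)!$ factors consistently, which is where I would slow down and double-check against the normalization conventions fixed around Eq.~\eqref{eq:symNoiseDefinition}.
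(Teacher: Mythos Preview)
Your approach is correct and essentially identical to the paper's: identify $P_\bw$ as a product Gaussian with covariance $\frac{1}{n(k-1)!}\,\id$, apply the closed-form KL for equal-covariance Gaussians, and use $\|\sU(\bw^{\otimes k}-\bw'^{\otimes k})\|_2^2\le \frac{1}{k!}\|\bw^{\otimes k}-\bw'^{\otimes k}\|_F^2$. One small caveat (which the paper's own proof shares): for odd $k$ the scalar $\<\bw,\bw'\>^k$ can be negative, so $2-2\<\bw,\bw'\>^k$ is only bounded by $4$, not $2$; your spare factor of two at the end absorbs this and the stated bound $2\beta^2 n/k$ still holds.
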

\begin{proof}
First note that for any $\bw \in \bbS^{n-1}$, $P_\bw$ is a Gaussian
probability distribution $$P_\bw = \normal \left ( \beta  \sU( \bw\ok) , \frac{1}{(k-1)! n}\id_{{n \choose k}} \right ) ~~.$$
On the other hand for any symmetric tensor $\bW \in \otimes^k \reals^n$ we have $k! \|\sU(\bW)\|_2^2\leq \|\bW\|_F^2$. Therefore we have
  \begin{align*}
    D(P_\bw \Vert P_{\bw'}) &= n(k-1)!  \beta^2  \|\sU(\bw\ok) - \sU(\bw'\ok)\|_2^2 \\
    &\le \frac{n}{k}\beta^2\|\bw\ok - \bw'\ok\|_F^2 \\
    &=  \frac{2 n}{k}\beta^2(1-\<\bw, \bw'\>^k) \\
    &\le  \frac{2 n}{k}\beta^2.
  \end{align*}

\end{proof}

We are now in position to prove Theorem \ref{th:infoTheoretic}.
  Let $\cV$ denote the class of estimators $\hbv$ with unit norm:
  \begin{equation} 
\cV = \left \{\begin{aligned} \hbv :& \otimes^k \reals^n \to
    \bbS^{n-1}\\ & \bX \mapsto \hbv(\bX) \end{aligned}\right \}~~.\
\end{equation}
\begin{proof}[Proof of Theorem \ref{th:infoTheoretic}]
Recall that the packing number
$N_n(\eps)$ of $\bbS^{n-1}$ is the maximum cardinality of any set
$\cN\subseteq \bbS^{n-1}$ such that $(\|\bx-\bx'\|_2\wedge\|\bx+\bx'\|_2)\ge \eps$ for any
$\bx,\bx'\in \cN$. By a standard argument, letting $M_n(\eps)$
the corresponding covering number\footnote{That is, the minimum
  cardinality of any set $\cN^*$ such that
  $\min_{\bx\in\cN^*}(\|\bu-\bx\|_2\wedge \|\bu+\bx\|_2 )\le\eps$ for any
  $\bu\in\bbS^{n-1}$.},  we have, for $\bx\in \bbS^{n-1}$ a point on the
unit sphere,
\begin{align}
N_n(\eps)\ge M_n(\eps)\ge \frac{{\rm Vol}_{n-1}(\bbS^{n-1})}{2{\rm
    Vol}_{n-1}(\bbS^{n-1}\cap B(\bx,\eps))} \ge
\left(\frac{1}{\eps}\right)^{n-1}\, .\label{eq:BoundPacking}
\end{align}
(Here ${\rm Vol}_{n-1}(\,\cdot\,)$ denotes the $(n-1)$-dimensional volume,
and $B(\bx,\eps)$ the ball of radius $\eps$ centered at $\bx$.)

  Let $\cN$ denote an $\eps$-packing with cardinality $|\cN|\ge
  N_n(\eps)$. 
Let $\bvz$ be uniformly distributed  in the set $\cN$. For an estimator $\hbv \in\cV$, we define
  $G(\hbv(\bX)) = \arg\min_{\bw\in\cN} \| \hbv(\bX) - \bw\|_2$. Consider the {\em error} event $\{ G(\hbv(\bX)) \ne \bvz\}$.
  By definition of $G(\hbv(\bX))$, the event $G(\hbv(\bX))\ne \bvz$
  implies $(\| \hbv(\bX) - \bv_0\|_2\wedge \|\hbv(\bX) + \bv_0\|_2) \ge \eps/2$.
 By Markov inequality we have:
\begin{align}
    \prob \{ G(\hbv(\bX)) \ne \bvz\} &\le \prob\big\{(\|\hbv(\bX) -
    \bv_0\|_2\wedge \|\hbv(\bX) -
    \bv_0\|_2) \ge \eps/2\} \nonumber \\
    &\le 4\frac{\E\{\Loss(\hbv,\bvz)\}}{\eps^2}\le \frac{4
      }{\eps^2} \inf_{\hbv\in\cV}\sup_{\bvz \in
        \bbS^{n-1}}\E\{\Loss(\hbv, \bvz)\}\, .\label{eq:Perrbound}
  \end{align}
  By Fano's inequality \cite{cover2012elements} we have that:
  \begin{align}\label{eq:FanoIneq}
    \prob\{G(\hbv(\bX)) \ne \bvz\} &\ge 1-\frac{{\rm I}(\bvz;\bX)+\log 2}{\log|\cN|}\\
&\ge 1 - \frac{\Delta + \log 2}{\log{|\cN|}}, 
  \end{align}
  where $\Delta = \max_{\bw\ne \bw'\in\cN}D(P_\bw \Vert P_{\bw'})$, and in
  the second inequality we used \cite{han1994generalizing}
\begin{align}
{\rm I}(\bvz;\bX) \le \frac{1}{|\cN|^2}\sum_{\bw\ne
    \bw'\in\cN}D(P_\bw \Vert P_{\bw'})\,.
\end{align}

Using Eq. ~(\ref{eq:BoundPacking})  and Lemma \ref{lem:Kullback}, 
in Eq.~(\ref{eq:FanoIneq}), we get
\begin{align}
   \prob\{G(\hbv(\bX)) \ne \bvz\} &\ge 1- \frac{2n\beta^2/k+\log
     2}{(n-1)\log(1/\eps)}\, .
\end{align}
Choosing $\eps=1/2$, we get, $\prob\{G(\hbv(\bX)) \ne \bvz\} \ge
1-(5\beta^2/k)$ for $n\ge 4$ and $\beta\le \sqrt{k/3}$.
In particular $\prob\{G(\hbv(\bX)) \ne \bvz\} \ge 1/2$ provided
$\beta<\sqrt{k/10}$. 
By Eq.~(\ref{eq:Perrbound}) this implies
\begin{align} 
\inf_{\hbv\in\cV}\sup_{\bvz \in
        \bbS^{n-1}}\E\{\Loss(\hbv, \bvz)\}\ge \frac{1}{32}\, .
\end{align}
\end{proof}

\section{Maximum likelihood: Proof
 Theorem \ref{th:nonconvexOpt}}
\label{sec:operatorNorm}
 
\subsection{Operator norm of the noise tensor: Proof
  of Lemma \ref{lem:BenArous}}
\label{sec:BenArous}

Let $\bZ_n \in\otimes^k\reals^n$ be a symmetric standard normal
tensor, and consider the associated objective function
\begin{align}
H_{\bZ}\; :& \;\;\bbS^{n-1}\to \reals\, ,\\
&\;\;\;\bv \mapsto H_{\bZ}(\bv) \equiv \<\bZ, \bv\ok\>\, .
\end{align}
While the function $H_{\bZ}(\,\cdot\,)$ is obviously non-convex, it
turns out that --for random data $\bZ$-- it is dramatically
so. Namely, it has an exponential number of local maximum, whose value
is --typically-- only a fraction of the value of the global maximum.

\begin{figure}
\begin{center}
\includegraphics[width=8cm]{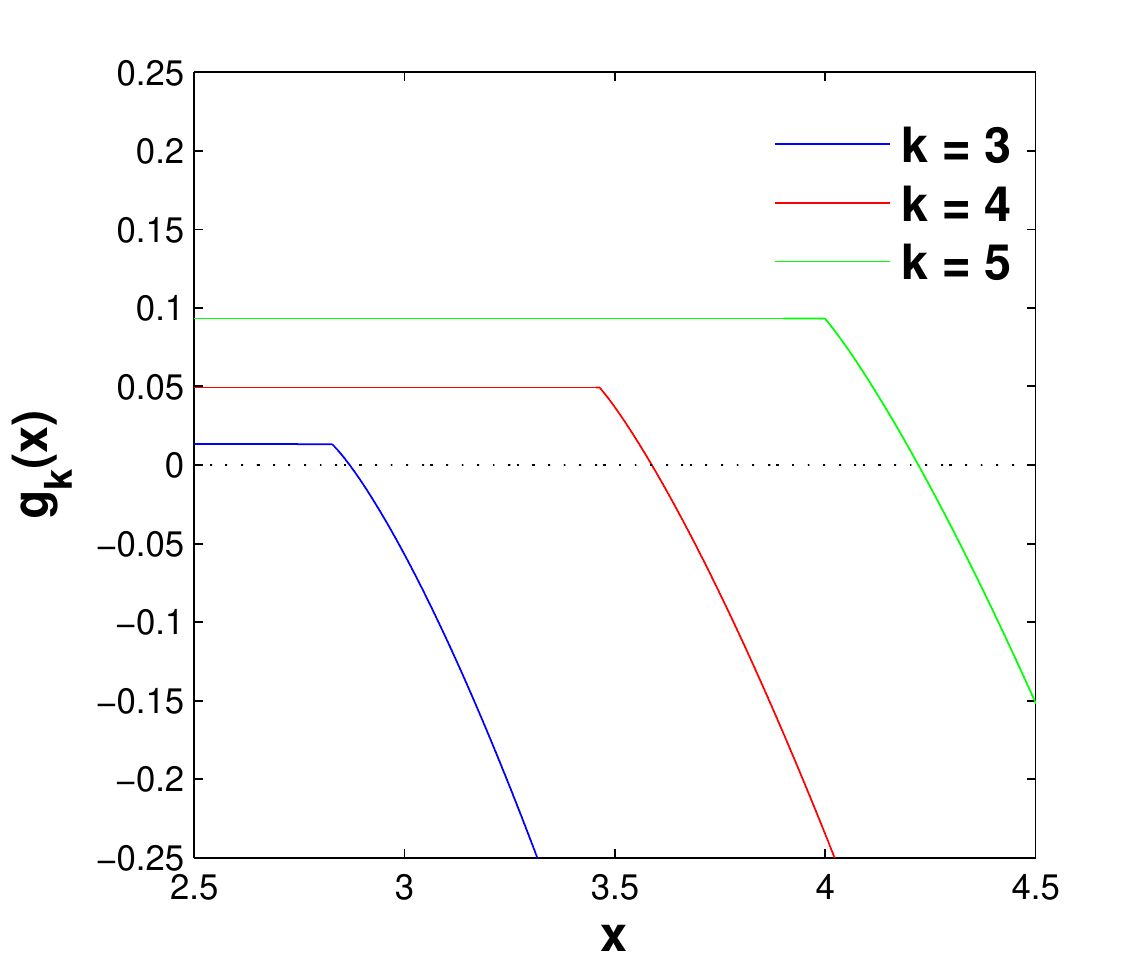}
\end{center}
\caption{The function $g_k(x)$ defined in
  Eq.~(\ref{eq:ComplexityFormula}). As proved in  \cite{Auffinger13},
  the expected number of local maxima of
the objective function $H_{\bZ}(\bv)$ is --to leading exponential
order--  $\exp\{n g_k(x)\}$.}\label{fig:ComplexityFunction}
\end{figure}
In order to quantify this phenomenon, for  $x\in \reals$, let  ${\sf
  C}_k(\bZ_n,x)$ denote the number of 
local maxima of $H_{\bZ}(\,\cdot\,)$
over $\bbS^{n-1}$, that have value larger or equal than  $x$. 
The next Lemma from \cite{Auffinger13} characterizes the growth rate
of the number of local minima.
 \begin{theorem}[Theorem 2.5 and Lemma 6.3 in \cite{Auffinger13}]
For any $k\ge 3$, we have
 \begin{align}\label{eq:AuffingerLimit} 
\lim_{n \to \infty} \frac 1n \log\, \E\, {\sf C}_k(\bZ_n, x) =
g_k(x)~~,
\end{align}
where, for $x\ge \eta_k\equiv 2\sqrt{k-1}$ 
\begin{align}
g_k(x) = \frac{1}{2} \left \{ \frac{2-k}{k} - \log \left
    (\frac{kz^2}{2} \right ) + \frac{k-1}{2}z^2 - \frac{2}{k^2z^2}
\right \}~~,~~
z = \frac{1}{(k-1)\sqrt{2k}}\left (   x  -
  \sqrt{x^2  - 4(k-1) }\right )~~. \label{eq:ComplexityFormula}
\end{align}
Further, for $x<\eta_k$, $g_k(x)= g_k(\eta_k)$.
 \end{theorem}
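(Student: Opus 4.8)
The plan is to prove this via the Kac--Rice formula for the expected number of critical points of a smooth Gaussian field on a manifold, followed by a Laplace-type asymptotic analysis of the resulting integral whose key input is random matrix theory. First I would restrict $H_{\bZ}$ to $\bbS^{n-1}$ and work with its Riemannian gradient $\nabla H_{\bZ}$ and Hessian $\nabla^2 H_{\bZ}$. A local maximum of value $\ge x$ is a zero of $\nabla H_{\bZ}$ at which $\nabla^2 H_{\bZ}\preceq 0$ and $H_{\bZ}\ge x$, so the Kac--Rice formula gives
\begin{align*}
\E\,{\sf C}_k(\bZ_n,x) = \int_{\bbS^{n-1}} \E\Big[\, \big|\det \nabla^2 H_{\bZ}(\sigma)\big|\,\ind\{\nabla^2 H_{\bZ}(\sigma)\preceq 0\}\,\ind\{H_{\bZ}(\sigma)\ge x\}\,\Big|\,\nabla H_{\bZ}(\sigma)=0\Big]\, p_{\sigma}(0)\,\de\sigma,
\end{align*}
where $p_\sigma$ is the density of $\nabla H_{\bZ}(\sigma)$. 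Since the law of $\bZ_n$ is invariant under the orthogonal group, the integrand is independent of $\sigma$, so the spherical integral contributes only the volume ${\rm Vol}_{n-1}(\bbS^{n-1})$ and it suffices to evaluate the conditional expectation at a single point (the north pole).

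Next I would compute the joint Gaussian law of the triple $(H_{\bZ}(\sigma),\nabla H_{\bZ}(\sigma),\nabla^2 H_{\bZ}(\sigma))$ at a fixed point using the explicit covariance of the $p$-spin field. One checks that the gradient is centered Gaussian with scalar covariance and independent of the value and of the Hessian, so $p_\sigma(0)$ is an explicit Gaussian density; and that, conditionally on $H_{\bZ}(\sigma)=u$, the Hessian is distributed (after rescaling) as a GOE matrix of dimension $n-1$ shifted by $-c_k u\,\id$ for an explicit constant $c_k$. Hence the quantity to control is
\begin{align*}
\Phi_n(t)\equiv\frac{1}{n}\log\E\Big[\big|\det(\bM_n - t\,\id)\big|\,\ind\{\bM_n\preceq t\,\id\}\Big],
\end{align*}
with $\bM_n$ a suitably normalized GOE matrix. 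I would evaluate $\lim_n\Phi_n(t)$ by combining (a) the convergence $\frac1n\log\E|\det(\bM_n-t\,\id)|\to\int\log|s-t|\,\de\rho_{sc}(s)$, obtainable either from the Hermite-polynomial formula for the expected characteristic polynomial of GOE or from the equilibrium-measure variational principle, with (b) the large deviation principle of Ben Arous--Dembo--Guionnet for the largest eigenvalue of GOE, which governs the exponential cost of the spectral constraint $\bM_n\preceq t\,\id$. Assembling the volume rate, the Gaussian rate coming from $p_\sigma(0)$, and $\lim_n\Phi_n$, and then optimizing over the admissible value $u\ge x$, yields the expression $g_k(x)$ of Eq.~(\ref{eq:ComplexityFormula}); the auxiliary variable $z(x)$ there encodes the location of the optimizing value. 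For $x<\eta_k=2\sqrt{k-1}$ the constraint $\bM_n\preceq t\,\id$ holds with probability tending to one (the semicircle edge sits exactly at the value $t$ corresponding to $u=\eta_k$), so the maximizer sticks at $u=\eta_k$ and $g_k$ is constant below $\eta_k$.

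The main obstacle I expect is step (b): controlling $|\det(\bM_n-t\,\id)|$ \emph{jointly} with the negative-definiteness constraint at the correct exponential order --- showing that restricting to $\bM_n\preceq t\,\id$ neither destroys the exponential rate (lower bound) nor is dominated by atypical spectral configurations (upper bound). This is precisely where the large deviation rate for the top GOE eigenvalue, together with continuity of $t\mapsto\int\log|s-t|\,\de\rho_{sc}(s)$ across the edge, is essential, and where most of the technical work lies. A secondary point is the justification of the Kac--Rice formula itself: verifying that $H_{\bZ}$ is almost surely Morse on $\bbS^{n-1}$ and that the passage from counting all critical points to counting local maxima of value $\ge x$ is implemented exactly by the indicator factors already present in the integrand above.
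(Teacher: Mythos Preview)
The paper does not prove this statement at all: it is quoted verbatim as Theorem~2.5 and Lemma~6.3 of \cite{Auffinger13}, with no accompanying argument. Your proposal is therefore not comparable to ``the paper's own proof,'' because there is none.

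That said, your sketch is an accurate outline of the actual proof in \cite{Auffinger13}: rotational invariance reduces the Kac--Rice integral to a single-point evaluation, the joint Gaussian law of $(H,\nabla H,\nabla^2 H)$ at a point makes the conditional Hessian a shifted GOE matrix, and the exponential asymptotics of $\E[|\det(\bM_n-t\id)|\,\ind\{\bM_n\preceq t\id\}]$ are obtained from the semicircle potential together with the GOE top-eigenvalue large deviations. The obstacle you flag---controlling the determinant jointly with the index constraint at the correct exponential scale---is indeed the crux, and in \cite{Auffinger13} it is handled via an exact integral representation for the expected characteristic polynomial over the negative-index event (their Lemma~3.3), rather than by a soft combination of the two ingredients; you may want to look there if you intend to fill in that step.
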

The function $g_k(x)$ is monotone decreasing  for $x\ge \eta_k$, and
non-negative if and only if $x\in [\eta_k,\mu_k]$ for some $\mu_k>0$
(strictly positive for $x\in [\eta_k,\mu_k)$).
In Figure \ref{fig:ComplexityFunction}, we plot $g_k(x)$ for $k\in\{3,4,5\}$. 
Informally, this means that the function $H_{\bZ}(\bv)$ has
exponentially many local maxima with value $H_{\bZ}(\bv)\approx x$ for
any $x\in  [\eta_k,\mu_k)$. To leading exponential order, the number
of such maxima is given by $\exp\{n\, g_k(x)\}$.

The value  $\mu_k$ can be determined as the unique solution to the equation $g(x)
= 0$. It is immediately to do this numerically, obtaining the values in
Section \ref{sec:infoTheoryandUpperBounds}.

The last result implies that the global maximum of $H_{\bZ}(\bv)$ is
(asymptotically) at least $\mu_k$. Indeed the global maximum is
necessarily a local maximum as well. The next result implies that
indeed the global maximum converges to $\mu_k$.
 \begin{theorem}[Theorem 2.12 in \cite{Auffinger13}]\label{cor:mukProbabilist} 
Let $\mu_k$ denote the   unique non-negative root of the equation
$g_k(x) = 0$, for $x\ge \eta_k\equiv 2\sqrt{k-1}$.
Then
\begin{align}
\lim_{n\to\infty}\E\| \bZ \|_{op} =\mu_k  \, .
\end{align}
\end{theorem}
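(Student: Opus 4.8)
The plan is to convert the two-sided statement $\E\|\bZ\|_{op}\to\mu_k$ into two one-sided probabilistic statements and feed them into the Gaussian concentration bound of Lemma~\ref{lem:BenArous}. Write $M_n\equiv\sup_{\bv\in\bbS^{n-1}}H_{\bZ}(\bv)$ and $\widetilde M_n\equiv-\inf_{\bv\in\bbS^{n-1}}H_{\bZ}(\bv)$, so that for symmetric $\bZ$ one has $\|\bZ\|_{op}=\max(M_n,\widetilde M_n)$, and note $\widetilde M_n\overset{d}{=}M_n$ since $\bZ\overset{d}{=}-\bZ$. I claim it suffices to prove: (a) $\prob(\|\bZ\|_{op}\ge x)\to 0$ for every $x>\mu_k$, and (b) $\prob(\|\bZ\|_{op}\ge x)\to 1$ for every $x<\mu_k$. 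Indeed, granting (a)--(b), if $\E\|\bZ\|_{op}\ge\mu_k+\delta$ along a subsequence then the concentration inequality of Lemma~\ref{lem:BenArous} forces $\prob(\|\bZ\|_{op}\ge\mu_k+\delta/2)\ge 1-2e^{-n\delta^2/(8k)}\to 1$, contradicting (a); symmetrically, $\E\|\bZ\|_{op}\le\mu_k-\delta$ along a subsequence contradicts (b). Hence $\E\|\bZ\|_{op}\to\mu_k$.

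The upper direction (a) uses only the first-moment (complexity) asymptotics already quoted in this appendix. Fix $x>\mu_k$. Since $g_k$ is non-increasing on $[\eta_k,\infty)$ and vanishes at its unique non-negative root $\mu_k$, we have $g_k(x)<0$, hence $\E\,{\sf C}_k(\bZ_n,x)=\exp\{n(g_k(x)+o(1))\}\to 0$, and likewise $\E\,{\sf C}_k(-\bZ_n,x)\to 0$ (same law). A global maximizer of $H_{\bZ}$ is in particular a local maximum, so $\{M_n\ge x\}\subseteq\{{\sf C}_k(\bZ_n,x)\ge 1\}$ and similarly $\{\widetilde M_n\ge x\}\subseteq\{{\sf C}_k(-\bZ_n,x)\ge 1\}$; Markov's inequality and a union bound then give $\prob(\|\bZ\|_{op}\ge x)\le \E\,{\sf C}_k(\bZ_n,x)+\E\,{\sf C}_k(-\bZ_n,x)\to 0$.

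The lower direction (b) is where the real work lies, and where the first moment is not sufficient: for $x<\mu_k$ we have $g_k(x)>0$, so $\E\,{\sf C}_k(\bZ_n,x)\to\infty$, but this does not by itself imply that ${\sf C}_k(\bZ_n,x)\ge 1$ with probability tending to one. The missing ingredient is the matching second-moment estimate $\E[{\sf C}_k(\bZ_n,x)^2]\le(1+o(1))\big(\E\,{\sf C}_k(\bZ_n,x)\big)^2$ --- i.e.\ the number of local maxima above level $x$ concentrates around its mean --- after which the Paley--Zygmund inequality gives $\prob(\|\bZ\|_{op}\ge x)\ge\prob({\sf C}_k(\bZ_n,x)\ge 1)\ge (\E\,{\sf C}_k(\bZ_n,x))^2/\E[{\sf C}_k(\bZ_n,x)^2]\to 1$. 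I expect this second-moment / concentration-of-complexity bound to be the main obstacle; it is precisely the substantive content of \cite{Auffinger13}, and I would cite it rather than reprove it. An alternative route, which I would mention, is to extract $\lim_n M_n=\mu_k$ from Talagrand's rigorous computation of the zero-temperature free energy of the spherical $k$-spin model \cite{talagrand2006free}, which identifies the ground-state energy with the same constant $\mu_k$ obtained as the root of $g_k$.
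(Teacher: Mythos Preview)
The paper does not prove this statement at all: it is quoted verbatim as Theorem~2.12 of \cite{Auffinger13} and used as a black box, alongside the first-moment complexity asymptotics (also quoted) and the Gaussian concentration lemma (which the paper does prove). So there is no ``paper's own proof'' to compare against; your sketch already supplies more argument than the paper does.

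Your outline is sound. The reduction of $\E\|\bZ\|_{op}\to\mu_k$ to the two one-sided statements (a) and (b) via the concentration inequality is correct, and the first-moment/Markov argument for (a) is the standard one (the global maximizer on the compact sphere is a local maximum, so $\{M_n\ge x\}\subseteq\{{\sf C}_k(\bZ_n,x)\ge 1\}$ is legitimate). One factual correction on (b): the second-moment matching $\E[{\sf C}_k(\bZ_n,x)^2]\le(1+o(1))(\E\,{\sf C}_k(\bZ_n,x))^2$ is \emph{not} in \cite{Auffinger13}; that concentration-of-complexity result was established only later (Subag, \emph{Ann.\ Probab.}\ 2017). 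In \cite{Auffinger13} the lower bound on the ground state is obtained precisely via your ``alternative route'': the constant $\mu_k$ is identified with the zero-temperature limit of the spherical $k$-spin free energy, and the latter is controlled through Talagrand's rigorous Parisi-formula computation \cite{talagrand2006free}. If you want your citation to match the source, promote that alternative to the main argument and drop the attribution of the second-moment bound to \cite{Auffinger13}.
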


In order to derive the large-$k$ asymptotics of $\mu_k$, we
rewrite Eq.~(\ref{eq:ComplexityFormula}) in terms of the variable
$y\equiv k^2z^2/2$. We get $g_k(x) = f_k(y(x))/2$, where
\begin{align}
f_k(y) =\frac{2-k}{k} +\log(k)- \log (y)+ \frac{k-1}{k}\, y -
\frac{1}{y}\, ,\;\;\;\;
x =  (k-1) \, \sqrt{\frac{y}{k}}+\sqrt{\frac{k}{y}}\, .
\end{align}
Further $y \in (0, k/(k-1)]$.  The claimed asymptotics follows by
showing that the only solution of $f_k(y) = 0$ in this interval obeys
$y_k = (\log k)^{-1} (1+o_k(1))$. This in turns can be showed by using the
bounds
\begin{align}
\log(ke^{-1+(2/k)}) -\log y- \frac{1}{y} \le f_k(y) \le \log(ke^{2/k})
-\log(y) -\frac{1}{y}\, ,
\end{align}
and showing that the solution of $y^{-1}+\log(y) = \log(a)$ for large
$a$ is $y^{-1} = a +\Theta(\log(a))$.

Finally, the norm $\|\bZ\|_{op}$ concentrates tightly around its expectation.
\begin{lemma}
For any $s\ge 0$, we have
\begin{align}
\prob\big(\big|\|\bZ\|_{op}-\E\|\bZ\|_{op}\big|\ge s\big)\le 2\,
e^{-ns^2/(2k)}\, .
\end{align}
\end{lemma}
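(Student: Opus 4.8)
The plan is to derive the concentration inequality from Gaussian concentration of measure, exactly as was done for the matricized noise tensor in Lemma~\ref{lem:upperBoundMatricized}. The key observation is that $\|\bZ\|_{op}$ can be written as a supremum of linear functionals of the underlying i.i.d.\ Gaussian tensor $\bG$, and that each such functional is Lipschitz with a small modulus; a supremum of uniformly Lipschitz functions is itself Lipschitz with the same constant, and the Tsirelson--Ibragimov--Sudakov concentration theorem for Lipschitz functions of Gaussians then gives the sub-Gaussian tail.

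Concretely, first I would recall that by the symmetric representation of the operator norm for symmetric tensors,
\begin{align}
\|\bZ\|_{op} = \max_{\|\bu\|_2\le 1} |\<\bZ,\bu\ok\>| = \max_{\|\bu\|_2\le 1}\Big|\frac{1}{k!}\sqrt{\frac{k}{n}}\sum_{\pi\in\mathfrak S_k}\<\bG^\pi,\bu\ok\>\Big|.
\end{align}
Since $\<\bG^\pi,\bu\ok\> = \<\bG,\bu\ok\>$ for every permutation $\pi$ (because $\bu\ok$ is itself symmetric), this is in fact $\sqrt{k/n}\,\max_{\|\bu\|_2\le1}|\<\bG,\bu\ok\>|$, a fixed scalar multiple of a function of $\bG$. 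For each fixed $\bu$ on the unit sphere, the map $\bG\mapsto \sqrt{k/n}\,\<\bG,\bu\ok\>$ is linear with gradient of Euclidean norm $\sqrt{k/n}\,\|\bu\ok\|_F = \sqrt{k/n}\,\|\bu\|_2^k \le \sqrt{k/n}$. Taking absolute values and then the maximum over $\bu$ preserves the Lipschitz modulus $\sqrt{k/n}$, so $\bG\mapsto\|\bZ\|_{op}$ is $\sqrt{k/n}$-Lipschitz on $\reals^{n^k}$ equipped with the Euclidean norm.

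Then I would invoke the standard Gaussian concentration inequality: if $F:\reals^N\to\reals$ is $L$-Lipschitz and $\bG\sim\normal(0,\id_N)$, then $\prob(|F(\bG)-\E F(\bG)|\ge s)\le 2e^{-s^2/(2L^2)}$. With $L=\sqrt{k/n}$ and $F(\bG)=\|\bZ\|_{op}$ this yields
\begin{align}
\prob\big(\big|\|\bZ\|_{op}-\E\|\bZ\|_{op}\big|\ge s\big)\le 2\,e^{-ns^2/(2k)},
\end{align}
which is exactly the claim. There is essentially no obstacle here: the only point requiring a line of care is verifying that the Lipschitz modulus of the supremum does not blow up, i.e.\ that $\|\bu\ok\|_F\le 1$ on the unit sphere and that taking $\max$ and $|\cdot|$ does not increase the constant — both are elementary. (The more delicate parts of Lemma~\ref{lem:BenArous}, namely the identification of the limit with $\mu_k$ and the large-$k$ asymptotics, are handled separately via \cite{Auffinger13} and the $f_k(y)$ analysis above; this final lemma is purely the concentration statement and is the easy component.)
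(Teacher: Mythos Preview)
Your proof is correct and follows essentially the same approach as the paper: both observe that $\<\bZ,\bu\ok\>=\sqrt{k/n}\,\<\bG,\bu\ok\>$ is $\sqrt{k/n}$-Lipschitz in $\bG$ for each unit $\bu$, hence so is the supremum $\|\bZ\|_{op}$, and then invoke Gaussian concentration (isoperimetry). Your write-up is slightly more explicit about the intermediate steps (the symmetry identity $\<\bG^\pi,\bu\ok\>=\<\bG,\bu\ok\>$ and the preservation of the Lipschitz constant under $\max$ and $|\cdot|$), but the argument is the same.
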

 \begin{proof}
Note that 
\begin{align}
\<\bZ,\bv^{\otimes k}\> = \sqrt{\frac{k}{n}}\, \<\bG,\bv^{\otimes k}\>
\end{align}
is a Lipschitz function with Lipschitz modulus $\sqrt{k/n}$ (with
respect to Euclidean norm) of the Gaussian  vector (tensor)
$\bG$. Hence $\|\bZ\|_{op}$ is Lipchitz continuous with the same
modulus. The claim follows from Gaussian isoperimetry \cite{Ledoux}.
 \end{proof}
 \begin{remark}
Note  that to make the connection with the notations used in
\cite{Auffinger13}, one has to use the proper scaling 
$H_{n,k}(\bv) = \frac n {\sqrt k} \, L_\bZ(\bv/\sqrt{n})$
($H_{n,k}(\bv)$is the objective function considered in \cite{Auffinger13}).
\end{remark}

\begin{remark}
The upper bound on the tensor operator norm obtained from
Sudakov-Fernique inequality is not tight. In fact taking $\beta = 0$
in Lemma \ref{lem:upperBoundX} gives the loose upper bound
$\|\bZ\|_{op}\leq k$. Except in the case of random matrices ($k=2$),
this is loose roughly by a factor $\sqrt{k}$.
\end{remark}

\subsection{Proof of
 Theorem \ref{th:nonconvexOpt}}
\label{sec:ProofML}

By optimality of $\hbv$, we have
\begin{align}
\beta \< \bvz, \hat \bv\>^k + \< \bZ, \hat \bv\ok\> \geq \beta + \<
\bZ , \bvz\ok\>~, 
\end{align}
whence
\begin{align}
 \< \bvz, \hat \bv\>^k &\geq 1-\frac 1
\beta \< \bZ ,\hat \bv\ok - \bvz\ok\>\\
&\ge  1-\frac 1
\beta \Big(\|\bZ\|_{op} - \<\bZ,\bvz\ok\>\Big)\, .
\end{align}
Note that $\|\bZ\|_{op} - \<\bz,\bvz\ok\>$ is Lipchitz continuous in
the Gaussian random variables $\bG$, with modulus bounder by
$2\sqrt{k/n}$.
Hence, by Gaussian isoperimetry, with probability at least
$1-2e^{-ns^2/(8k)}$, we have (since $\bZ$ and $\bvz$ are independent, $\E\< \bZ, \bvz\ok\> = 0$)
\begin{align}
 \< \bvz, \hat \bv\>^k \ge 1-\frac{1}{\beta}\big(\E\|\bZ\|_{op}+s\big)
\end{align}
Using $(1-\alpha)^{1/k} \geq (1-\alpha)$ which holds for $\alpha \in
[0,1]$,  and rescaling $s$,
we get $|\< \bvz, \hat \bv\>|\geq 1-(   \mu_k+s)/ \beta $ with
probability at least $1-2e^{-ns^2/(16k)}$ for all $n$ large enough.

\subsection{Proof of Lemma \ref{lem:upperBoundX}}
\label{sec:ProofUpperBoundX}

\begin{lemma}\label{lem:FGinnerproducts} 
For each $n \in \naturals$, let $\bg \sim  \normal(0,\id_n /n)$ and $\bvz(n)\in
\reals^n$ be a vector with $\|\bvz(n)\|_2=1$.
Then there exists a sequence $\delta_n$ independent from $x$, such
that $\lim_{n \to \infty} \delta_n = 0$ and the following happens.
With probability one, there exists (a random) $n_0$ such that, for all
$n\ge n_0$, 
\begin{equation}
\sup_{\tau\in [0,\tau_{\rm max}] }\Big|~ \|\bg + \tau \bvz\|_2 -
\sqrt{1+\tau^2} \Big | \leq \delta_n\, .
\end{equation}
\end{lemma}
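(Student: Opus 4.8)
The plan is to reduce the uniform-in-$\tau$ statement to two elementary one–dimensional concentration facts, exploiting that the dependence on $\tau$ is entirely explicit. First I would expand, using $\|\bvz\|_2=1$,
\begin{align*}
\|\bg+\tau\bvz\|_2^2 = \|\bg\|_2^2 + 2\tau\<\bg,\bvz\> + \tau^2 = (1+\tau^2) + \Big(\big(\|\bg\|_2^2-1\big) + 2\tau\<\bg,\bvz\>\Big)\, ,
\end{align*}
so that the whole problem is to control the error term $\big(\|\bg\|_2^2-1\big)+2\tau\<\bg,\bvz\>$, whose absolute value is bounded, for every $\tau\in[0,\tau_{\rm max}]$, by the single random quantity $\rho_n := \big|\|\bg\|_2^2-1\big| + 2\tau_{\rm max}\,\big|\<\bg,\bvz\>\big|$, which no longer depends on $\tau$ (nor on $x$).

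Next I would pass back from the squared norm to the norm via $|\sqrt a-\sqrt b|\le |a-b|/(\sqrt a+\sqrt b)$, applied with $b=1+\tau^2\ge 1$; since the denominator is at least $1$ this yields
\begin{align*}
\sup_{\tau\in[0,\tau_{\rm max}]}\Big|\,\|\bg+\tau\bvz\|_2-\sqrt{1+\tau^2}\,\Big|\;\le\;\sup_{\tau\in[0,\tau_{\rm max}]}\frac{|\big(\|\bg\|_2^2-1\big)+2\tau\<\bg,\bvz\>|}{\sqrt{1+\tau^2}}\;\le\;\rho_n\, .
\end{align*}
It then remains only to exhibit a \emph{deterministic} null sequence $\delta_n$ with $\rho_n\le\delta_n$ eventually, almost surely.

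For that I would invoke standard tail bounds: $n\|\bg\|_2^2\sim\chi^2_n$, so $\prob(|\|\bg\|_2^2-1|\ge t)\le 2e^{-cnt^2}$ for a universal $c>0$ and $t$ small, while $\<\bg,\bvz\>\sim\normal(0,1/n)$ gives $\prob(|\<\bg,\bvz\>|\ge t)\le 2e^{-nt^2/2}$. Taking, say, $t=n^{-1/3}$ makes both probabilities summable in $n$, so Borel--Cantelli produces a (random) index $n_0$ beyond which $|\|\bg\|_2^2-1|\le n^{-1/3}$ and $|\<\bg,\bvz\>|\le n^{-1/3}$ hold simultaneously. Setting $\delta_n := (1+2\tau_{\rm max})\,n^{-1/3}$, which tends to $0$ and does not depend on $\tau$, then finishes the proof.

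I do not expect any genuine obstacle: the statement is in essence a quantitative law of large numbers for $\|\bg\|_2^2$ and $\<\bg,\bvz\>$. The only two points needing a touch of care are (i) the uniformity in $\tau$, which comes for free because the $\tau$-dependence is explicit and the relevant quantities are monotone in $|\tau|\le\tau_{\rm max}$, and (ii) getting a deterministic $\delta_n$ rather than merely a random sequence tending to $0$ — which is exactly what the Borel--Cantelli argument with the fixed rate $n^{-1/3}$ (any rate $o(1)$ that kills the exponential tails would do) delivers, at the cost of allowing $n_0$ to be random.
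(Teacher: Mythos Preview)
Your proof is correct and follows essentially the same route as the paper: expand $\|\bg+\tau\bvz\|_2^2$, bound the error uniformly in $\tau$ by $|\|\bg\|_2^2-1|+2\tau_{\rm max}|\<\bg,\bvz\>|$, and use concentration plus Borel--Cantelli. The only cosmetic differences are that the paper invokes uniform continuity of $\sqrt{\cdot}$ on bounded intervals rather than your explicit inequality $|\sqrt a-\sqrt b|\le|a-b|/\sqrt b$, and appeals to the strong law of large numbers for $\|\bg\|_2^2$ without fixing a quantitative rate---your choice of $n^{-1/3}$ makes the deterministic nature of $\delta_n$ more transparent.
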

\begin{proof}
Since $x\mapsto\sqrt{x}$ is uniformly continuous on bounded intervals
$[0,M]$, it is sufficient to prove
\begin{equation}
\sup_{\tau\in [0,\tau_{\rm max}] }\Big|~ \|\bg + \tau \bvz\|_2^2-
(1+\tau^2) \Big | \leq \delta_n\, .
\end{equation}
for all $n\ge n_0$, and an eventually different sequence $\delta_n$. 
By triangular inequality and using $\|\bvz\|_2 = 1$, 
\begin{align}
\sup_{\tau\in [0,\tau_{\rm max}] }\Big|~ \|\bg + \tau \bvz\|_2^2-
(1+\tau^2) \Big | \le \big|\|\bg\|^2-1\big|+2\tau_{\rm
  max}\big|\<\bvz,\bg\>\big|
\end{align}
Next we have $\lim_{n\to\infty}\|\bg\|^2=1$ almost surely by the
strong law of large numbers, and $\<\bvz,\bg\>\sim \normal(0,1/n)$
whence $\lim_{n\to\infty}\<\bvz,\bg\> =0$ by Borel-Cantelli. 
\end{proof}

\begin{proof}[Proof of Lemma \ref{lem:upperBoundX}]
For $\kappa\in [0,1]$, we define
\begin{align}
\cW_{\kappa} &\equiv \left \{ \bv \in \bbS^{n-1}~:~ 
  \<\bv,\bvz\> =\kappa\right \}\,, \\
M_{\bX}(\kappa) & \equiv\max\big\{\<\bX, \bv^{\otimes k}\>:\;\, 
\bv\in\cW_{\kappa}\big\}\, ,\\
\oM(\kappa) & \equiv\E M_{\bX}(\kappa) = \E \max\big\{\<\bX,\bv^{\otimes k}\>:\;\, 
\bv\in\cW_{\kappa}\big\}\, .
\end{align}
Note that 
\begin{align}
\lambda_1(\bX) & = \max_{\kappa\in [0,1]}M_{\bX}(\kappa) . \label{eq:SigmaPlus}
\end{align}
The function  $\bX\mapsto M_{\bX}(\kappa)$ is a Lipschitz continuous
function with
Lipschitz constant  $\sqrt{k/n}$ of the standard 
Gaussian tensor  $\bG$ (namely $|M_{\bX}(\kappa)-M_{\bX'}(\kappa)|\le
(k/n)^{1/2}\|\bG-\bG'\|_F$). Hence, by Gaussian isoperimetry, we have 
\begin{align}
\prob \Big\{ \big|M_{\bX}(\kappa) - \oM(\kappa)\big| \ge t \Big\}
  \le 2\, 
  e^{-n t^2/(2k)}\, . \label{eq:Isoperimetry}
\end{align}
Further we claim that $\kappa\mapsto M_{\bX}(\kappa)$ is uniformly
continuous for $\kappa\in [0,1]$. In order to prove this, let 
\begin{align}
\bv(\kappa) = \kappa \bvz + \sqrt{1-\kappa^2} \bv^\perp = \text{argmax} \big\{\<\bX, \bv^{\otimes k}\>:\;\, 
\bv\in\cW_{\kappa}\big\}~~,
\end{align}
where $\<\bv^\perp,\bvz\> = 0 $. We have, for $\kappa_1,\kappa_2 \in
[0,1]$, and by letting 
$\bv^\perp$ and $\bw^\perp$ denote the  perpendicular components
of $\bv(\kappa_1)$ and $\bv(\kappa_2)$, we have for some constant $c>0$
\begin{align}
M_\bX(\kappa_1) = & \< \bX , \bv(\kappa_1)\ok \> \nonumber \\
& = \< \bX ,\left (\kappa_1 \bvz + \sqrt{1-\kappa_1^2} \bv^\perp\right )\ok \> \nonumber\\
& \geq   \< \bX ,\left (\kappa_1 \bvz + \sqrt{1-\kappa_1^2} \bw^\perp\right )\ok \>  \quad\text{by optimality} \nonumber\\
& = \< \bX , \left \{ \kappa_2 \bvz + \sqrt{1-\kappa_2^2} \bw^\perp + (\kappa_1 - \kappa_2) \bvz + \left ( \sqrt{1-\kappa_1^2} - \sqrt{1-\kappa_2^2} \right )\bw^\perp \right \}\ok \> \nonumber\\
& = M_\bX(\kappa_2) + \sum_{q = 1}^k { k \choose q}\<\bX,
\bv(\kappa_2)^{\otimes q} \otimes \left \{(\kappa_1 - \kappa_2)\bvz + (\sqrt{1-\kappa_1^2} - \sqrt{1-\kappa_2^2})\bw^\perp \right \}^{\otimes (k-q)} \>\label{ineq:kchooseqFormula} \\
&\geq  M_\bX(\kappa_2)  - c \|\bX\|_{op} \left \{ (\kappa_1 - \kappa_2)^2+ \left (\sqrt{1-\kappa_1^2} - \sqrt{1- \kappa_2^2}\right )^2 \right \}^{1/2}~~. \label{ineq:sqrtKappa1Kappa2Term}
\end{align}
where Eq. (\ref{ineq:kchooseqFormula}) was obtained by exploiting the symmetry of the tensor $\bX$ and Eq. (\ref{ineq:sqrtKappa1Kappa2Term}) was derived using the norm of the vector $\left \{(\kappa_1 - \kappa_2)\bvz + (\sqrt{1-\kappa_1^2} - \sqrt{1-\kappa_2^2})\bw^\perp \right \}$.
Using Eq.~(\ref{eq:Isoperimetry}) over a grid $\kappa\in
\{0,1/n,2/n,\dots, 1-1/n, 1\}$, and the fact\footnote{This follows
  from Lemma \ref{lem:BenArous} and triangular inequality, or from a
  standard $\eps$-net argument.} that $\|\bX\|_{op}\le C$
for some constant $C>0$ with probability $1-e^{-\Theta(n)}$, we have for all $t > 0$
and some constant $c>0$
\begin{align}
\prob \Big\{\max_{\kappa\in [0,1]} \big|M_{\bX}(\kappa) - \oM(\kappa)\big| \ge t \Big\}
  \le 2\, n
  e^{-n c t^2/2} + 2\, e^{-cn}\, . 
\end{align}
In particular, by Borel-Cantelli we have, almost surely,
\begin{align}
\lim_{n\to\infty}\max_{\kappa\in [0,1]} \big|M_{\bX}(\kappa) -
\oM(\kappa)\big| =0\, . \label{eq:MoM}
\end{align}

In order to upper bound $\oM(\kappa)$, we apply Sudakov-Fernique inequality for non-centered Gaussian processes \cite[Theorem 1]{vitale2000some} to the two processes $\{\cX_\bv\}$, $\{\cY_\bv\}$ indexed by
$\bv \in \cW_{\kappa}$ defined as follows:
\begin{align}
\cX_\bv & \equiv \<\bX, \bv^{\otimes \bv} \>=  \beta \langle \bvz, \bv\rangle^k+\< \bZ, \bv^{\otimes k} \>\, ,\\
\cY_\bv & \equiv \beta\langle \bvz, \bv\rangle^k+  k \langle \bg , \bv\rangle \, ,
\end{align}
for random a vector $\bg \sim \normal(0, \id_n / n)$. It is easy to see that $\E\cX_\bv = \E\cY_\bv$ and 
\begin{align}
\E\big\{\big[\cX_\bv-\cX_\bw \big]^2\big\}&  = \left \{ \E\cX_\bv - \E \cX_\bw\right \}^2 + 
\frac{2 ~k }{n}
\big(1-\<\bv,\bw\>^k\big)\, ,\\
\E\big\{\big[\cY_\bv-\cY_\bw\big]^2\big\}&  =   \left \{ \E\cY_\bv - \E\cY_\bw \right \} ^2 + \frac{2~k^2 }{n}
\big(1-\<\bv,\bw\>\big)\, .
\end{align}

Hence $\E\big\{\big[\cX_\bv-\cX_\bw\big]^2\big\}\le
\E\big\{\big[\cY_\bv-\cY_\bw\big]^2\big\}$
(this follows from $1-a^k\le k(1-a)$ for $a\in [-1,1]$). We conclude
that
\begin{align}
\oM(\kappa)
&\le\E \max \left \{  \beta \kappa^k +\frac k {\sqrt n}    \langle \bg , \bv\rangle ~:~\bv \in \cW_{\kappa}
\right \}\\
&\le\max_{\tau\geq 0} \Big\{\left (\frac {\tau}{\sqrt {1+\tau^2}} \right )^k + \frac{k}{\sqrt{1+\tau^2}} \Big\}+ \delta_n
\end{align}
where $\tau= \kappa / \sqrt{1-\kappa^2}$ and $\delta_n$ satisfies
$\lim_{n\to \infty} \delta_n = 0$ uniformly over $\kappa \in [0,1]$,
by Lemma \ref{lem:FGinnerproducts}.
We finally conclude that 
\begin{align}
 \lim \sup_{n \to \infty} \E \|\bX\|_{op} \leq\max_{\tau\geq 0} \left
   (\frac{\tau}{\sqrt {1+\tau^2}} \right )^k +
 \frac{k}{\sqrt{1+\tau^2}}~~.
\end{align}
Concentration around the expectation follows by Gaussian isoperimetry as in the proof of Lemma \ref{lem:BenArous}.
\end{proof}

\section{Power Iteration: Proof of Theorem \ref{lem:powerIterationPositiveResult} }

Let $\tau_t \equiv \<\bvz,\bv^t\>$ and $\xi\equiv
\|\bZ\|_{op}/\beta$. 
Let $\tau_{\rm min}$, $\tau_{*}\in [0,1]$ be the two solutions
of 
\begin{align}
\tau^{k-1}(1-\tau) = \xi\, .
\end{align}
We will show below that  our assumptions imply $\tau_0>\tau_{\rm min}$. Further
$\tau\ge \tau_{\rm min}$ implies 
$\tau^{k-1}-\xi\ge 0$.

By 
definition of $\bX$, we have
\begin{align}\label{eq:mothereq}
\bX\{ \bv^t\} = \beta \tau_t^{k-1} \bvz + \bZ\{\bv^t\} \,,
\end{align}
which implies, by triangular inequality,
\begin{align}
\<\bvz,\bX\{ \bv^t\}\> \ge  \beta \tau_t^{k-1} -\beta\xi \,, \label{eq:mothereq2}\\
\|\bX\{ \bv^t\}\|_2 \le   \beta \tau_t^{k-1}  +\beta\xi  \, . \label{eq:mothereq3}
\end{align}
We will prove the first inequality  $\tau_t \geq \tau_{\rm min}$ by induction. It is true at $t = 0$ by assumption.
Assume it is true at $t$. Then $\tau_{t+1}\geq 0$ using Eq.~(\ref{eq:mothereq2}).

Hence we can divide the two inequalities  above 
obtaining $\tau_{t+1} \ge (\tau_t^{k-1}-\xi)/(\tau_t^{k-1}+\xi)$ which implies
\begin{align}
\tau_{t+1}
&\ge 1 -\frac{\xi}{\tau_t^{k-1}}\, .
\end{align}
In particular $\tau_t\ge \ttau_t^0$ where the latter sequence
is defined by $\ttau_{t+1} = f(\ttau_t)$, $\ttau_0 = \tau_0$, for  
$f(x) = 1-\xi\, x^{-k+1}$. The function $f(\, \cdot\, )$ is concave
and monotone increasing, and   maps $[\tau_{\rm min},\tau_*]$
into itself,  with $f'(\tau_{\rm min}) >1$, $f'(\tau_*)<1$. By
standard calculus argument  $\ttau_t\to \tau_*$ exponentially fast,
which implies
\begin{align}
\<\bv^t,\bvz\> \ge \tau_* - c_0 \, e^{-t/c_0}\, .
\end{align}

To conclude the proof of Eq.~(\ref{eq:BoundPI}), we notice that, for $\xi\le 1/(2e(k-1))$
\begin{align}
\tau_* & > 1- e\, \xi\, ,\\
\tau_{\rm min}& < [(k-1)\xi]^{1/(k-1)}\, ,
\end{align}
where we recall that $\tau_{\rm min}$, $\tau_*$ are the two solutions
of $g_k(x) \equiv x^{k-1}(1-x) = \xi$ in the interval $[0,1]$. 
For the first inequality, note that, in the interval
$[e^{-1/(k-1)},1]$, $g_k(x)$ is decreasing with $g_k(x)\ge
e^{-1}(1-x)$.
This implies 
\begin{align}
e^{-1}(1-\tau_*) \ge \xi
\end{align}
i.e. $\tau_*\ge 1-e\,\xi$ as long as $1-e~ \xi\ge  e^{-1/(k-1)}$,
which is implied by $\xi\le 1/(2e(k-1))$.

For the second inequality, note that, in the interval
$[0,1-(k-1)^{-1}]$,
we have $g_k(x)$ increasing with $g_k(x) \ge x^{k-1}/(k-1)$. This implies
\begin{align}
\frac{\tau_{\rm min}^{k-1}}{k-1}\ge \xi\, ,
\end{align}
as long as $ [(k-1)\xi]^{1/k-1}\le 1-(k-1)^{-1}$,  which follows,
again, by our assumptions.

Finally, conditions (\ref{eq:SNR_PowerIteration_BIS}),
(\ref{eq:InitPowerIteration_BIS})
follow directly by applying Lemma \ref{lem:BenArous}.
%
%
%
\section{Approximate Message Passing: Proof of Theorem \ref{thm:AMPisGood}}

\begin{figure}
\begin{center}
\includegraphics[trim=1cm 0cm 0cm 0cm, clip=true,width =
7cm]{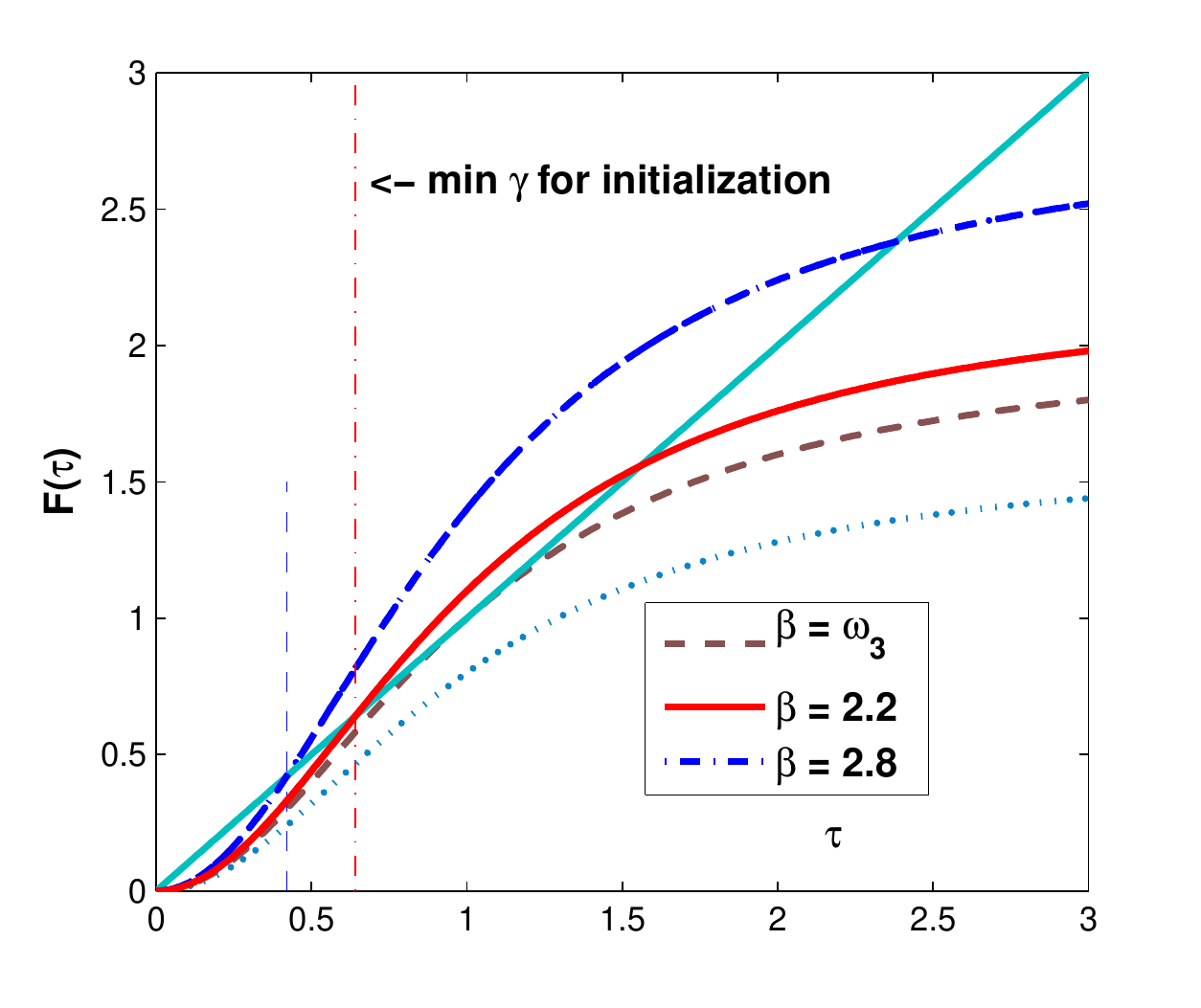}
\end{center}
\caption{Right: iteration functions versus $\tau$. The limit curve for
  $\beta = \omega_3 = 2$ separates the two types of behaviors: a
  non-zero fixed point exists for $\beta\ge \omega_k,$ and does not
  exists for $\beta<\omega_k$.}\label{fig:illustrationTheory}
\end{figure}

Let us recall the state evolution recursion (\ref{eq:SE}) 
\begin{align}
\tau_{t+1}^2 &= f(\tau_t^2;\beta)\, ,\\
f(z;\beta) &\equiv \beta^2\Big(\frac{z}{1+z}\Big)^{k-1}\, .
\end{align}
Notice that $f(\,\cdot\,;\beta)$ is strictly positive and monotone
increasing on $\reals_{>0}$. The theorem follows by proving that the
following claims hold for $\beta>\omega_k$
\begin{enumerate}
\item The fixed point equation $\tau^2 = f(\tau^2;\beta)$ has two
 strictly positive solutions
$\tau_1^2(\beta)<\tau_2^2(\beta)$.
\item The smallest fixed point is given by $\tau_1(\beta) =
  \sqrt{1/\epsilon_k(\beta)-1}$
as in the statement.
\item The largest fixed point satisfies $\tau_2(\beta)
  >1-(2/\beta^2)$.
\end{enumerate}
The behavior of the function $f(\tau^2;\beta)$ is illustrated in
Fig.~\ref{fig:illustrationTheory}.

In order to prove the above statements, it is convenient to use the
monotone parametrization
$x \equiv \tau^2/(1+\tau^2)$ that maps $\reals_{\ge 0}$ onto the
interval $[0,1)$. After some algebra (and discarding the solution at $x=0$),  fixed point equation then
reads 
\begin{align}
\frac{1}{\beta^2} = x^{k-2}(1-x) \equiv h_k(x)\, .\label{eq:FixedPointSimpler}
\end{align}
Now, the function $x\mapsto h_k(x)$ is continuously differentiable and
strictly positive in the interval $(0,1)$, with $h_k(0) =
h_k(1)=0$. Further, simple calculus shows it has a unique stationary
point (a maximum) at
$x_* = (k-2)/(k-1)$ with $h_k(x_*) = 1/\omega_k^2$. This implies that,
for $\beta>\omega_k$,
Eq.~(\ref{eq:FixedPointSimpler}) has two fixed points
$0<x_2(\beta)<x_*<x_1(\beta)<1$ thus implying points 1 and 2 above
(the latter immediately follows from inverting the
re-parametrization).  

To prove point 3, note that
\begin{align}
x_2 &= 1 -\frac{1}{x_2^{k-2}\beta^2} \\
&\ge 1- \frac{1}{x_*^{k-2}\beta^2} \\
& \ge 1-\frac{3}{\beta^2}
\end{align}
where the second inequality follows since $x_2>x_*$.
By state evolution (Proposition  \ref{propo:AMP_STATE}),
together with the fact that $\tau_t\to\tau_2$,
we have
\begin{align}
\lim_{t \to \infty} \lim_{n \to \infty}  \Loss(\bv^t /
\|\bv^t\|_2,\bvz) &= 2 \Big(1-\sqrt{\frac{\tau_2^2}{1+\tau_2^2}}\Big) \\
&= 2(1-\sqrt{x_2})\le 2(1-x_2) \le \frac{6}{\beta^2}\,.
\end{align}

\bibliographystyle{amsalpha}

\begin{thebibliography}{LMWY13}

\bibitem[ABAC13]{Auffinger13}
A.~Auffinger, G.~Ben~Arous, and J.~Cerny, \emph{Random matrices and complexity
  of spin glasses}, Communications on Pure and Applied Mathematics
  \textbf{66(2)} (2013), 165--201.

\bibitem[AGH{\etalchar{+}}12]{anandkumar12}
A.~Anandkumar, R.~Ge, D.~Hsu, S.~M. Kakade, and M.~Telgarsky, \emph{Tensor
  decompositions for learning latent variable models}, arXiv:1210.7559, 2012.

\bibitem[AGHK13]{anandkumar2013tensor}
Anima Anandkumar, Rong Ge, Daniel Hsu, and Sham~M Kakade, \emph{A tensor
  spectral approach to learning mixed membership community models}, {\sf
  arXiv:1302.2684} (2013).

\bibitem[BGN12]{benaych2012singular}
Florent Benaych-Georges and Raj~Rao Nadakuditi, \emph{The singular values and
  vectors of low rank perturbations of large rectangular random matrices},
  Journal of Multivariate Analysis \textbf{111} (2012), 120--135.

\bibitem[BM11]{BM-MPCS-2011}
M.~Bayati and A.~Montanari, \emph{{The dynamics of message passing on dense
  graphs, with applications to compressed sensing}}, IEEE Trans. on Inform.
  Theory \textbf{57} (2011), 764--785.

\bibitem[BS10]{BaiSilverstein}
Z.~Bai and J.~Silverstein, \emph{{Spectral Analysis of Large Dimensional Random
  Matrices ($2^{nd}$ edition)}}, Springer, 2010.

\bibitem[CR09]{candes2009exact}
E.~J. Cand{\`e}s and B.~Recht, \emph{Exact matrix completion via convex
  optimization}, Foundations of Computational mathematics \textbf{9} (2009),
  no.~6, 717--772.

\bibitem[CS92]{crisanti1992sphericalp}
Andrea Crisanti and H-J Sommers, \emph{The spherical p-spin interaction spin
  glass model: the statics}, Zeitschrift f{\"u}r Physik B Condensed Matter
  \textbf{87} (1992), no.~3, 341--354.

\bibitem[CS95]{crisanti1995thouless}
A~Crisanti and H-J Sommers, \emph{{Thouless-Anderson-Palmer approach to the
  spherical p-spin spin glass model}}, Journal de Physique I \textbf{5} (1995),
  no.~7, 805--813.

\bibitem[CT07]{candes2007dantzig}
E.~Candes and T.~Tao, \emph{{The Dantzig selector: Statistical estimation when
  p is much larger than n}}, The Annals of Statistics (2007), 2313--2351.

\bibitem[CT12]{cover2012elements}
Thomas~M Cover and Joy~A Thomas, \emph{Elements of information theory}, John
  Wiley \& Sons, 2012.

\bibitem[DBKP09]{Duchenne09}
O.~Duchenne, F.~Bach, I.~Kweon, and J.~Ponce, \emph{A tensor-based algorithm
  for high-order graph matching}, Conference on Computer Vision and Pattern
  Recognition (CVPR), 2009.

\bibitem[DE03]{donoho2003optimally}
D.~L. Donoho and M.~Elad, \emph{Optimally sparse representation in general
  (nonorthogonal) dictionaries via $\ell_1$ minimization}, Proceedings of the
  National Academy of Sciences \textbf{100} (2003), no.~5, 2197--2202.

\bibitem[DMM09]{DMM09}
D.~L. Donoho, A.~Maleki, and A.~Montanari, \emph{{Message Passing Algorithms
  for Compressed Sensing}}, Proceedings of the National Academy of Sciences
  \textbf{106} (2009), 18914--18919.

\bibitem[DMR14]{deshpande14coneconstrained}
Y.~Deshpande, A.~Montanari, and E.~Richard, \emph{Cone-constrained principal
  component analysis}, Neural Information Processing Systems (NIPS), 2014.

\bibitem[DS01a]{Szarek:survey}
K.~R. Davidson and S.~J. Szarek, \emph{Local operator theory, random matrices
  and {B}anach spaces}, Handbook on the Geometry of Banach spaces, vol.~1,
  Elsevier Science, 2001, pp.~317--366.

\bibitem[DS01b]{davidson2001local}
Kenneth~R Davidson and Stanislaw~J Szarek, \emph{Local operator theory, random
  matrices and banach spaces}, Handbook of the geometry of Banach spaces
  \textbf{1} (2001), 317--366.

\bibitem[FP09]{feral2009largest}
D.~F{\'e}ral and S.~P{\'e}ch{\'e}, \emph{The largest eigenvalues of sample
  covariance matrices for a spiked population: diagonal case}, Journal of
  Mathematical Physics \textbf{50} (2009), 073302.

\bibitem[FRVB11]{fletcher2011neural}
A.~K. Fletcher, S.~Rangan, L.~R. Varshney, and A.~Bhargava, \emph{Neural
  reconstruction with approximate message passing (neuramp).}, NIPS, 2011,
  pp.~2555--2563.

\bibitem[Gem80]{Geman}
S.~Geman, \emph{{A limit theorem for the norm of random matrices}}, Annals of
  Probability \textbf{8} (1980), 252--261.

\bibitem[HL13]{hillar2013most}
Christopher~J Hillar and Lek-Heng Lim, \emph{Most tensor problems are np-hard},
  Journal of the ACM (JACM) \textbf{60} (2013), no.~6, 45.

\bibitem[HV94]{han1994generalizing}
Te~Han and Sergio Verdu, \emph{Generalizing the fano inequality}, Information
  Theory, IEEE Transactions on \textbf{40} (1994), no.~4, 1247--1251.

\bibitem[JL09]{johnstone2009consistency}
I.~M Johnstone and A.~Y. Lu, \emph{On consistency and sparsity for principal
  components analysis in high dimensions}, Journal of the American Statistical
  Association \textbf{104} (2009), no.~486, 682--693.

\bibitem[KM11]{kolda2011shifted}
Tamara~G Kolda and Jackson~R Mayo, \emph{Shifted power method for computing
  tensor eigenpairs}, SIAM Journal on Matrix Analysis and Applications
  \textbf{32} (2011), no.~4, 1095--1124.

\bibitem[KMO10]{keshavan2010matrix}
R.~H. Keshavan, A.~Montanari, and S.~Oh, \emph{Matrix completion from noisy
  entries.}, Journal of Machine Learning Research \textbf{11} (2010),
  no.~2057-2078, 1.

\bibitem[KRFU12]{kamilov2012approximate}
U.~Kamilov, S.~Rangan, A.~K. Fletcher, and M.~Unser, \emph{Approximate message
  passing with consistent parameter estimation and applications to sparse
  learning.}, NIPS, 2012, pp.~2447--2455.

\bibitem[Led01]{Ledoux}
M.~Ledoux, \emph{{The concentration of measure phenomenon}}, Mathematical
  Surveys and Monographs, vol.~89, {American Mathematical Society, Providence,
  RI}, 2001.

\bibitem[LMWY13]{Liu12}
J.~Liu, P.~Musialski, P.~Wonka, and J.~Ye, \emph{Tensor completion for
  estimating missing values in visual data}, IEEE Transactions on Pattern
  Analysis and Machine Intelligence \textbf{35(1)} (2013), 208--220.

\bibitem[MHG13]{mu13}
C.~Mu, J.~Huang, B.~Wright, and D.~Goldfarb, \emph{Square deal: Lower bounds
  and improved relaxations for tensor recovery}, International Conference in
  Machine Learning (ICML), 2013.

\bibitem[MR14]{montanari2014non}
Andrea Montanari and Emile Richard, \emph{Non-negative principal component
  analysis: Message passing algorithms and sharp asymptotics}, {\sf
  arXiv:1406.4775} (2014).

\bibitem[Pau07]{paul2007asymptotics}
Debashis Paul, \emph{Asymptotics of sample eigenstructure for a large
  dimensional spiked covariance model}, Statistica Sinica \textbf{17} (2007),
  no.~4, 1617.

\bibitem[RPP13]{Romera13}
B.~Romera-Paredes and M.~Pontil, \emph{A new convex relaxation for tensor
  completion}, Neural Information Processing Systems (NIPS), 2013.

\bibitem[SC11]{schniter2011approximate}
P.~Schniter and V.~Cevher, \emph{Approximate message passing for bilinear
  models}, Proc. Workshop Signal Process. Adaptive Sparse Struct. Repr.(SPARS),
  2011, p.~68.

\bibitem[SR12]{schniter2012compressive}
P.~Schniter and S.~Rangan, \emph{Compressive phase retrieval via generalized
  approximate message passing}, Communication, Control, and Computing
  (Allerton), 2012 50th Annual Allerton Conference on, IEEE, 2012,
  pp.~815--822.

\bibitem[Tal06]{talagrand2006free}
Michel Talagrand, \emph{Free energy of the spherical mean field model},
  Probability theory and related fields \textbf{134} (2006), no.~3, 339--382.

\bibitem[Tro04]{tropp2004greed}
J.~A Tropp, \emph{Greed is good: Algorithmic results for sparse approximation},
  Information Theory, IEEE Transactions on \textbf{50} (2004), no.~10,
  2231--2242.

\bibitem[TSHK11]{Tomika11}
R.~Tomioka, T.~Suzuki, K.~Hayashi, and H.~Kashima, \emph{Statistical
  performance of convex tensor decomposition}, Neural Information Processing
  Systems (NIPS), 2011.

\bibitem[Vit00]{vitale2000some}
R.~A. Vitale, \emph{Some comparisons for gaussian processes}, Proceedings of
  the American Mathematical Society (2000), 3043--3046.

\bibitem[Wat90]{Waterhouse90}
W.~C. Waterhouse, \emph{The absolute-value estimate for symmetric multilinear
  forms}, Linear Algebra and its Applications \textbf{128} (1990), 97--105.

\bibitem[Wed72]{wedin72}
P.~A. Wedin, \emph{Perturbation bounds in connection with singular value
  decomposition}, BIT Numerical Mathematics \textbf{12(1)} (1972), 99--111.

\end{thebibliography}

\newcommand{\etalchar}[1]{$^{#1}$}
\providecommand{\bysame}{\leavevmode\hbox to3em{\hrulefill}\thinspace}
\providecommand{\MR}{\relax\ifhmode\unskip\space\fi MR }
\providecommand{\MRhref}[2]{%
  \href{http://www.ams.org/mathscinet-getitem?mr=#1}{#2}
}
\providecommand{\href}[2]{#2}

\end{document}